\documentclass[lettersize,journal]{IEEEtran}
\usepackage{amsmath,amsfonts}
\usepackage{algorithmic}
\usepackage{algorithm}
\usepackage{array}
\usepackage[caption=false,font=normalsize,labelfont=sf,textfont=sf]{subfig}
\usepackage{textcomp}
\usepackage{stfloats}
\usepackage{url}
\usepackage{verbatim}
\usepackage{graphicx}
\usepackage{cite}

\usepackage{graphics} 
\usepackage{epstopdf}
\graphicspath{{figures/}}
\usepackage{epsfig} 
\usepackage{mathptmx} 
\usepackage{times} 
\usepackage{amssymb}  
\usepackage{booktabs}
\usepackage[table]{xcolor}
\usepackage{array}
\usepackage{xcolor}
\usepackage{longtable}
\usepackage{amsfonts}
\usepackage{multirow}
\usepackage{bm}
\usepackage{wasysym}
\usepackage{threeparttable}
\usepackage{textcomp}
\usepackage{epsf}
\usepackage{enumerate}
\usepackage{fontenc}
\usepackage{diagbox}
\usepackage{setspace}
\usepackage{multirow, makecell}
\usepackage{amssymb}

\usepackage{bbding}

\usepackage{amsthm}

\newtheorem{corollary}{Corollary}

 \newtheorem{proposition}{Proposition}
\newtheorem{remark}{Remark}

\newtheorem{assumption}{Assumption}

\newtheorem{theorem}{Theorem}

\newtheorem{lemma}{Lemma}
\hyphenation{op-tical net-works semi-conduc-tor IEEE-Xplore}
\def\BibTeX{{\rm B\kern-.05em{\sc i\kern-.025em b}\kern-.08em
    T\kern-.1667em\lower.7ex\hbox{E}\kern-.125emX}}
\usepackage{balance}
\begin{document}
\title{Energy-Based Model for Accurate Estimation of Shapley Values in Feature Attribution}

\author{Cheng Lu, Jiusun Zeng, Yu Xia, Jinhui Cai, Shihua Luo
\thanks{This work was supported in part by the Baima Lake Laboratory Joint Fund of Zhejiang Provincial Natural Science Foundation of China(LBMHZ25F03001), the "Pioneer and Leading Goose + X" S\&T Program of Zhejiang Province (2025C01022), the NSFC grant (12271133, U21A20426), and the key project of Zhejiang Provincial Natural Science Foundation (LZ23A010002). \emph{(Corresponding author: Jiusun Zeng)}

Cheng Lu and Jinhui Cai are with the College of Metrology Measurement and Instrument, China Jiliang University, Hangzhou, 310018, China. (lucheng.cjlu@gmail.com, caijinhui@cjlu.edu.cn)

Jiusun Zeng and Yu Xia are with the School of Mathematics, Hangzhou Normal University, Hangzhou, 311121, China. (jszeng@hznu.edu.cn, yxia@hznu.edu.cn)

Shihua Luo is with the School of Statistics, Jiangxi University of Finance and Economics, Nanchang, 330013, China. (luoshihua@aliyun.com)}}


\markboth{}%
{IEEEtran \LaTeX \ Templates}

\maketitle

\begin{abstract}
Shapley value is a widely used tool in explainable artificial intelligence (XAI), as it provides a principled way to attribute contributions of input features to model outputs. However, estimation of Shapley value requires capturing conditional dependencies among all feature combinations, which poses significant challenges in complex data environments. In this article, EmSHAP (Energy-based model for Shapley value estimation), an accurate Shapley value estimation method, is proposed to estimate the expectation of Shapley contribution function under the arbitrary subset of features given the rest. By utilizing the ability of energy-based model (EBM) to model complex distributions, EmSHAP provides an effective solution for estimating the required conditional probabilities. To further improve estimation accuracy, a GRU (Gated Recurrent Unit)-coupled partition function estimation method is introduced. The GRU network captures long-term dependencies with a lightweight parameterization and maps input features into a latent space to mitigate the influence of feature ordering. Additionally, a dynamic masking mechanism is incorporated to further enhance the robustness and accuracy by progressively increasing the masking rate. Theoretical analysis on the error bound as well as application to four case studies verified the higher accuracy and better scalability of EmSHAP in contrast to competitive methods.
\end{abstract}
\begin{IEEEkeywords}
Explainable artificial intelligence, feature attribution, Shapley value, energy-based model.
\end{IEEEkeywords}

\section{Introduction}\label{Sec-1}
Artificial intelligence techniques, particularly deep learning (DL), have been extensively applied to a wide range of complex supervised and unsupervised learning tasks, such as medical diagnosis~\cite{Aggarwal2021medical}, industrial intelligence~\cite{qi2020wrinkled}, and financial modeling~\cite{Ozbayoglul2020financial}. In many scenarios, DL models achieve superior performance compared to conventional methods in terms of accuracy. However, due to their typically black-box nature and complex internal structures, interpreting the results of DL models remains challenging. This opacity hinders the identification of key features influencing model decisions and complicates the understanding of relationships between inputs and outputs. While in tasks where predictive or classification performance is the sole objective, black-box models may be acceptable without requiring interpretability, the situation is markedly different in high-stakes or safety-critical domains such as healthcare, finance, and industrial safety. In these fields, interpretability is not just advantageous but essential for the responsible deployment of artificial intelligence (AI) systems, given the serious consequences that may arise from model errors. The pressing need for transparency in such applications has driven the emergence and growth of Explainable Artificial Intelligence (XAI) as a vital research area~\cite{Arrieta2020XAI}.

In XAI, the concept of model interpretability lacks a clear mathematical formulation. Nevertheless, there is a general consensus on the formulation proposed by Ref.\cite{doshi2017towards}. According to Ref.\cite{doshi2017towards}, an explanation algorithm is considered trustworthy if it accurately reveals the underlying reasoning behind a model's decisions. In this regard, Shapley value-based explanation becomes popular due to its rigid mathematical foundation and theoretical basis. Originated from the field of cooperative games~\cite{shapley1953value} and reintroduced as an explanation framework~\cite{Strumbelj2010explanation},  Shapley value effectively overcomes the limitations of black-box models through the implementation of fair allocation schemes. Although the utilization of Shapley values for explaining DL models may sometimes lead to non-intuitive feature attributions~\cite{pmlr-v119-kumar20e, pmlr-v119-sundararajan20b, 9565902}, the ideal mathematical properties make it possible to fairly measure the contribution of each input in the model. Hence, Shapley value-based interpretable approaches still remain a popular choice in XAI.

A notable problem for Shapley-based explanation is its high computation load. By definition, the computational complexity of Shapley value increases exponentially with the number of variables. In order to accelerate computation, a series of methods have been developed that can be roughly divided into five categories: sampling-based methods, regression-based methods, gradient-based methods, structure model-based methods, and generative model-based methods. Sampling-based methods utilize Monte Carlo sampling to approximate the true Shapley value. These methods involve Monte Carlo approximation using randomly sampled permutations~\cite{vstrumbelj2014explaining} or sampling based on random permutations of local features~\cite{chen2018shapley}. Regression-based methods employ weighted linear regression to accelerate the estimation of Shapley value, with examples including SHAP (SHapley Additive exPlanation)~\cite{lundberg2017unified} and KernelSHAP~\cite{covert2020improving}. Gradient-based methods estimate Shapley values by exploiting the gradient of the model output with respect to input features. Typical approaches include DeepSHAP~\cite{lundberg2017unified} and its variant G-DeepSHAP~\cite{chen2022explaining}. Structure model-based methods leverage specific model structure constraints to approximate the Shapley value. Examples of such methods include tree Shapley~\cite{kar2002axiomatization} and graph Shapley~\cite{skibski2014algorithms}. More recently, generative model-based methods have been developed for Shapley value estimation. A representative example is the variational autoencoder with arbitrary conditioning (VAEAC)~\cite{olsen2022using}, which uses a single model for all feature combinations.

Despite the advancements made in the calculation of Shapley value, several significant issues still remain. For example, sampling-based methods require a large quantity of samplings to achieve sufficient accuracy, and it may suffer from high variance during sampling. Regression-based methods can improve the estimation efficiency of Shapley value, however, the inherent assumption of independence among variables may negatively affect its accuracy. Whilst gradient-based and structure model-based approaches demonstrate improved efficiency, their application scenarios are limited to specific models. Another notable research gap is that there still lacks a unified theoretical framework on the error bounds of different Shapley value-based approaches, making it difficult to quantitatively evaluate and compare their accuracy.

In order to increase estimation accuracy and provide theoretical guidance on the error bound, this article presents the EmSHAP (Energy-based model SHapley value estimation). Similar to Ref.~\cite{olsen2022using}, EmSHAP introduces a generative model during the estimation of Shapley contribution function. It relies on the energy-based model(EBM) to obtain the expectation of Shapley contribution function of the deep learning model under some arbitrary subset of features given the rest. The use of EBM is motivated by its ability to model complex, high-dimensional distributions without explicit normalization, which provides a flexible and efficient framework for estimating the conditional distributions required in Shapley value computation. Unlike original EBM, the EBM in EmSHAP is decomposed into an energy network to approximate the unnormalized conditional density and a GRU network for approximation of the proposal distribution in the partition function. The GRU network has a simple structure and is capable of capturing long-term dependencies, and the adopted dynamic masking scheme improves the flexibility. The contributions of the paper can be summarized as: i) Methodological refinements on the EBM framework by introducing the GRU network and the dynamic masking scheme, resulting in more accurate Shapley value estimation; ii) Theoretical analysis of error bounds for several state-of-the-art methods are performed with the purpose of providing a quantitative measure to evaluate and compare the accuracy of different methods; iii) Extensive application experiments are conducted to verify the accuracy and scalability of different Shapley value estimation methods, so that practitioners can refer to when choosing appropriate Shapley value estimation methods for specific applications.

\section{Related work}\label{Sec-2}
Shapley value is a concept originated from cooperative game theory that provides a fair and reasonable method for quantifying feature attribution contributions. As a result, they have garnered significant attentions in the field of XAI. However, the calculation of exact Shapley values is an NP-hard problem~\cite{matsui2001np}. To address this computational complexity, researchers have developed various approximation methods. This section provides a brief review on the different kinds of approximation methods in the literature, which can be summarized into sampling-based, regression-based, gradient-based, structure model-based and generative model-based methods.

\textbf{Sampling-based methods:}
Sampling-based Shapley value estimation is among the earliest approaches studied and remains the most widely adopted in practice. The core principle is to randomly sample subsets of features and compute the average contribution of the features to the model output. The most commonly used sampling-based methods are Monte Carlo sampling~\cite{Strumbelj2014explanation, pang2025shapley}, and Monte Carlo sampling with random arrangements~\cite{castro2009polynomial, Strumbelj2010explanation}.
In addition to standard Monte Carlo sampling, recent studies have explored alternative strategies to improve estimation accuracy, such as Owen-based sampling~\cite{Okhrati2020Multilinear}, Quasi-Monte Carlo sampling~\cite{Rory2022Sampling}, permutation sampling~\cite{tsai2023faith}, and stratified sampling~\cite{castro2017improving, zhang2023efficient}. Most of these approaches are model-agnostic and do not require explicit feature removal, making them attractive for practical applications. However, sampling-based methods may suffer from high variance during the sampling procedures.

\textbf{Regression-based methods:}
Regression-based methods estimate Shapley values by using regression models to approximate the marginal contribution of each feature to the output. These methods approximate the original black-box model by constructing a simpler surrogate model. Among these methods, KernelSHAP~\cite{lundberg2017unified} is the most widely used. It transforms the Shapley value estimation into a weighted least squares problem through the Shapley kernel, leading to significantly enhanced estimation efficiency. Subsequent works further extended this line of research. For example, Covert \emph{et al.}~\cite{covert2020improving} proposed an improved version of KernelSHAP to reduce the estimation bias, Simon and Vincent~\cite{simon2020projected} introduced SGD-Shapley, which uses a stochastic gradient descent scheme to iteratively solve the weighted least squares problem, while Jethani \emph{et al.}~\cite{jethani2022fastshap} developed FastSHAP, a more computationally efficient variant ST-SHAP~\cite{kelodjou2024shaping} is developed to enhance stability by employing layer-wise neighbor selection. Additionally, the more complex variants like generalized linear models~\cite{bordt2023shapley} and kernel regression~\cite{chau2022rkhs} also belong to regression-based approaches. Compared to sampling-based methods, regression-based methods offer higher computational efficiency and lower estimation variance. However, since most regression methods compute the marginal contribution instead of the real contribution of features, they may introduce bias in the estimates.

\textbf{Gradient-based methods:}
Gradient-based methods estimate Shapley values by exploiting the gradient of the model output with respect to input features. The basic idea is to approximate the contribution of each feature by evaluating how incremental changes in the feature influence the output. Typical approaches include DeepSHAP~\cite{lundberg2017unified} and its variant G-DeepSHAP~\cite{chen2022explaining}, both designed to alleviate the computational cost of exact Shapley value estimation. In addition, Ancona \emph{et al.}~\cite{ancona2019explaining} proposed a polynomial-time approximation for Shapley values in deep neural networks based on uncertainty propagation, and Wang \emph{et al.}~\cite{wang2021shapley} introduced Shapley Explanation Networks (SHAPNETs) that treat Shapley values as latent representations to enable hierarchical explanations and regularization. Despite their efficiency, gradient-based methods require access to model gradients and are therefore limited to differentiable architectures.

\textbf{Structure model-based methods:}
Structure model-based methods estimate Shapley values by leveraging the inherent structure of specific models to simplify computation. These methods exploit the constraints and decomposition properties of structured models, such as decision trees and graphs, to efficiently calculate exact or approximate Shapley values. Tree Shapley~\cite{kar2002axiomatization}, TreeSHAP~\cite{Scott2020treeshap}, and graph Shapley~\cite{skibski2014algorithms} fall into this category. They utilize the model structure to significantly reduce computational complexity. However, structure model-based methods are restricted to certain model classes so that their application is limited.

\textbf{Generative model-based methods:}
Different from other methods, generative model-based methods estimate the conditional probability of features in Shapley value estimation. For example, the on-manifold Shapley method~\cite{frye2021shapley} estimates the Shapley value within the manifold space and employs VAE to compute conditional probabilities across different feature subsets. VAEAC~\cite{olsen2022using}, a variant of this approach, applies various masking techniques to the training data to train a model capable of generating any conditional probability distribution. The problem of VAE-based generative models is that the amortization error~\cite{cremer2018inference} in VAE may bring bias to the Shapley value estimation.

In addition to the above categories, recent studies have extended Shapley values to quantify feature interactions rather than individual feature importance. Techniques like SHAP-IQ~\cite{fumagalli2023shap}, TreeSHAP-IQ~\cite{muschalik2024beyond}, KernelSHAP-IQ~\cite{Fabian2024kernelSHAPIQ}, and SVARM-IQ~\cite{Patrick2024svarmiq}, which provide effective frameworks to identify and interpret higher-order interactions among features. These methods are particularly useful in settings where feature dependencies play a central role.

The aforementioned methods represent the mainstream approaches for Shapley value estimation. For a broader synthesis, readers are referred to Ref.~\cite{chenhugh2023shapoverview}.

\section{Preliminaries}\label{Sec-3}
In this section, preliminary knowledge about Shapley value-based explanation framework is presented.
\subsection{Shapley value}\label{Sec-3-1}
Shapley value is a tool from game theory designed to fairly allocate the contribution generated by a coalition game among its individual players \cite{shapley1953value}. Suppose a total of $d$ players participate in a game, and these players form a set of $D=\{1,\cdots, i,\cdots, |D|\}$, with $|D|$ being the number of all players. The contribution/revenue generated by these players in this game can be quantified as $v(D)$, with $v(\cdot)$ being a predefined contribution function. Similarly, the contribution generated by a player subset $S(S \subseteq D)$ can be quantified as $v(S)$. The Shapley value is defined as the marginal contribution of the player to the payoff of all possible player subsets. The definition of the Shapley value is as follows.
\begin{equation}\label{eq-2-1}
\begin{small}
    \phi_i=\sum_{S \subseteq D \backslash \{i\}}\frac{|S|!(|D|-|S|-1)!}{|D|!}(v(S\cup\{i\})-v(S)).
\end{small}
\end{equation}
Here, $\phi_i$ is the Shapley value of the $i$-th player and $|S|$ is the number of players in subset $S$. The definition of Eq.(\ref{eq-2-1}) renders Shapley value the following favorable properties.
\begin{itemize}
\item \textbf{Efficiency}: The sum of Shapley values for players from a subset is equal to the difference between the contribution function of the collection of these players and the contribution function of the empty set, i.e. $\sum_{i=1}^{|S|}\phi_i = v(S)-v(\varnothing)$;

\item \textbf{Symmetry}: For any coalition $S$ not containing $i_1$ and $i_2$, if player $i_1$ and player $i_2$ satisfy $v(S \cup \{i_1\})=v(S \cup \{i_2\})$, then $\phi_{i_1}=\phi_{i_2}$;

\item \textbf{Dummy}: Regardless of which coalition of players it is added to, a player $i$ does not change the contribution value, that is
 $v(S)=v(S \cup \{i\})$ for any coalition $S$, then $\phi_i=0$;

\item \textbf{Additivity}: For a game with combined contribution function $v_1+v_2$, the Shapley value can be calculated by $\phi_{i_1}+\phi_{i_2}$.
\end{itemize}

\subsection{Shapley value-based model explanation}\label{Sec-3-2}
The Shapley value, grounded in a rigorous mathematical framework and underpinned by fairness principles, serves as a powerful tool for feature attribution. By conceptualizing features as players and the model output as the value function, Shapley value offers a principled tool for quantifying the contribution of each feature to the model’s output, thereby enhancing the interpretability of complex models.

Let $\tilde{f}: \mathbf{x} \in \mathbb{R}^{|D|} \rightarrow y \in \mathbb{R}$ be an unknown data generating function that generates a set of $n$ measurements $\mathbf{X}=\left\{\mathbf{x}^t\right\}^{n}_{t=1}, \mathbf{x}^t\in \mathbb{R}^{|D|}$ of input features $\mathbf{x}$ and $\mathbf{y}=\left\{y^t\right\}_{t=1}^n, y^t \in \mathbb{R}$ of output feature $y$, and $y^t=\tilde{f}(\mathbf{x}^t)$. $D$ is the input feature set and $|D|$ is the number of features. In a classification/predictive task, the purpose is to approximate the unknown function $\tilde{f}$ using a specific model $\hat{y}=f(\mathbf{x})$ to minimize the approximation error between $y$ and $\hat{y}$ as well as the out-of-sample generalization error. The approximation model $f(\cdot)$ can be of any type of machine learning models, in this paper, deep learning models are mainly considered.

In Shapley value-based model explanation framework, the $|D|$ input features are related to the $|D|$ players, the classification/predictive model $f(\mathbf{\cdot})$ relates to the cooperative game and the model output $\hat{y}$ can be regarded as the sum of contribution(or Shapley values) from all features in $\mathbf{x}$. Noting that the framework is model-agnostic, it can be used to interpret a wide variety of deep learning models.

The output for a specific sample $\mathbf{x}^t \in \mathbb{R}^{|D|}$ in $\mathbf{X}$ can be expressed as the sum of the Shapley values associated with each input feature, obtained by decomposing the model output. Specifically, this decomposition provides the following expression for the output corresponding to $\mathbf{x}^t$.
\begin{equation}\label{eq-2-2}
\begin{small}
    f(\mathbf{x}^t)=\phi_0+\sum^{|D|}_{i=1}\phi_i^t.
\end{small}
\end{equation}
Here, $\phi_0$ is the base value of the model outputs and can be regarded as the expectation of global output~\cite{lundberg2017unified}, with $\phi_0=\mathbb{E}[f(\mathbf{x})]$, $\phi_i^t$ is the Shapley value of the $i$-th input feature $x^t_i$. Eq.(\ref{eq-2-2}) highlights the disparity between the model output and the global output by employing the Shapley value associated with each feature in $\mathbf{x}^t$.

To calculate $\phi_i^t$, an appropriate contribution function $v(S)$ should be determined. Since $f(\mathbf{x}^t)$ is the sum of contribution from all input features, we can easily relate $v(\cdot)$ to $f(\cdot)$, that is, $v(D)=f(\mathbf{x}^t)$ for a specific sample and $\bar{v}(D)=\mathbb{E}[f(\mathbf{x})]$ for global output. Following Ref.~\cite{lundberg2017unified}, the contribution function can be reformed as the expected outcome of $f(\mathbf{x})$ conditioned on the features in $S$ taking on specific values $\mathbf{x}_S^t$ as follows.
\begin{equation}\label{eq-2-3}
\begin{small}
    \begin{aligned}
        v(S) &=  \mathbb{E}[f(\mathbf{x})|\mathbf{x}_S=\mathbf{x}_S^t]=\mathbb{E}[f(\mathbf{x}_{\bar{S}},\mathbf{x}_S)|\mathbf{x}_S=\mathbf{x}_S^t] \\ & =\int f(\mathbf{x}_{\bar{S}},\mathbf{x}_S^t)p(\mathbf{x}_{\bar{S}}|\mathbf{x}_S=\mathbf{x}_S^t)d{\mathbf{x}_{\bar{S}}},
    \end{aligned}
\end{small}
\end{equation}
$\bar{S}$ is the complement of subset $S$ in the feature set $D$ (i.e., $\bar{S}=D \backslash S$), $p(\mathbf{x}_{\bar{S}}|\mathbf{x}_S=\mathbf{x}_S^t)$ is the conditional density of $\mathbf{x}_{\bar{S}}$ given $\mathbf{x}_S^t$, $\mathbf{x}_S$ and $\mathbf{x}_S^t$ contains the part of inputs $\mathbf{x}$ and $\mathbf{x}^t$ corresponding to features inside $S$, $\mathbf{x}_{\bar{S}}$ and $\mathbf{x}_{\bar{S}}^t$ relate to the part of inputs in $\mathbf{x}$ and $\mathbf{x}^t$ corresponding to features outside $S$.

The Shapley value provides a fair assessment of each feature's contribution to the model output. However, from Eq. (\ref{eq-2-1}) it can be seen that its estimation involves high computation load as there are a total of $2^{|D|-1}$ combinations of $S$. The computational complexity will increase exponentially as the number of input features grows.

\section{Methodology}\label{Sec-4}
By definition of Shapley value in Eq.(\ref{eq-2-1}), it can be seen that the most important step is to estimate the contribution function $v(S)$, which can be reformulated as the conditional expectation in Eq.(\ref{eq-2-3}). The conditional expectation can be empirically approximated by techniques like Monte Carlo integration provided the conditional probability $p(\mathbf{x}_{\bar{S}}|\mathbf{x}_S)$ is known.
\begin{equation}\label{eq-3-0}
\begin{small}
    \begin{aligned}
            v(S)&=\mathbb{E}[f(\mathbf{x}_{\bar{S}},\mathbf{x}_S)|\mathbf{x}_S=\mathbf{x}_S^t]=\int f(\mathbf{x}_{\bar{S}},\mathbf{x}_S^t)p(\mathbf{x}_{\bar{S}}|\mathbf{x}_S=\mathbf{x}_S^t) d\mathbf{x}_{\bar{S}}
             \\ & \approx \hat{v}(S) = \frac{1}{K}\sum_{k=1}^{K}f(\mathbf{x}_{\bar{S}}^{(k)},\mathbf{x}_S^t),
    \end{aligned}
\end{small}
\end{equation}
where $\mathbf{x}_{\bar{S}}^{(k)} \sim p(\mathbf{x}_{\bar{S}}|\mathbf{x}_S=\mathbf{x}_S^t)$ for $k=1,2,...,K$, and $K$ is the number of Monte Carlo sampling. It is worth noting that estimation of the conditional density $p(\mathbf{x}_{\bar{S}}|\mathbf{x}_S)$ can be achieved by techniques like KDE. However, KDE is computationally challenging for high-dimensional inputs. More importantly, since there are $2^{|D|-1}$ possible subsets of features, KDE would require estimating an exponential number of conditional densities, rendering it infeasible for practical applications.

\subsection{Network structure}\label{Sec-4-1}
To address the computation challenge of conditional density estimation, EmSHAP is designed to capture the complex dependencies among features and produce accurate conditional estimates, which are then used to calculate the Shapley values. The detailed network structure of EmSHAP is shown in Fig.~\ref{fig1}.
\begin{figure*}[htbp]
\centerline{\includegraphics[width=\linewidth]{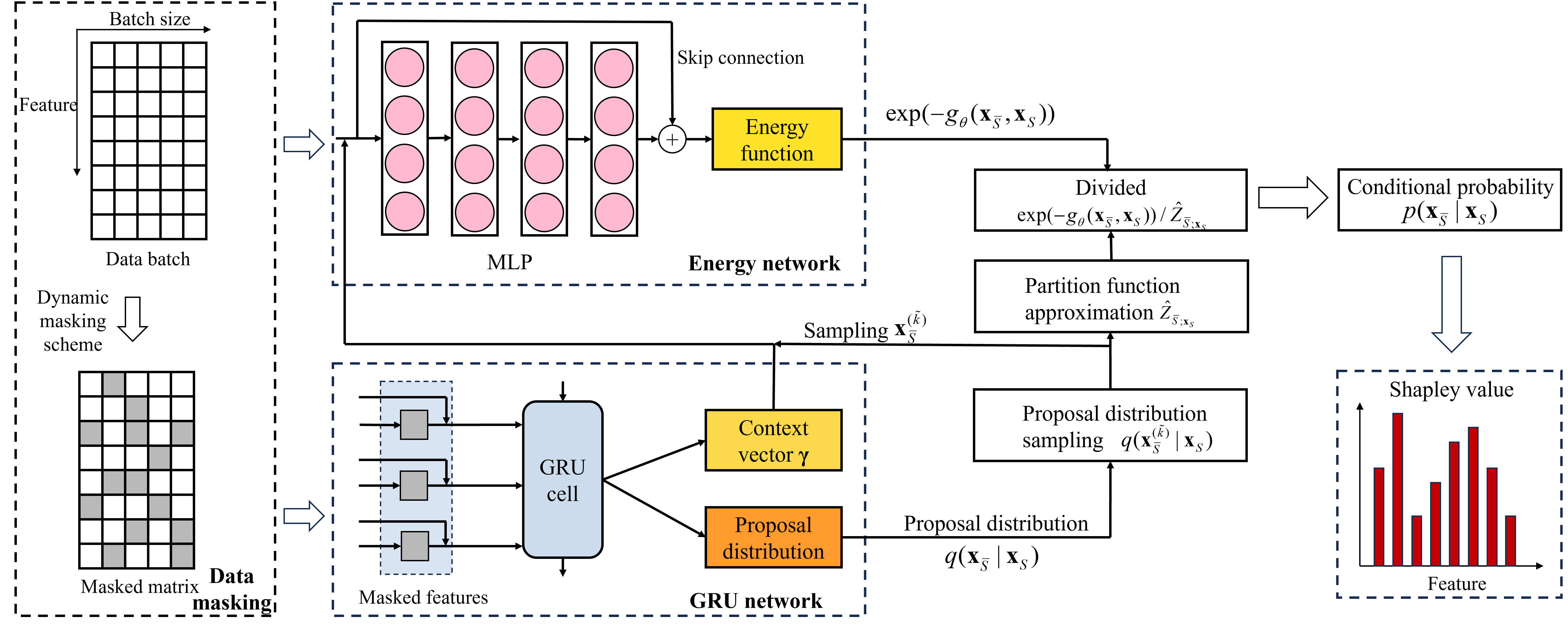}}
\caption{Structure of EmSHAP for estimating Shapley value.}
\label{fig1}
\end{figure*}

The network structure in Fig.~\ref{fig1} mainly consists of three components, namely, the data masking mechanism for generating different combinations of conditional densities, the energy network for estimating of energy function/unnormalized density function and the GRU network for estimation of proposal distribution. The proposal distribution serves to estimate the partition function, which provides the normalization for the energy function. In Fig.~\ref{fig1}, data batches of input features are masked and fed into the energy network to estimate the energy function $g_{\theta}(\mathbf{x}_{\bar{S}},\mathbf{x}_S)$ and the unnormalized probability $\text{exp}(-g_{\theta}(\mathbf{x}_{\bar{S}},\mathbf{x}_S))$, the unnormalized probability approximates the conditional probability $p(\mathbf{x}_{\bar{S}}|\mathbf{x}_S)$ after divided by the partition function. The energy network is designed as an unnormalized Multilayer Perceptron (MLP) with skip connections~\cite{He_2016_CVPR}. The partition function, on the other hand, needs to be approximated by Monte Carlo integration from the proposal distribution $q(\mathbf{x}_{\bar{S}}|\mathbf{x}_S)$. The proposal distribution can be estimated using an autoregressive approach~\cite{strauss2021arbitrary}, however, its estimation accuracy will be negatively affected by feature orderings\footnote{Feature ordering refers to the arrangement of input features in a specific sequence when fed into a neural network.}. Instead, a GRU network with a dynamic masking scheme is used, which outputs parameters related to the estimated proposal distribution and a context vector $\boldsymbol{\gamma}$. The context vector is used to connect the energy network and the GRU network. With the conditional probability $p(\mathbf{x}_{\bar{S}}|\mathbf{x}_S)$ obtained, the contribution function can be estimated from Eq.(\ref{eq-3-0}), and Shapley value can be subsequently obtained from Eq.(\ref{eq-2-1}). This network structure is under the framework of energy-based model, which can universally approximate any probability density according to Ref.~\cite{Arbel2021Gebm}. Hence the proposed model structure inherits the good approximation ability of EBM. It should be noted that for estimating the proposal distribution, other RNN variants, such as Long Short-Term Memory (LSTM), BiGRU, and Attention-based GRU (Att-GRU), can also be applied. However, the more complex gating structure of LSTM often leads to higher computational costs and an increased risk of overfitting. On the other hand, advanced variants such as BiGRU and attention-augmented GRU (Att-GRU) require substantially more time and memory during training and inference, while the observed performance improvement may be marginal. This is because the introduction of bidirectional or attention mechanisms may introduce redundant information that does not substantially enhance feature attribution, and the added complexity can exacerbate the risk of overfitting. Hence, the GRU is selected over other RNN variants, as it represents a more balanced choice that offers a favorable trade-off between performance and computational efficiency. This conclusion is further supported by the comparative studies in Section~\ref{Sec-6-5}. By capturing the intrinsic information associated with the sequential masking of variables, the GRU network ensures that the generated proposal distribution is more accurate, thereby enabling the model to better approximate complex conditional probability densities. The details of the network structure will be explained in Subsections~\ref{Sec-4-0} and~\ref{Sec-4-2}.

\subsection{Energy network for estimation of unnormalized conditional probability density}\label{Sec-4-0}
For estimation of Shapley values, EmSHAP relies on the energy-based model. EBMs are a class of probabilistic models with the probability of a configuration $\mathbf{x}$ defined through an energy function $g_{\theta}(\mathbf{x})$~\cite{lecun2006tutorial}. EBMs have strong universal approximation capability~\cite{zhang2022identifiable, khemakhem2020ice}, enabling them to capture high-dimensional and multi-modal distributions without restrictive assumptions on the data. Formally, the data distribution can be approximated as follows.
\begin{equation}\label{eq-2-5}
\begin{small}
    p_{data}(\mathbf{x}) \approx p_{\theta}(\mathbf{x}) = \frac{\exp(-g_{\theta}(\mathbf{x}))}{Z},
\end{small}
\end{equation}
where $Z=\int \exp(-g_{\theta}(\mathbf{x})) d\mathbf{x}$ is the partition function.

A key advantage of EBMs lies in their flexibility: by parameterizing the energy function $g_{\theta}(\mathbf{x})$ with neural networks, they can serve as a powerful framework for modeling complex probability distributions. In particular, this formulation allows EBMs to represent both joint and conditional distributions. Extending Eq.(\ref{eq-2-5}) to the conditional case yields,
\begin{equation}\label{eq-3-1}
\begin{small}
    p(\mathbf{x}_{\bar{S}}|\mathbf{x}_{S}) \approx \frac{\exp(-g_{\theta}(\mathbf{x}_{\bar{S}},\mathbf{x}_{S}))}{Z_{\bar{S};\mathbf{x}_S}},
\end{small}
\end{equation}
where $\bar{S}$ denotes the unobserved subset and $S$ the observed subset of features. This conditional formulation is essential for Shapley value estimation, as it requires repeatedly modeling the conditional distribution of missing features given observed ones. Here, $Z_{\bar{S};\mathbf{x}_S}=\int \exp(-g_{\theta}(\mathbf{x}_{\bar{S}},\mathbf{x}_S))d\mathbf{x}_{\bar{S}}$ is the partition function specific to the subset $\bar{S}$ and observed features $\mathbf{x}_S$.

In our implementation, $g_{\theta}(\mathbf{x}_{\bar{S}},\mathbf{x}_S)$ is parameterized by a four-layer MLP with two residual connections~\cite{He_2016_CVPR}, and can be regarded as an energy network. In the energy network, the skip connections help stabilize training and improve representation capacity, especially when modeling high-dimensional feature spaces. The input of the energy network consists of both the observed features $\mathbf{x}_S$ and the masked (unobserved) features $\mathbf{x}_{\bar{S}}$, allowing the network to capture interactions between observed and masked features. The energy network for estimating the unnormalized conditional probability density is as follows.
\begin{equation}
\begin{small}
    g_{\theta}(\mathbf{x}_S, \mathbf{x}_{\bar{S}}) = \text{MLP}([\mathbf{x}_S, \mathbf{x}_{\bar{S}}]).
\end{small}
\end{equation}

The energy network outputs a scalar energy value, where a lower energy corresponds to a higher probability. This scalar output is used to calculate the unnormalized probability $\exp (-g_{\theta}(\mathbf{x}_{\bar{S}}, \mathbf{x}_{S}))$. After dividing this unnormalized probability by the partition function $Z_{\bar{S};\mathbf{x}_S}$, the normalized conditional probability of the unobserved features given the observed ones can be obtained.

\subsection{GRU network for estimating the proposal distribution}\label{Sec-4-2}
When using the EBM to approximate the conditional density $p(\mathbf{x}_{\bar{S}}|\mathbf{x}_S)$, it is always intractable to directly calculate $Z_{\bar{S}, \mathbf{x}_S}$. One solution is to introduce a proposal conditional distribution $q(\mathbf{x}_{\bar{S}}|\mathbf{x}_S)$ that is close to $Z_{\bar{S}, \mathbf{x}_S}$, then use Monte Carlo integration to sample from $q(\mathbf{x}_{\bar{S}}|\mathbf{x}_S)$ to approximate $Z_{\bar{S}, \mathbf{x}_S}$.
\begin{equation}\label{eq-3-2}
\begin{small}
\begin{aligned}
      \hat{Z}_{\bar{S};\mathbf{x}_S} &= \int \text{exp}(-g_{\theta}(\mathbf{x}_{\bar{S}},\mathbf{x}_S))d\mathbf{x}_{\bar{S}} \\
      &= \int \frac{\text{exp}(-g_{\theta}(\mathbf{x}_{\bar{S}},\mathbf{x}_S))}{q(\mathbf{x}_{\bar{S}}|\mathbf{x}_S)}q(\mathbf{x}_{\bar{S}}|\mathbf{x}_S)d\mathbf{x}_{\bar{S}}  \\
      & \approx \frac{1}{\tilde{K}}\sum_{\tilde{k}=1}^{\tilde{K}}\frac{\text{exp}(-g_{\theta}(\mathbf{x}_{\bar{S}}^{(\tilde{k})},\mathbf{x}_S))}{q(\mathbf{x}_{\bar{S}}^{(\tilde{k})}|\mathbf{x}_S)}, \mathbf{x}_{\bar{S}}^{(\tilde{k})} \sim q(\mathbf{x}_{\bar{S}}|\mathbf{x}_S).
\end{aligned}
\end{small}
\end{equation}
Here $\hat{Z}_{{\bar{S}};\mathbf{x}_S}$ is the approximate partition function, $\tilde{K}$ is the number of Monte-Carlo sampling. The above procedure is also called importance sampling. When $q(\mathbf{x}_{\bar{S}}|\mathbf{x}_S)$ and $p(\mathbf{x}_{\bar{S}}|\mathbf{x}_S)$ are close, such a sampling estimation can be regarded as unbiased. Thus, an appropriate proposal distribution $q(\mathbf{x}_{\bar{S}}|\mathbf{x}_S)$ should be defined, which is desired to be close to $p(\mathbf{x}_{\bar{S}}|\mathbf{x}_S)$ as much as possible and have a parametric expression that is easy to be inferred.

The proposal distribution $q(\mathbf{x}_{\bar{S}}|\mathbf{x}_S)$ can be obtained through an autoregressive approach~\cite{strauss2021arbitrary},

\begin{equation}\label{eq-3-7}
\begin{small}
q(\mathbf{x}_{\bar{S}}|\mathbf{x}_S) \approx \prod_{j=1}^{|{\bar{S}}|}q(x_{{\bar{S}_j}}|\mathbf{x}_{S \cup \bar{S}_{<j}}),
\end{small}
\end{equation}
where $j=1,\dots, |{\bar{S}}|$ is the index of features in $|S|$. Eq.(\ref{eq-3-7}) uses the chain rule of conditional probability to obtain the proposal distribution, so that estimation of the proposal distribution can be converted into the multiplication of a series of one-dimensional conditional densities $q(x_{{\bar{S}}_l}|\mathbf{x}_{S \cup \bar{S}_{<l}})$, which is easy and tractable.

An important issue arises here, namely, the feature ordering of the conditional probabilities $q(x_{{\bar{S}}_l}|\mathbf{x}_{S \cup \bar{S}_{<l}})$ will have a significant impact on the estimation results. For example, for a predictive task with three inputs $x_1$, $x_2$ and $x_3$, when calculating the conditional density $q(x_2,x_3|x_1)$, it can be obtained as $q(x_2,x_3|x_1)=q(x_3|x_1,x_2)q(x_2|x_1)$ or $q(x_2,x_3|x_1)=q(x_2|x_1,x_3)q(x_3|x_1)$ according to different feature orderings. Theoretically, $q(x_3|x_1,x_2)q(x_2|x_1)$ is equal to $q(x_2|x_1,x_3)q(x_3|x_1)$, however, since we use the energy-based model to approximate these distributions, the two orderings may produce different results. This is problematic if the number of input features is large. Though Ryan \emph{et al.}~\cite{strauss2021arbitrary} used plenty of random training to alleviate this problem, the solution is still unsatisfactory.

In order to better estimate the conditional distribution and subsequently the Shapley value, a GRU network is applied here. Let $|{\bar{S}}|$ denote the number of masked variables in $\mathbf{x}_{\bar{S}}$ and $x_{\bar{S}_j}$ be the $j$-th ($j=1,\dots, |{\bar{S}}|$) masked variable, the conditional density $q(\mathbf{x}_{\bar{S}}|\mathbf{x}_S)$ can be approximated by multiplying the density functions of $|{\bar{S}}|$ Gaussian distributed auxiliary variables $\mathbf{z}_j$, whose mean vector and covariance matrix can be generated through the GRU network in Fig.~\ref{fig2}.

\begin{figure}[h]
\centerline{\includegraphics[width=\linewidth]{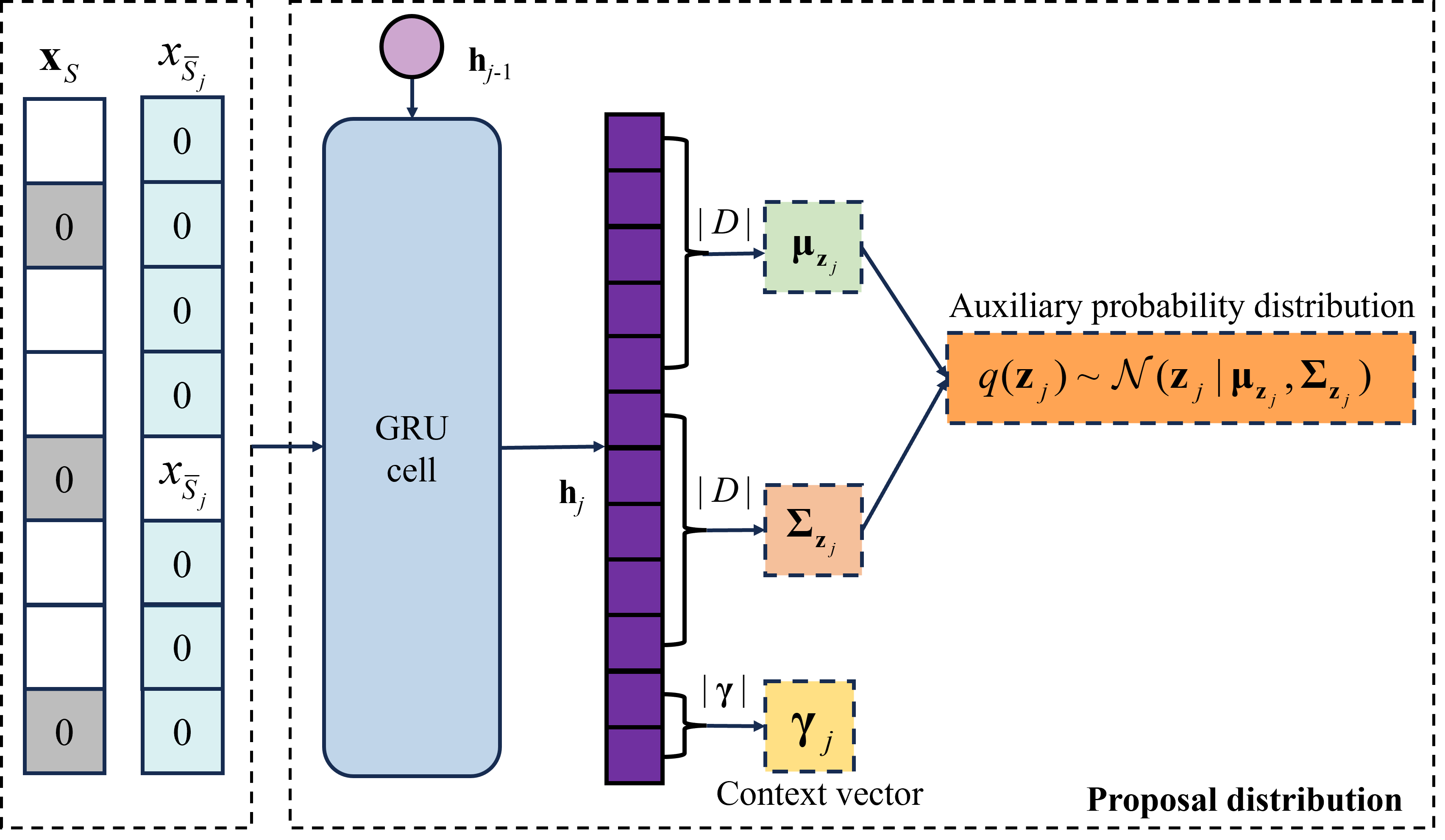}}
\caption{GRU network for proposal distribution estimation.}
\label{fig2}
\end{figure}
In Fig.~\ref{fig2}, the unmasked features $\mathbf{x}_{S}$ together with the $j$-th masked feature $x_{\bar{S}_j}$ are fed into the GRU cell to generate the $j$-th hidden state $\mathbf{h}_j$ based on the information of the $j-1$-th hidden state $\mathbf{h}_{j-1}$, which can be summarized as follows.
\begin{equation}\label{eq-3-8}
    \mathbf{h}_j = \text{GRU}\left( \left[
        \mathbf{x}_{S}, x_{\bar{S}_j}\right], \mathbf{h}_{j-1} \right).
\end{equation}

It should be noted that a total of $|D|-1$ zeros are combined with $x_{\bar{S}_j}$ in Fig.~\ref{fig2} to align with the dimension of $\mathbf{x}_{S}$. The hidden state has a dimension of $2|D|+|\boldsymbol{\gamma}|$, with the first $|D|$ elements forming the mean vector of the auxiliary variables $\mathbf{z}_j$, the second $|D|$ elements forming the diagonal vector of $\mathbf{z}_j$'s covariance matrix, and the last $|\boldsymbol{\gamma}|$ elements forming the context vector $\boldsymbol{\gamma}_j$. The hidden state $\mathbf{h}_j$ can be denoted as follows.
\begin{equation}\label{eq-3-8-0}
    \left[ \boldsymbol{\mu}_{\mathbf{z}_j}^T, \boldsymbol{\sigma}_{\mathbf{z}_j}^T, \boldsymbol{\gamma}_j^T \right]^T = \mathbf{h}_j.
\end{equation}
Here $\boldsymbol{\mu}_{\mathbf{z}_j}$ is the mean vector, $\boldsymbol{\sigma}_{\mathbf{z}_j}$ is the vector containing the diagonal elements of $\mathbf{z}_j$'s covariance matrix. The context vector $\boldsymbol{\gamma}_j$ is used to connect the GRU network with the energy network by transferring the contextual information, thereby facilitating the interaction between the two components and enhancing the overall performance of conditional probability estimation. It does not explicitly involves in feature attribution, but rather provides insight into how feature dependencies are captured and transferred within EmSHAP. Table E2 of Appendix E analyzed how the length of $\boldsymbol{\gamma}$ impacts the estimation accuracy.

The auxiliary variables $\mathbf{z}_{j}$ have a Gaussian distribution as follows.
\begin{equation}\label{eq-3-8-1}
    q (\mathbf{z}_j) \sim \mathcal{N}(\mathbf{z}_j|\boldsymbol{\mu}_{\mathbf{z}_j}, \boldsymbol{\Sigma}_{\mathbf{z}_j}).
\end{equation}
Here $\boldsymbol{\Sigma}_{\mathbf{z}_j}$ is the diagonal covariance matrix with diagonal elements being $\boldsymbol{\sigma}_{\mathbf{z}_j}$.

The masked feature $x_{\bar{S}_j}$ is fed into the GRU cell one by one to yield a series of auxiliary variable $\mathbf{z}_j$, with its distribution generated by $\mathbf{h}_j$. Benefiting from the inherent gating mechanism of the GRU network, $\mathbf{h}_j$ consistently incorporates information from both previously masked features and the current masked feature. Consequently, the product of the densities of all generated auxiliary variables can be used to approximate the desired conditional probability distribution.
\begin{equation}\label{eq-3-8-2}
    q(\mathbf{x}_{\bar{S}}|\mathbf{x}_{S} ) \approx \prod_{j=1}^{|\bar{S}|}q(\mathbf{z}_{j}).
\end{equation}
Throughout the training process, whenever a masked feature is fed into the GRU network, the corresponding information is captured and stored in the hidden state $\mathbf{h}$. The hidden state $\mathbf{h}$, on the other hand, can be globally optimized by minimizing the loss function, making it insensitive to the orderings of feature masking. Hence, compared to Eq.(\ref{eq-3-7}), the introduction of the hidden state $\mathbf{h}$ allows GRU to better capture global dependencies among features, which significantly reduces the impact of feature ordering.

Another notable issue here is that Gaussian distribution is used as the prior distribution of the auxiliary variables. Although it is not a perfect prior for complex data or models, it is widely used in importance sampling. According to Ref.~\cite{Importance2010Tokdar}, importance sampling can effectively approximate the partition function as long as the proposal distribution has a similar shape or overlapping region with the real distribution, and Gaussian distribution well fits this condition.

Eq.(\ref{eq-3-8})-Eq.(\ref{eq-3-8-2}) naturally lead to the GRU network in Figs.~\ref{fig1} and~\ref{fig2}. By substituting the obtained proposal distribution $q(\mathbf{x}_{\bar{S}}|\mathbf{x}_S)$ into Eq.(\ref{eq-3-2}), the approximate partition function $\hat{Z}_{\bar{S};\mathbf{x}_S}$ can be obtained. This partition function is then incorporated into Eq.(\ref{eq-3-1}) to derive the conditional probability density $p(\mathbf{x}_{\bar{S}}|\mathbf{x}_S)$ based on the EBM. Finally, substituting $p(\mathbf{x}_{\bar{S}}|\mathbf{x}_S)$ into Eq.(\ref{eq-3-0}) yields an approximate expression for the contribution function $v(S)$. These derivations provide the foundation for estimating the desired quantities using EmSHAP. With the proposal distribution $q(\mathbf{x}_{\bar{S}}|\mathbf{x}_S)$ determined, the loss function of EmSHAP can be established according to the maximum likelihood criterion.
\begin{equation}\label{eq-3-6}
\begin{small}
\begin{aligned}
    & \quad \mathcal{J}({\Theta};\mathbf{x})= -\mathcal{L}({\Theta};\mathbf{x})
    \\ &
    = -\frac{1}{\tilde{K}} \sum_{\tilde{k}=1}^{\tilde{K}} \log p(\mathbf{x}^{(\tilde{k})}_{\bar{S}}|\mathbf{x}_S, {\Theta}) - \text{log}q(\mathbf{x}_{\bar{S}}|\mathbf{x}_S, {\Theta}).
\end{aligned}
\end{small}
\end{equation}
Here ${\Theta}$ is the parameter set. Using the GRU network structure in Fig.~\ref{fig2}, the conditional densities for any combination of $\mathbf{x}_{\bar{S}}$ given $\mathbf{x}_S$ can be obtained so that a single model can estimate all the conditional densities and repetitive operation of conditional density estimations can be avoided.

\subsection{Dynamic masking scheme}\label{Sec-4-3}
Calculation of Shapley values requires estimating the conditional expectations of the contribution function under different feature combinations. This involves repeatedly evaluating the impact on the model output by including or excluding specific input features. The masking scheme provides an effective way to perform this evaluating procedure. In the masking scheme, a Bernoulli distribution is used to determine whether an input feature is included or excluded, which can be described as follows.
\begin{equation}
    \begin{gathered}
        b_i \sim Bernoulli(\zeta), \\
        \tilde{x}_i = x_i \cdot(1-b_i).
    \end{gathered}
\end{equation}
Here, $\zeta$ is the masking probability, and $b_i$ is a binary value generated from the Bernoulli distribution, with $b_i=0$ indicating $x_i$ being included in the feature set $S$, otherwise, $x_i$ is masked and excluded from the feature set $S$. The masking probability $\zeta$ is an important parameter that needs to be manually tuned, for example, $\zeta$ can be set as 0.5 by trial and error. However, using a fixed masking probability is problematic, as it may not generalize to a wide range of potential feature combinations, especially when the number of possible feature combinations is very large. Here, a dynamic masking scheme is introduced by gradually increasing $\zeta$, for example, from 0.2 to 0.8, to cover a wider range of feature combinations. Another benefit of using the dynamic masking scheme is that there is no need for manually tuning the optimal masking rate.

In the dynamic masking scheme, the $\zeta$ value is linearly increased during the training epochs of the model. This ensures EmSHAP to focus on the simpler scenarios with fewer features being masked in the initial stage, and progressively transiting to more complex scenarios with a high proportion of features being masked. This process can be formulated as follows.

\begin{equation}\label{eq-3-11}
\begin{small}
    \zeta_{e+1}=\zeta_{e}+\Delta.
\end{small}
\end{equation}
Here $e$ is the epoch index in model training and $\Delta$ is the increment of $\zeta$ at each epoch. As the training epoch grows larger, the masking rate increases linearly from a preset initial value of $\zeta_{min}$ until it reaches the preset maximum masking rate $\zeta_{max}$.

\begin{algorithm}
\caption{Training procedures of EmSHAP}
\label{alg-1}
\begin{algorithmic}
\REQUIRE Training data $\mathbf{x}$, training epochs $\mathcal{E}$
\STATE \textbf{Initialize:} Energy network parameters, GRU network parameters, and dynamic masking scheme, $e=1$.
\WHILE{$e \leq \mathcal{E}$}
\STATE $[\mathbf{x}_{\bar{S}}, \mathbf{x}_S]=$ dynamic masking$(\mathbf{x})$
\FOR{$j=1$ to $|\bar{S}|$}
\STATE $\mathbf{h}_j = \text{GRU}( [
        \mathbf{x}_{S}, x_{\bar{S}_j}], \mathbf{h}_{j-1} )$
\STATE $[ \boldsymbol{\mu}_{\mathbf{z}_j}^T, \boldsymbol{\sigma}_{\mathbf{z}_j}^T, \boldsymbol{\gamma}_j^T ]^T = \mathbf{h}_j$
\STATE $q (\mathbf{z}_j) \sim \mathcal{N}(\mathbf{z}_j|\boldsymbol{\mu}_{\mathbf{z}_j}, \boldsymbol{\Sigma}_{\mathbf{z}_j})$
\ENDFOR
\STATE $q(\mathbf{x}_{\bar{S}}|\mathbf{x}_{S} ) = \prod_{j=1}^{|\bar{S}|}q(\mathbf{z}_{j})$
\STATE Sample from $q(\mathbf{x}_{\bar{S}}|\mathbf{x}_S)$ to get $\mathbf{x}_{\bar{S}}^{(\tilde{k})}$
\STATE energy network $g_{\theta}(\mathbf{x}_{\bar{S}},\mathbf{x}_{S}) =\text{MLP}([\mathbf{x}_{\bar{S}}^{(\tilde{k})}, \mathbf{x}_S], \boldsymbol{\gamma}_{|\bar{S}|})$
\STATE Calculate partition function using Eq.(\ref{eq-3-2})
\STATE Calculate loss using Eq.(\ref{eq-3-6})
\STATE $e = e+1$
\ENDWHILE
\end{algorithmic}
\end{algorithm}

By applying the dynamic masking scheme, the GRU network recursively processes the masked input features to capture dependencies among variables. At each step, GRU updates its hidden state by incorporating historical masking information and generates the conditional probability estimate for the current step. This process enables GRU to leverage its internal gating mechanisms to retain and propagate prior masking information, thus effectively capturing feature dependencies without relying on a fixed feature order.
The detailed procedures for training EmSHAP are listed in Algorithm~\ref{alg-1}, and the pipeline of using EmSHAP to calculate Shapley values is illustrated in Fig.~\ref{fig2-0}.

\begin{figure}[h]
\centering
\includegraphics[width=\linewidth]{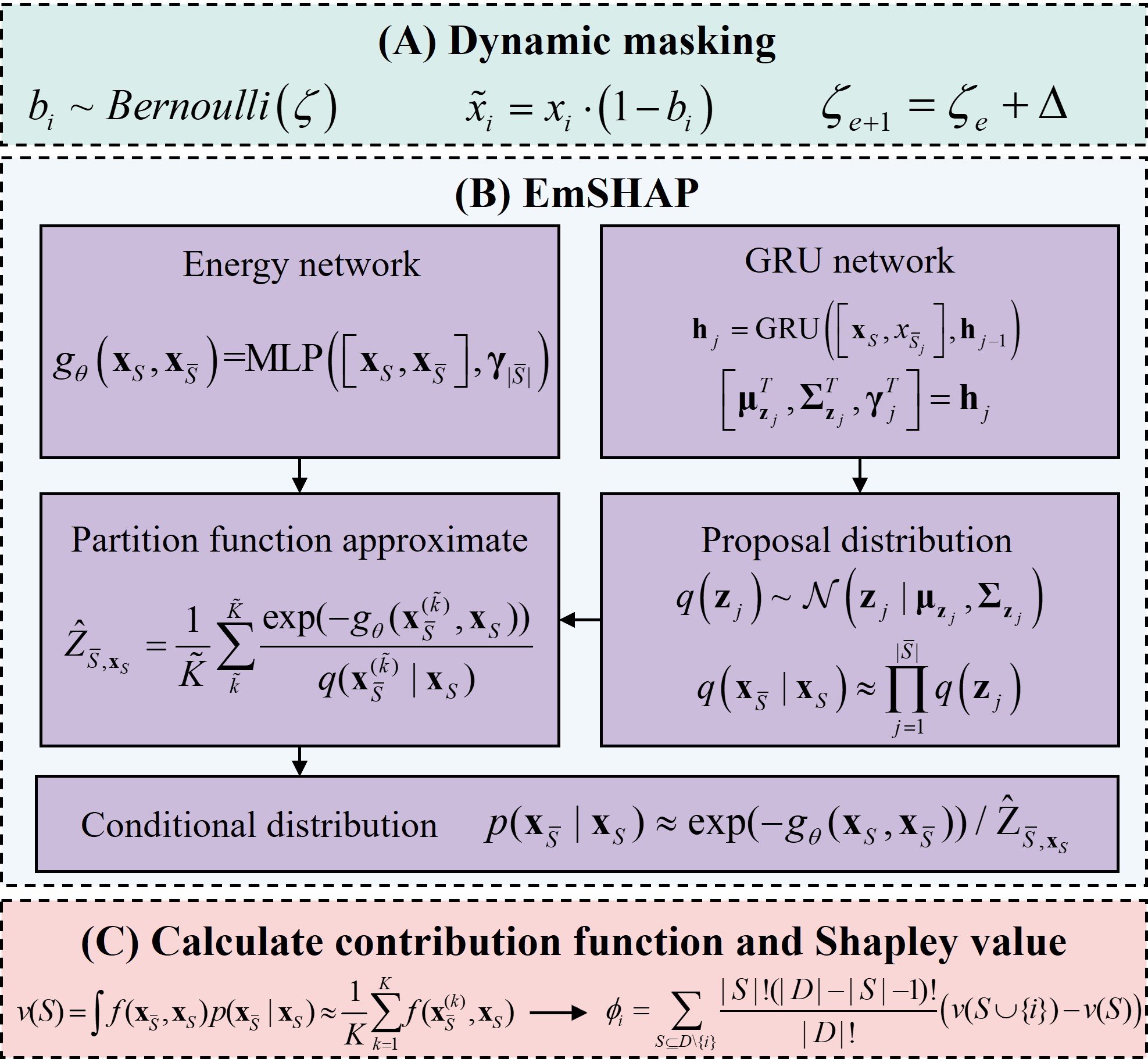}
\caption{Pipeline of EmSHAP for estimation of Shapley values.}
\label{fig2-0}
\end{figure}

\subsection{Computational complexity analysis of EmSHAP}
This subsection analyzes the computational complexity of EmSHAP. The computational burden of EmSHAP primarily arises from two components: the energy network and the GRU network. The energy network is implemented as a fully connected network with two residual connections, with a complexity of $O((2|D|+|\boldsymbol{\gamma}|)\cdot H + H^2)$, where $H$ is the number of latent dimensions. The GRU network consists of a single GRU layer, whose complexity is $O(L \cdot (2|D|+|\boldsymbol{\gamma}|)^2 + L \cdot (2|D|+|\boldsymbol{\gamma}|) \cdot 2|D|)$, where $L$ is the sequence length. Hence, the overall computational complexity of EmSHAP can be approximated as $O(L \cdot (2|D|+|\boldsymbol{\gamma}|)^2 + L \cdot (2|D|+|\boldsymbol{\gamma}|) \cdot (2|D|+H) + H^2)$. Since EmSHAP evaluates samples individually, the sequence length $L$ is typically set to 1, which simplifies the computation complexity to $O((2|D|+|\boldsymbol{\gamma}|)^2 + (2|D|+|\boldsymbol{\gamma}|) \cdot (2|D|+H) + H^2)$. It can be seen that the computational complexity is determined by the input feature dimension $|D|$, the latent dimension $H$, and context vector length $|\boldsymbol{\gamma}|$.

\section{Analysis on Error bound}\label{Sec-5}
In this section, the error bound of the contribution function in EmSHAP is discussed. Theorem 1 indicates that the expectation of the mean absolute deviation (MAD) between the estimated contribution function $\hat{v}(S)$ and the true contribution function $v(S)$ has an upper bound of $\frac{\sqrt{\pi}}{\sqrt{2K}}+\epsilon_2$, where $K$ is the number of Monte Carlo sampling, $\epsilon_2$ is a small positive number related to the absolute probability density ratio between the estimated conditional density and the true data-generating density. For comparison, Theorems 2 and 3 present the error bounds of two competitive methods, i.e., KernelSHAP~\cite{covert2020improving} and VAEAC~\cite{olsen2022using}.

To begin with, the following assumptions are introduced to facilitate the analysis.
\begin{assumption}\label{asp-1}
    The parameters of the estimated conditional density $p_\theta(\mathbf{x}_{\bar{S}}|\mathbf{x}_S=\mathbf{x}_S^t)$ are unbiased estimates of the parameters of the true data-generating density $p_{data}(\mathbf{x}^t_{\bar{S}}|\mathbf{x}_S=\mathbf{x}_S^t)$, that is, for a given positive $\epsilon_1$, the probability density ratio can be described as follows,
    \begin{equation}
    \begin{small}
        \left|\frac{p_\theta(\mathbf{x}_{\bar{S}}|\mathbf{x}_S=\mathbf{x}_S^t)} {p_{data}(\mathbf{x}^t_{\bar{S}}|\mathbf{x}_S=\mathbf{x}_S^t)} -1 \right| \leq \epsilon_1.
    \end{small}
    \end{equation}
\end{assumption}

\begin{assumption}\label{asp-2}
    The model $f(\mathbf{x})$ is continuous and second-order differentiable. Once $\mathbf{x}_{\bar{S}}$ estimated from EmSHAP is close to the real value $\mathbf{x}_{\bar{S}}^t$, the Taylor expansion of $f(\mathbf{x}_{\bar{S}}, \mathbf{x}_{S}^t)$ on $f(\mathbf{x}_{\bar{S}}^t, \mathbf{x}_{S}^t)$ can be expressed as follows.
    \begin{equation}
        f(\mathbf{x}_{\bar{S}}, \mathbf{x}_{S}^t) = f(\mathbf{x}_{\bar{S}}^t, \mathbf{x}_{S}^t) + f^{\prime}(\mathbf{x}_{\bar{S}}^t, \mathbf{x}_{S}^t)(\mathbf{x}_{\bar{S}}-\mathbf{x}_{\bar{S}}^t) + o(\mathbf{x}_{\bar{S}}-\mathbf{x}_{\bar{S}}^t),
    \end{equation}
    where $o(\mathbf{x}_{\bar{S}}-\mathbf{x}_{\bar{S}}^t)\leq \epsilon_1^{\prime}$, $\epsilon_1^{\prime}$ is a small positive number.
\end{assumption}

\begin{assumption}\label{asp-3}
    The contribution value $v(S)$ of any subset $S$ and the output value $\hat{y}=f(\mathbf{x})$ are bounded within $[0,1]$.
\end{assumption}
Assumption~\ref{asp-1} holds as EBM can be flexibly parameterized for arbitrary probability distributions~\cite{zhai16, Grathwohl2020Your}. More specifically, the proposed GRU coupled energy-based model consists of an energy network and a GRU network. The energy network is an MLP with a skip connection and the GRU network is a type of gated recurrent network. According to Theorem 2.5 in Ref.\cite{ma2020rademacher}, with weighted path norm uniformly bounded, by setting appropriate network width and depth, MLP with skip connection can approximate any function in Barron space\footnote{Barron space is a function space that consists of functions which can be approximated by smooth combinations of simple basis functions}. On the other hand, Theorem 3.2 in Ref.\cite{kusupati2018fastgrnn} shows the universal approximation ability of GRU.

\begin{remark}
In practical applications, Assumption~\ref{asp-1} may not hold strictly, particularly in high-dimensional or non-stationary data settings where conditional density estimation is inherently challenging. For high-dimensional case, increasing the amount of training data can alleviate this problem to some extent. For non-stationary data, Assumption~\ref{asp-1} will be violated and the estimation accuracy will deteriorate if proper measures are not considered. A potential way to mitigate this scenario is to introduce standard preprocessing techniques such as differencing or detrending, another solution is to develop an adaptive version of EmSHAP. Furthermore, Assumption~\ref{asp-3} enforces boundedness of $v(S)$ and $\hat{y}$ within $[0, 1]$, which simplifies the theoretical error bound. In practice, this assumption can be easily achieved through normalization or scaling to ensure that the values remain within the specified range.
\end{remark}

Before analyzing the upper error bound of EmSHAP, we first demonstrate the asymptotic convergence of $\mathbb{E}_{p_{\theta}(\mathbf{x}_{\bar{S}}|\mathbf{x}_S=\mathbf{x}_S^t)}[ f(\mathbf{x}_{\bar{S}},\mathbf{x}_S^t) ]$ and $\mathbb{E}_{p_{data}(\mathbf{x}^t_{\bar{S}}|\mathbf{x}_S=\mathbf{x}_S^t)}[ f(\mathbf{x}^t_{\bar{S}},\mathbf{x}_S^t) ]$, followed by the derivation of their error bounds.

\subsection{Asymptotic convergence of $\mathbb{E}_{p_{\theta}(\mathbf{x}_{\bar{S}}|\mathbf{x}_S=\mathbf{x}_S^t)}[ f(\mathbf{x}_{\bar{S}},\mathbf{x}_S^t) ]$ and $\mathbb{E}_{p_{data}(\mathbf{x}^t_{\bar{S}}|\mathbf{x}_S=\mathbf{x}_S^t)}[ f(\mathbf{x}^t_{\bar{S}},\mathbf{x}_S^t) ]$}\label{Sec-5-1}
\begin{proposition}\label{pro-1}
(Consistency) From the weak law of large numbers, when the sampling time $K$ is large enough, $\frac{1}{K}\sum_{k=1}^K f(\mathbf{x}_{\bar{S}}^{(k)},\mathbf{x}_S^t)$ is convergent by probability to the conditional expectation $\mathbb{E}_{p_{\theta}(\mathbf{x}_{\bar{S}}|\mathbf{x}_S=\mathbf{x}_S^t)}[f(\mathbf{x}_{\bar{S}},\mathbf{x}_S^t)]$, that is, for any positive $\epsilon_2$, we have,
\end{proposition}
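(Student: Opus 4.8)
The plan is to recognize the statement as a direct instance of the weak law of large numbers applied to the i.i.d.\ sequence produced by the Monte Carlo sampler, and to make the convergence quantitative through Chebyshev's inequality. First I would fix the conditioning value $\mathbf{x}_S^t$ and set $Y_k := f(\mathbf{x}_{\bar{S}}^{(k)},\mathbf{x}_S^t)$ for $k=1,\dots,K$. Since the draws $\mathbf{x}_{\bar{S}}^{(k)} \sim p_\theta(\mathbf{x}_{\bar{S}}|\mathbf{x}_S=\mathbf{x}_S^t)$ are independent and identically distributed and $\mathbf{x}_S^t$ is held fixed, the $Y_k$ are themselves i.i.d.\ with common mean $\mu := \mathbb{E}_{p_\theta(\mathbf{x}_{\bar{S}}|\mathbf{x}_S=\mathbf{x}_S^t)}[f(\mathbf{x}_{\bar{S}},\mathbf{x}_S^t)]$, which is precisely the limit asserted in the proposition.

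Next I would verify the moment condition that the weak law requires. By Assumption~\ref{asp-3}, the model output $f(\mathbf{x})$ is confined to $[0,1]$, so each $Y_k \in [0,1]$; this guarantees both a finite mean and a uniformly bounded variance, namely $\mathrm{Var}(Y_k) \le 1/4$ for any random variable supported on the unit interval. Writing $\bar{Y}_K := \frac{1}{K}\sum_{k=1}^K Y_k$ and using independence to obtain $\mathrm{Var}(\bar{Y}_K)=\mathrm{Var}(Y_1)/K$, Chebyshev's inequality then yields
\[ P\!\left(\left|\bar{Y}_K-\mu\right|\ge \epsilon_2\right) \le \frac{\mathrm{Var}(\bar{Y}_K)}{\epsilon_2^2} \le \frac{1}{4K\epsilon_2^2} \xrightarrow{K\to\infty} 0, \]
for every fixed $\epsilon_2>0$, which is exactly the convergence in probability claimed.

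The main (indeed essentially the only) nontrivial point is the integrability/finite-variance hypothesis, and here it is supplied cleanly by the boundedness in Assumption~\ref{asp-3}; without such a bound one would instead invoke the general WLLN under only a first-moment assumption, which requires a truncation argument rather than the one-line Chebyshev estimate. Beyond this, I would only need to confirm that the Monte Carlo samples are genuine i.i.d.\ draws from $p_\theta$, which follows directly from the sampling scheme underlying Eq.~(\ref{eq-3-0}). I therefore expect the argument to be short, with the explicit rate $\tfrac{1}{4K\epsilon_2^2}$ emerging as a convenient by-product that feeds into the subsequent error-bound analysis of Theorem~1.
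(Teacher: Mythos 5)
Your proof is correct and follows essentially the same route as the paper's: both treat the Monte Carlo draws as i.i.d.\ samples, compute $\mathrm{Var}(\bar{Y}_K)=\mathrm{Var}(Y_1)/K$, and conclude via Chebyshev's inequality as $K\to\infty$. The only difference is cosmetic: the paper posits an unspecified finite variance $\sigma^2$, whereas you justify it explicitly (with the bound $1/4$) from the boundedness in Assumption~\ref{asp-3}, which is a slightly cleaner grounding of the same argument.
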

\begin{equation}\label{eq-4-1}
\begin{small}
    \begin{aligned}
    \mathop{\lim}_{K \rightarrow +\infty} \mathbb{P} \left( \left| \frac{1}{K}\sum_{k=1}^K f(\mathbf{x}_{\bar{S}}^{(k)},\mathbf{x}_S^t)-   \mathbb{E}_{p_{\theta}(\mathbf{x}_{\bar{S}}|\mathbf{x}_S=\mathbf{x}_S^t)} \left[f(\mathbf{x}_{\bar{S}},\mathbf{x}_S^t) \right] \right| \leq \epsilon_2 \right)
     = 1.
    \end{aligned}
\end{small}
\end{equation}
\begin{proof}
    Let $\bar{f}(\mathbf{x}^{(k)}_{\bar{S}},\mathbf{x}_S^t)=\frac{1}{K}\sum_{k=1}^K f(\mathbf{x}_{\bar{S}}^{(k)},\mathbf{x}_S^t)$ and $\text{Var}(f(\mathbf{x}_{\bar{S}}^{(k)},\mathbf{x}_S^t))$ $=\sigma$ for all $k$, the variance of $\bar{f}(\mathbf{x}^{(k)}_{\bar{S}},\mathbf{x}_S^t)$ is as follows.
    \begin{equation}
    \begin{small}
    \begin{aligned}
         &\quad \text{Var}\left( \bar{f}(\mathbf{x}^{(k)}_{\bar{S}},\mathbf{x}_S^t) \right)
         = \text{Var}\left( \frac{1}{K}\sum_{k=1}^K f(\mathbf{x}_{\bar{S}}^{(k)},\mathbf{x}_S^t) \right)
         \\& = \frac{1}{K^2}\text{Var}\left( \sum_{k=1}^K f(\mathbf{x}_{\bar{S}}^{(k)},\mathbf{x}_S^t) \right)
         = \frac{K\sigma^2}{K^2} = \frac{\sigma^2}{K}.
    \end{aligned}
    \end{small}
    \end{equation}
    By using Chebyshev's inequality~\cite{Alsmeyer2011} on $\bar{f}(\mathbf{x}^{(k)}_{\bar{S}},\mathbf{x}_S^t)$, we have,

    \begin{equation}
    \begin{small}
            \mathbb{P}\left(\left| \bar{f}(\mathbf{x}^{(k)}_{\bar{S}},\mathbf{x}_S^t) - \mathbb{E}_{p_{\theta}(\mathbf{x}_{\bar{S}}|\mathbf{x}_S=\mathbf{x}_S^t)} f(\mathbf{x}_{\bar{S}},\mathbf{x}_S^t) \right| \leq \epsilon_2 \right) \geq 1-\frac{\sigma^2}{K\epsilon_2^2}.
    \end{small}
    \end{equation}
    As $K$ approaches infinity, Eq.(\ref{eq-4-1}) can be obtained.
\end{proof}

\begin{proposition}\label{pro-2}
The expectation of estimated conditional density $p_\theta(\mathbf{x}_{\bar{S}}|\mathbf{x}_S=\mathbf{x}_S^t)$ is convergent by probability to the expectation of true data-generating density $p_{data}(\mathbf{x}^t_{\bar{S}}|\mathbf{x}_S=\mathbf{x}_S^t)$, that is, for a positive $\epsilon_2$ that is related to $\epsilon_1$ and $\epsilon_1^{\prime}$, we have,
\end{proposition}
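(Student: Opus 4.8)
The plan is to control the gap between the two conditional expectations using the density-ratio tolerance $\epsilon_1$ supplied by Assumption~\ref{asp-1}, and then match $\epsilon_1$ to the target tolerance $\epsilon_2$. First I would write the difference as a single integral against the difference of the two densities,
\begin{equation}
\begin{small}
\mathbb{E}_{p_{\theta}}\left[f(\mathbf{x}_{\bar{S}},\mathbf{x}_S^t)\right] - \mathbb{E}_{p_{data}}\left[f(\mathbf{x}_{\bar{S}},\mathbf{x}_S^t)\right] = \int f(\mathbf{x}_{\bar{S}},\mathbf{x}_S^t)\bigl(p_\theta(\mathbf{x}_{\bar{S}}|\mathbf{x}_S^t) - p_{data}(\mathbf{x}_{\bar{S}}|\mathbf{x}_S^t)\bigr)d\mathbf{x}_{\bar{S}},
\end{small}
\end{equation}
observing that $\mathbf{x}_{\bar{S}}$ and $\mathbf{x}^t_{\bar{S}}$ merely serve as the same dummy integration variable in the two expectations, so the integrand is the identical function $f$.

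Next I would factor the density difference as $p_{data}(\mathbf{x}_{\bar{S}}|\mathbf{x}_S^t)\bigl(\tfrac{p_\theta}{p_{data}}-1\bigr)$ and invoke Assumption~\ref{asp-1} to obtain the pointwise bound $|p_\theta - p_{data}| \leq \epsilon_1\, p_{data}$. Moving the absolute value inside the integral and applying Assumption~\ref{asp-3}, which guarantees $|f(\mathbf{x}_{\bar{S}},\mathbf{x}_S^t)| \leq 1$, the factor $f$ drops out and $p_{data}$ integrates to one, giving the core estimate
\begin{equation}
\begin{small}
\left|\mathbb{E}_{p_{\theta}}\left[f(\mathbf{x}_{\bar{S}},\mathbf{x}_S^t)\right] - \mathbb{E}_{p_{data}}\left[f(\mathbf{x}_{\bar{S}},\mathbf{x}_S^t)\right]\right| \leq \epsilon_1 \int p_{data}(\mathbf{x}_{\bar{S}}|\mathbf{x}_S^t)\,d\mathbf{x}_{\bar{S}} = \epsilon_1.
\end{small}
\end{equation}

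Finally, because Assumption~\ref{asp-1} is available for \emph{any} small positive $\epsilon_1$, given the target $\epsilon_2$ I would take $\epsilon_1 \leq \epsilon_2$ so that the event inside the probability holds with certainty, yielding the stated probability-$1$ conclusion in parallel with Proposition~\ref{pro-1}. I expect the main obstacle to be interpretive rather than computational: once the model is trained the quantity $|\mathbb{E}_{p_\theta}[f]-\mathbb{E}_{p_{data}}[f]|$ is deterministic, so the ``convergence in probability'' wording is really a restatement of the deterministic bound $\leq \epsilon_1$ together with the fact that $\epsilon_1$ can be driven to zero; care is needed to state cleanly how $\epsilon_1$ is matched to $\epsilon_2$. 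I would also note that Assumption~\ref{asp-2} is \emph{not} needed here, as the two expectations share the same integrand $f$ — the Taylor expansion will instead be invoked later, when the sampled $\mathbf{x}_{\bar{S}}$ are linked to the true $\mathbf{x}_{\bar{S}}^t$ in the main error-bound theorem.
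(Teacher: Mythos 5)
Your proof is correct, but it takes a genuinely different route from the paper's. The paper proves Proposition~\ref{pro-2} by invoking Assumption~\ref{asp-2}: it Taylor-expands $f(\mathbf{x}_{\bar{S}},\mathbf{x}_S^t)$ around $(\mathbf{x}^t_{\bar{S}},\mathbf{x}^t_S)$ inside the $p_\theta$-expectation, bounds the resulting zeroth-order difference term by $\epsilon_1$ via Assumption~\ref{asp-1}, kills the first-order term using the unbiasedness claim $\mathbb{E}\left[ \mathbf{x}_{\bar{S}}-\mathbf{x}^t_{\bar{S}} \right] = 0$, bounds the remainder by $\epsilon_1^{\prime}$, and then absorbs $\epsilon_1 + \epsilon_1^{\prime}$ into $\epsilon_2$. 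You instead treat $\mathbf{x}_{\bar{S}}$ and $\mathbf{x}^t_{\bar{S}}$ as the same dummy integration variable, factor the density difference as $p_{data}\bigl(\tfrac{p_\theta}{p_{data}}-1\bigr)$, and use only Assumption~\ref{asp-1} together with the bound $|f|\leq 1$ from Assumption~\ref{asp-3}; Assumption~\ref{asp-2} never enters. Your route is the more elementary and, frankly, the more watertight one: the paper's Taylor step presupposes a coupling in which the integration variable $\mathbf{x}_{\bar{S}}$ is simultaneously a sample from $p_\theta$ and a perturbation of the fixed true value $\mathbf{x}^t_{\bar{S}}$, and even then its claim that the zeroth-order term is bounded by $\epsilon_1$ implicitly needs exactly your ratio-plus-boundedness argument to be justified. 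What the paper's heavier machinery buys is structural consistency: the same Taylor-expansion-plus-unbiasedness template is reused in Proposition~\ref{prop-1-1} and in the approximation-error step of Theorem~\ref{thm-1}, where the sampled $\mathbf{x}_{\bar{S}}$ and the true $\mathbf{x}^t_{\bar{S}}$ genuinely must be linked inside a single absolute value — precisely the point you flag at the end of your proposal. Your observation that the ``convergence in probability'' phrasing is really a deterministic bound (the quantity inside the probability is non-random once the model is trained) is also accurate, and is a looseness the paper glosses over rather than resolves.
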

\begin{equation}\label{eq-4-2}
\begin{small}
\begin{aligned}
    \mathbb{P} & \left(  \left|  \mathbb{E}_{p_\theta(\mathbf{x}_{\bar{S}}|\mathbf{x}_S=\mathbf{x}_S^t)}\left[ f(\mathbf{x}_{\bar{S}},\mathbf{x}_S^t)\right] - \mathbb{E}_{p_{data}(\mathbf{x}^t_{\bar{S}}|\mathbf{x}_S=\mathbf{x}_S^t)} \left[ f(\mathbf{x}^t_{\bar{S}},\mathbf{x}_S^t) \right] \right| \leq \epsilon_2 \right)=1.
\end{aligned}
\end{small}
\end{equation}
\begin{proof}
    To prove Proposition~\ref{pro-2}, Taylor expansion for the expectation function is used. The absolute value of the difference between the estimated conditional expectation and the true conditional expectation is as follows.
    \begin{equation}\label{eq-4-3}
    \begin{small}
        \begin{aligned}
            &\quad\left| \mathbb{E}_{p_\theta(\mathbf{x}_{\bar{S}}|\mathbf{x}_S=\mathbf{x}_S^t)}\left[ f(\mathbf{x}_{\bar{S}},\mathbf{x}_S^t) \right] - \mathbb{E}_{ p_{data}(\mathbf{x}^t_{\bar{S}}|\mathbf{x}_S=\mathbf{x}_S^t)}\left[ f(\mathbf{x}^t_{\bar{S}},\mathbf{x}_S^t) \right] \right| \\
            & = \left|\mathbb{E}_{p_\theta(\mathbf{x}_{\bar{S}}|\mathbf{x}_S=\mathbf{x}_S^t)}\left[ \left( f(\mathbf{x}^t_{\bar{S}},\mathbf{x}^t_S) + f^{\prime}(\mathbf{x}^t_{\bar{S}},\mathbf{x}^t_S)(\mathbf{x}_{\bar{S}}-\mathbf{x}^t_{\bar{S}})+ o(\mathbf{x}_{\bar{S}}-\mathbf{x}^t_{\bar{S}} )\right) \right] \right.
            \\
            &\quad \left.- \mathbb{E}_{p_{data}(\mathbf{x}^t_{\bar{S}}|\mathbf{x}_S=\mathbf{x}_S^t)}\left[ f(\mathbf{x}^t_{\bar{S}},\mathbf{x}_S^t)\right] \right| \\
            & = \left| \mathbb{E}_{p_\theta(\mathbf{x}_{\bar{S}}|\mathbf{x}_S=\mathbf{x}_S^t)}\left[ f(\mathbf{x}^t_{\bar{S}},\mathbf{x}^t_S)\right]- \mathbb{E}_{p_{data}(\mathbf{x}^t_{\bar{S}}|\mathbf{x}_S=\mathbf{x}_S^t)}\left[ f(\mathbf{x}^t_{\bar{S}},\mathbf{x}_S^t) \right]  \right.\\
            & \quad\left.+\mathbb{E}_{p_\theta(\mathbf{x}_{\bar{S}}|\mathbf{x}_S=\mathbf{x}_S^t)}\left[ f^{\prime}(\mathbf{x}^t_{\bar{S}},\mathbf{x}^t_S)(\mathbf{x}_{\bar{S}}-\mathbf{x}^t_{\bar{S}})+ o(\mathbf{x}_{\bar{S}}-\mathbf{x}^t_{\bar{S}} )\right] \right|.
        \end{aligned}
    \end{small}
    \end{equation}

According to Assumption~\ref{asp-1}, the first term of the last equation in Eq.(\ref{eq-4-3}) is bounded by $\epsilon_1$, that is, $\left|\mathbb{E}_{p_\theta(\mathbf{x}_{\bar{S}}|\mathbf{x}_S=\mathbf{x}_S^t)}\left[ f(\mathbf{x}^t_{\bar{S}},\mathbf{x}^t_S)\right] - \mathbb{E}_{p_{data}(\mathbf{x}^t_{\bar{S}}|\mathbf{x}_S=\mathbf{x}_S^t) }\left[ f(\mathbf{x}^t_{\bar{S}},\mathbf{x}_S^t)\right]\right| \leq \epsilon_1$. In addition, since the parameters of $p_{\theta}(\mathbf{x}_{\bar{S}}|\mathbf{x}_S=\mathbf{x}_{S}^t)$ are unbiased estimates of $p_{data}(\mathbf{x}^t_{\bar{S}}|\mathbf{x}_S=\mathbf{x}_{S}^t)$, we have $ \mathbb{E}\left[ \mathbf{x}_{\bar{S}}-\mathbf{x}^t_{\bar{S}} \right] = 0$ and $\mathbb{E}\left[ o(\mathbf{x}_{\bar{S}}-\mathbf{x}^t_{\bar{S}}) \right] \leq \epsilon_1^{\prime}$.
Thus, for the given smalls positive $\epsilon_1$ and $\epsilon_1^{\prime}$, there always exists a small positive $\epsilon_2$ and Eq.(\ref{eq-4-2}) can be obtained.
\end{proof}

Proposition~\ref{pro-1} and Proposition~\ref{pro-2} imply the convergence of Monte Carlo sampling and the proposed EmSHAP. Furthermore, the following corollary can be obtained.
\begin{corollary}\label{cly-1}
     When the sampling number $K$ goes to infinity, the mean of the samplings converges to the true conditional expectation, for a given positive number $\epsilon_2$, we have,
\end{corollary}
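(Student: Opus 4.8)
The plan is to obtain Corollary~\ref{cly-1} as a direct consequence of Proposition~\ref{pro-1} and Proposition~\ref{pro-2}, chained together by a triangle inequality. The corollary asserts convergence in probability of the Monte Carlo sample mean $\frac{1}{K}\sum_{k=1}^K f(\mathbf{x}_{\bar{S}}^{(k)},\mathbf{x}_S^t)$ to the \emph{true} conditional expectation $\mathbb{E}_{p_{data}(\mathbf{x}^t_{\bar{S}}|\mathbf{x}_S=\mathbf{x}_S^t)}\left[ f(\mathbf{x}^t_{\bar{S}},\mathbf{x}_S^t)\right]$, and the two propositions already span exactly this gap through the intermediate quantity $\mathbb{E}_{p_\theta(\mathbf{x}_{\bar{S}}|\mathbf{x}_S=\mathbf{x}_S^t)}\left[ f(\mathbf{x}_{\bar{S}},\mathbf{x}_S^t)\right]$, i.e. the expectation under the model density.

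First I would insert the model expectation as an intermediate term and apply the triangle inequality:
\begin{equation}
\begin{small}
\begin{aligned}
&\left| \frac{1}{K}\sum_{k=1}^K f(\mathbf{x}_{\bar{S}}^{(k)},\mathbf{x}_S^t) - \mathbb{E}_{p_{data}(\mathbf{x}^t_{\bar{S}}|\mathbf{x}_S=\mathbf{x}_S^t)}\left[ f(\mathbf{x}^t_{\bar{S}},\mathbf{x}_S^t)\right] \right| \\
&\leq \left| \frac{1}{K}\sum_{k=1}^K f(\mathbf{x}_{\bar{S}}^{(k)},\mathbf{x}_S^t) - \mathbb{E}_{p_\theta(\mathbf{x}_{\bar{S}}|\mathbf{x}_S=\mathbf{x}_S^t)}\left[ f(\mathbf{x}_{\bar{S}},\mathbf{x}_S^t)\right] \right| \\
&\quad + \left| \mathbb{E}_{p_\theta(\mathbf{x}_{\bar{S}}|\mathbf{x}_S=\mathbf{x}_S^t)}\left[ f(\mathbf{x}_{\bar{S}},\mathbf{x}_S^t)\right] - \mathbb{E}_{p_{data}(\mathbf{x}^t_{\bar{S}}|\mathbf{x}_S=\mathbf{x}_S^t)}\left[ f(\mathbf{x}^t_{\bar{S}},\mathbf{x}_S^t)\right] \right|.
\end{aligned}
\end{small}
\end{equation}
I would then control the two summands separately. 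Proposition~\ref{pro-1} guarantees that the first term is at most $\epsilon_2$ with probability tending to $1$ as $K\to\infty$ (the Monte Carlo error), while Proposition~\ref{pro-2} guarantees that the second term is at most $\epsilon_2$ with probability $1$ (the model-versus-data error), the latter being in fact a deterministic bound since both expectations are fixed numbers independent of $K$.

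Denoting these two events by $A_K$ and $B$, I would combine them through the elementary bound $\mathbb{P}(A_K \cap B) \geq \mathbb{P}(A_K) + \mathbb{P}(B) - 1$; since $\mathbb{P}(B)=1$ this collapses to $\mathbb{P}(A_K \cap B) \geq \mathbb{P}(A_K)$, and on $A_K \cap B$ the displayed left-hand side is bounded by $2\epsilon_2$. Passing to the limit then yields
\begin{equation}
\begin{small}
\mathop{\lim}_{K \rightarrow +\infty} \mathbb{P}\left( \left| \frac{1}{K}\sum_{k=1}^K f(\mathbf{x}_{\bar{S}}^{(k)},\mathbf{x}_S^t) - \mathbb{E}_{p_{data}(\mathbf{x}^t_{\bar{S}}|\mathbf{x}_S=\mathbf{x}_S^t)}\left[ f(\mathbf{x}^t_{\bar{S}},\mathbf{x}_S^t)\right] \right| \leq 2\epsilon_2 \right) = 1.
\end{small}
\end{equation}

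I expect the only genuine subtlety to be the bookkeeping involved in combining a truly probabilistic statement (Proposition~\ref{pro-1}, whose probability merely converges to $1$) with the probability-one, deterministic statement (Proposition~\ref{pro-2}); the union-bound step must be written so that the $\liminf$ of $\mathbb{P}(A_K)$ is preserved and not accidentally discarded. A second, purely cosmetic point is that the accumulated tolerance is $2\epsilon_2$ rather than $\epsilon_2$: since $\epsilon_2>0$ is arbitrary this is immaterial to the conclusion, but if a clean $\epsilon_2$ is preferred one can simply invoke each proposition with tolerance $\epsilon_2/2$ from the outset.
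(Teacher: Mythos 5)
Your proposal is correct and takes essentially the same route as the paper: the paper states Corollary~\ref{cly-1} without an explicit proof, presenting it as a direct consequence of Proposition~\ref{pro-1} and Proposition~\ref{pro-2}, and your triangle-inequality chaining through the intermediate quantity $\mathbb{E}_{p_\theta(\mathbf{x}_{\bar{S}}|\mathbf{x}_S=\mathbf{x}_S^t)}\left[ f(\mathbf{x}_{\bar{S}},\mathbf{x}_S^t)\right]$ is precisely the argument the paper leaves implicit. Your additional care with the probability bookkeeping (the bound $\mathbb{P}(A_K \cap B) \geq \mathbb{P}(A_K) + \mathbb{P}(B) - 1$ and invoking each proposition at tolerance $\epsilon_2/2$ to recover a clean $\epsilon_2$) is sound and, if anything, more explicit than the paper itself.
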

\begin{equation}\label{eq-4-4}
\begin{small}
\begin{aligned}
    \mathop{\lim}_{k \rightarrow \infty} \mathbb{P} \left( \left| \frac{1}{K}\sum_{k=1}^K f(\mathbf{x}_{\bar{S}}^{(k)},\mathbf{x}_S^t) - \mathbb{E}_{p_{data}(\mathbf{x}^t_{\bar{S}}|\mathbf{x}_S=\mathbf{x}_S^t)}\left[ f(\mathbf{x}^t_{\bar{S}},\mathbf{x}_S^t) \right] \right| \leq \epsilon_2 \right) \\
    = 1.
\end{aligned}
\end{small}
\end{equation}
The larger $K$ is, the closer the mean of the samplings is to the true conditional expectation, which indicates that the lower bound of the difference between the estimated conditional expectation and the true conditional function is zero.

\subsection{Error bound of approximating $\mathbb{E}_{p_{\theta}(\mathbf{x}_{\bar{S}}|\mathbf{x}_S=\mathbf{x}_S^t)}[ f(\mathbf{x}_{\bar{S}},\mathbf{x}_S^t) ]$ and $\mathbb{E}_{p_{data}(\mathbf{x}^t_{\bar{S}}|\mathbf{x}_S=\mathbf{x}_S^t)}[ f(\mathbf{x}^t_{\bar{S}},\mathbf{x}_S^t) ]$}\label{Sec-5-2}

Here the error bound of approximating $\mathbb{E}_{p_{\theta}(\mathbf{x}_{\bar{S}}|\mathbf{x}_S=\mathbf{x}_S^t)}\left[ f(\mathbf{x}_{\bar{S}},\mathbf{x}_S^t) \right]$ is first discussed.

\begin{lemma}\label{lemma-1}
(Hoeffding's inequality)~\cite{Hoeffding} Let $u_1$,...$u_K$ be i.i.d. random variables such that $a\leq u_k \leq b$. Then for any $\epsilon_3>0$,
\end{lemma}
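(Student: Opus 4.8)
The plan is to establish the standard two-sided concentration inequality
\begin{equation*}
\mathbb{P}\left( \left| \frac{1}{K}\sum_{k=1}^{K} u_k - \mathbb{E}\left[ \frac{1}{K}\sum_{k=1}^{K} u_k \right] \right| \geq \epsilon_3 \right) \leq 2\exp\left( -\frac{2K\epsilon_3^2}{(b-a)^2} \right)
\end{equation*}
by the exponential moment (Chernoff) method. Writing $\bar{u}=\frac{1}{K}\sum_{k=1}^{K}u_k$, I would first treat only the upper tail $\mathbb{P}(\bar{u}-\mathbb{E}[\bar{u}]\geq\epsilon_3)$; the lower tail $\mathbb{P}(\bar{u}-\mathbb{E}[\bar{u}]\leq-\epsilon_3)$ follows by running the identical argument on the variables $-u_k$, which lie in $[-b,-a]$, and a union bound over the two tails then supplies the factor of $2$.

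For the upper tail, the central step is Markov's inequality applied to the exponentiated deviation: for every $s>0$,
\begin{equation*}
\mathbb{P}\left( \bar{u}-\mathbb{E}[\bar{u}] \geq \epsilon_3 \right) \leq e^{-s\epsilon_3}\, \mathbb{E}\left[ e^{s(\bar{u}-\mathbb{E}[\bar{u}])} \right] = e^{-s\epsilon_3}\prod_{k=1}^{K} \mathbb{E}\left[ e^{\frac{s}{K}(u_k-\mathbb{E}[u_k])} \right],
\end{equation*}
where the factorization of the moment-generating function uses the independence of the $u_k$. Each centered variable $u_k-\mathbb{E}[u_k]$ takes values in an interval of length $b-a$, so invoking Hoeffding's lemma---that a centered random variable bounded in such an interval satisfies $\mathbb{E}[e^{\lambda X}]\leq \exp(\lambda^2(b-a)^2/8)$---bounds each factor by $\exp(s^2(b-a)^2/(8K^2))$. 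The product is therefore at most $\exp(s^2(b-a)^2/(8K))$, giving the overall bound $\exp(-s\epsilon_3+s^2(b-a)^2/(8K))$. Minimizing the exponent over $s>0$ at $s^{\ast}=4K\epsilon_3/(b-a)^2$ yields exactly $\exp(-2K\epsilon_3^2/(b-a)^2)$, as required.

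The main obstacle is the proof of Hoeffding's lemma itself, since everything else is routine once it is in hand. I would prove it by introducing the cumulant generating function $\psi(\lambda)=\log\mathbb{E}[e^{\lambda X}]$ for a centered $X\in[a,b]$ and establishing, via a second-order Taylor expansion with Lagrange remainder, that $\psi(0)=0$, $\psi'(0)=\mathbb{E}[X]=0$, and $\psi''(\lambda)\leq(b-a)^2/4$ uniformly in $\lambda$. The curvature estimate is the delicate point: $\psi''(\lambda)$ equals the variance of $X$ under the exponentially tilted probability measure, whose support still lies in $[a,b]$, and the variance of any random variable confined to an interval of length $b-a$ is at most $(b-a)^2/4$. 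Integrating this bound twice gives $\psi(\lambda)\leq\lambda^2(b-a)^2/8$, which completes the lemma and hence the whole argument.
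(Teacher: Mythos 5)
Your proof is correct, but there is nothing in the paper to compare it against: the paper states this lemma as a known result and attributes it to the cited reference (Hoeffding), offering no proof of its own. What you have written is the classical argument behind that citation --- the Chernoff exponential-moment method, factorization of the moment generating function by independence, Hoeffding's lemma for bounded centered variables via the tilted-measure variance bound, optimization over the tilt parameter, and a union bound over the two tails --- and the computations check out: with each factor bounded by $\exp\bigl(s^2(b-a)^2/(8K^2)\bigr)$, the optimal choice $s^{\ast}=4K\epsilon_3/(b-a)^2$ indeed produces the exponent $-2K\epsilon_3^2/(b-a)^2$, and the reflection $u_k\mapsto -u_k$ handles the lower tail. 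One cosmetic slip: in your sketch of Hoeffding's lemma you refer to a ``centered $X\in[a,b]$,'' whereas the centered variable $u_k-\mathbb{E}[u_k]$ lies in $[a-\mathbb{E}[u_k],\,b-\mathbb{E}[u_k]]$; since your argument only uses the length $b-a$ of the supporting interval (through the bound $(b-a)^2/4$ on the variance of any random variable confined to such an interval), this does not affect correctness. In short, the proposal is a valid, self-contained derivation of a result the paper simply imports.
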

\begin{equation}\label{eq-4-10}
\begin{small}
    \mathbb{P}\left( \left| \frac{1}{K} \sum_{k=1}^{K} \left( u_k-\mathbb{E}[u_k] \right) \right| \geq \epsilon_3 \right)\leq 2\exp\left(-\frac{2K\epsilon_3^2}{(b-a)^2}\right).
\end{small}
\end{equation}

Hoeffding's inequality is an important technique for establishing upper bounds on the probability of sums involving bounded random variables deviating significantly from their expected values. By using Lemma~\ref{lemma-1}, the following corollary is obtained.
\begin{corollary}\label{cly-3}
    (corollary of Hoeffding's inequality) For any $\epsilon_3 > 0$, we have,
\end{corollary}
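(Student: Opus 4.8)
The plan is to apply Hoeffding's inequality (Lemma~\ref{lemma-1}) directly to the Monte Carlo estimator $\hat{v}(S)=\frac{1}{K}\sum_{k=1}^K f(\mathbf{x}_{\bar{S}}^{(k)},\mathbf{x}_S^t)$ introduced in Eq.(\ref{eq-3-0}). First I would set $u_k = f(\mathbf{x}_{\bar{S}}^{(k)},\mathbf{x}_S^t)$ for $k=1,\dots,K$. Since each $\mathbf{x}_{\bar{S}}^{(k)}$ is an independent draw from the same conditional density $p_{\theta}(\mathbf{x}_{\bar{S}}|\mathbf{x}_S=\mathbf{x}_S^t)$, the variables $u_1,\dots,u_K$ are i.i.d., which verifies the hypothesis of Lemma~\ref{lemma-1}. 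Their common mean is exactly $\mathbb{E}[u_k]=\mathbb{E}_{p_{\theta}(\mathbf{x}_{\bar{S}}|\mathbf{x}_S=\mathbf{x}_S^t)}[f(\mathbf{x}_{\bar{S}},\mathbf{x}_S^t)]$, so the empirical average appearing in Hoeffding's bound coincides with the deviation of $\hat{v}(S)$ from the conditional expectation targeted in Proposition~\ref{pro-1}.

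Next I would pin down the range of the $u_k$. By Assumption~\ref{asp-3}, the output $\hat{y}=f(\mathbf{x})$ lies in $[0,1]$, so each $u_k$ satisfies $0\leq u_k\leq 1$ and we may take $a=0$ and $b=1$ in Lemma~\ref{lemma-1}. Consequently $(b-a)^2=1$, and substituting this value directly into Eq.(\ref{eq-4-10}) yields the claimed tail bound $\mathbb{P}\left(\left|\frac{1}{K}\sum_{k=1}^K f(\mathbf{x}_{\bar{S}}^{(k)},\mathbf{x}_S^t)-\mathbb{E}_{p_{\theta}(\mathbf{x}_{\bar{S}}|\mathbf{x}_S=\mathbf{x}_S^t)}[f(\mathbf{x}_{\bar{S}},\mathbf{x}_S^t)]\right|\geq\epsilon_3\right)\leq 2\exp(-2K\epsilon_3^2)$ for every $\epsilon_3>0$.

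The derivation is essentially a specialization of Lemma~\ref{lemma-1}, so there is no substantial analytic obstacle; the only points requiring care are the two hypotheses that make the lemma applicable, namely that the Monte Carlo samples are genuinely i.i.d.\ conditionally on $\mathbf{x}_S=\mathbf{x}_S^t$, and that the boundedness of $f$ from Assumption~\ref{asp-3} supplies the sharp constant $b-a=1$ rather than a looser range. Keeping this constant exact is the step that matters downstream: it is what later produces the factor $\sqrt{\pi/(2K)}$ in Theorem 1, since integrating the tail bound via $\mathbb{E}|X|=\int_0^{\infty}\mathbb{P}(|X|\geq t)\,dt$ gives $\int_0^{\infty} 2\exp(-2Kt^2)\,dt=\frac{\sqrt{\pi}}{\sqrt{2K}}$. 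I would therefore state Corollary~\ref{cly-3} as this tail bound and defer the integration of $\epsilon_3$ to the proof of the main theorem.
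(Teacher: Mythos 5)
Your proof is correct and follows essentially the same route as the paper's: identify $u_k = f(\mathbf{x}_{\bar{S}}^{(k)},\mathbf{x}_S^t)$ as i.i.d.\ draws from $p_\theta(\mathbf{x}_{\bar{S}}|\mathbf{x}_S=\mathbf{x}_S^t)$ whose common mean is the conditional expectation, invoke Assumption~\ref{asp-3} to get the range $[0,1]$, and apply Lemma~\ref{lemma-1} with $(b-a)^2=1$. Your remark about why the sharp constant matters for the $\sqrt{\pi}/\sqrt{2K}$ term in Theorem~\ref{thm-1} is a useful observation but not part of the corollary's proof itself.
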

\begin{equation}\label{eq-4-11}
\begin{small}
\begin{aligned}
    \mathbb{P}\left( \left| \frac{1}{K}\sum_{k=1}^K f(\mathbf{x}_{\bar{S}}^{(k)},\mathbf{x}_S^t) - \mathbb{E}_{ p_\theta(\mathbf{x}_{\bar{S}}|\mathbf{x}_S=\mathbf{x}_S^t)}\left[ f(\mathbf{x}_{\bar{S}},\mathbf{x}_S^t)\right] \right| \geq \epsilon_3 \right)\\
    \leq 2\exp\left(-2K\epsilon_3^2\right).
\end{aligned}
\end{small}
\end{equation}
\begin{proof}
    Since the output value is bounded within $[0,1]$ according to Assumption~\ref{asp-3}, by applying Hoeffding$^{\prime}$s inequality, we can get,

    \begin{equation}\label{eq-4-12}
    \begin{small}
    \begin{aligned}
        & \quad \mathbb{P}\left( \left| \frac{1}{K}\sum_{k=1}^{K}\left(
        f(\mathbf{x}_{\bar{S}}^{(k)},\mathbf{x}_S^t)-\mathbb{E}_{p_\theta(\mathbf{x}_{\bar{S}}|\mathbf{x}_S=\mathbf{x}_S^t)}
        [f(\mathbf{x}_{\bar{S}}^{(k)},\mathbf{x}_S^t)] \right) \right|\geq \epsilon_3 \right)  \\
        & = \mathbb{P}\left( \left| \frac{1}{K}\sum_{k=1}^{K}
        f(\mathbf{x}_{\bar{S}}^{(k)},\mathbf{x}_S^t)-\frac{1}{K}\sum_{k=1}^{K}\mathbb{E}_{p_\theta(\mathbf{x}_{\bar{S}}|\mathbf{x}_S=\mathbf{x}_S^t)}
        [f(\mathbf{x}_{\bar{S}}^{(k)},\mathbf{x}_S^t) ] \right|\geq \epsilon_3 \right)  \\
         & \leq 2\exp\left(-2K\epsilon_3^2\right).
    \end{aligned}
    \end{small}
    \end{equation}
    $\mathbf{x}^{(k)}_{\bar{S}}$ is sampled from $p_\theta(\mathbf{x}_{\bar{S}}|\mathbf{x}_S=\mathbf{x}_S^t)$ and the expectation of $f(\mathbf{x}_{\bar{S}}^{(k)},\mathbf{x}_S^t)$ is equal to $\mathbb{E}_{p_\theta(\mathbf{x}_{\bar{S}}|\mathbf{x}_S=\mathbf{x}_S^t)}[ f(\mathbf{x}_{\bar{S}},\mathbf{x}_S^t)]$ for any sample. Thus, Eq.(\ref{eq-4-11}) is obtained.
\end{proof}

The error bound of approximating $\mathbb{E}_{p_{data}(\mathbf{x}^t_{\bar{S}}|\mathbf{x}_S=\mathbf{x}_S^t)}[ f(\mathbf{x}^t_{\bar{S}},\mathbf{x}_S^t)]$ is discussed as follows.
\begin{proposition}\label{prop-1-1}
    For a given small positive $\epsilon_2$, the following inequality can be constructed,
    \begin{equation}\label{eq-4-4-0}
    \begin{small}
    \begin{aligned}
            \mathbb{E}_{p_{data}(\mathbf{x}_{\bar{S}}^t|\mathbf{x}_{S}=\mathbf{x}_{S}^t)}\left[\mathbb{E}\left| f(\mathbf{x}_{\bar{S}},\mathbf{x}_S^t)\frac{p_{\theta}(\mathbf{x}_{\bar{S}}|\mathbf{x}_S=\mathbf{x}_S^t)} {p_{data}(\mathbf{x}_{\bar{S}}^t|\mathbf{x}_S=\mathbf{x}_S^t)} - f(\mathbf{x}^t_{\bar{S}},\mathbf{x}_S^t)\right| \right] \\
            \leq \epsilon_2.
    \end{aligned}
    \end{small}
    \end{equation}
\end{proposition}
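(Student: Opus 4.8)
The plan is to reproduce the two-term decomposition already used in the proof of Proposition~\ref{pro-2}, isolating the two distinct error sources: the discrepancy between the estimated density $p_\theta$ and the true density $p_{data}$, and the displacement between a sampled $\mathbf{x}_{\bar{S}}$ and the anchor point $\mathbf{x}_{\bar{S}}^t$. Writing $r = p_\theta(\mathbf{x}_{\bar{S}}|\mathbf{x}_S=\mathbf{x}_S^t)/p_{data}(\mathbf{x}_{\bar{S}}^t|\mathbf{x}_S=\mathbf{x}_S^t)$ for the density ratio, I would first add and subtract $f(\mathbf{x}_{\bar{S}},\mathbf{x}_S^t)$ inside the absolute value and apply the triangle inequality:
\[
\left| f(\mathbf{x}_{\bar{S}},\mathbf{x}_S^t)\, r - f(\mathbf{x}_{\bar{S}}^t,\mathbf{x}_S^t) \right| \leq \left| f(\mathbf{x}_{\bar{S}},\mathbf{x}_S^t) \right| |r-1| + \left| f(\mathbf{x}_{\bar{S}},\mathbf{x}_S^t) - f(\mathbf{x}_{\bar{S}}^t,\mathbf{x}_S^t) \right|.
\]
This cleanly separates the ratio error (first summand) from the perturbation of the argument of $f$ (second summand), and both can be controlled termwise.

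For the first summand I would invoke Assumption~\ref{asp-3} to bound $|f|\leq 1$ and Assumption~\ref{asp-1} to bound $|r-1|\leq \epsilon_1$, so this contribution is at most $\epsilon_1$ \emph{pointwise}; being a deterministic bound it survives both the inner and outer expectations unchanged. For the second summand I would substitute the Taylor expansion of Assumption~\ref{asp-2}, $f(\mathbf{x}_{\bar{S}},\mathbf{x}_S^t) - f(\mathbf{x}_{\bar{S}}^t,\mathbf{x}_S^t) = f'(\mathbf{x}_{\bar{S}}^t,\mathbf{x}_S^t)(\mathbf{x}_{\bar{S}}-\mathbf{x}_{\bar{S}}^t) + o(\mathbf{x}_{\bar{S}}-\mathbf{x}_{\bar{S}}^t)$, where the remainder satisfies $|o(\cdot)|\leq \epsilon_1'$. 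Since the parameters of $p_\theta$ are unbiased by Assumption~\ref{asp-1}, one has $\mathbb{E}[\mathbf{x}_{\bar{S}}-\mathbf{x}_{\bar{S}}^t]=0$ (exactly the fact exploited in the proof of Proposition~\ref{pro-2}), so the linear term contributes nothing in expectation and only the remainder $\epsilon_1'$ persists. Collecting the surviving pieces, the double expectation is bounded by $\epsilon_1+\epsilon_1'$; because $\epsilon_1$ and $\epsilon_1'$ can be driven arbitrarily small by the approximation guarantees cited after Assumption~\ref{asp-1}, for any prescribed $\epsilon_2$ one chooses the model accuracy so that $\epsilon_1+\epsilon_1'\leq \epsilon_2$, giving the claim.

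The main obstacle I anticipate is the interaction between the absolute value and the mean-zero linear term: the cancellation $\mathbb{E}[\mathbf{x}_{\bar{S}}-\mathbf{x}_{\bar{S}}^t]=0$ concerns the \emph{signed} expectation, whereas the inequality carries $|\cdot|$ inside the expectation, and in general $\mathbb{E}|X|\neq|\mathbb{E}X|$. The cleanest resolution, matching the style of Proposition~\ref{pro-2}, is to move the absolute value outside via Jensen before exploiting unbiasedness, i.e.\ to bound $\big|\mathbb{E}[\,\cdot\,]\big|$ and then account separately for the $\epsilon_1'$ remainder; alternatively, one may argue that since $p_\theta$ is close to $p_{data}$ and both concentrate the mass of $\mathbf{x}_{\bar{S}}$ near the conditional target, the displacement $\mathbf{x}_{\bar{S}}-\mathbf{x}_{\bar{S}}^t$ is itself uniformly $O(\epsilon)$-small, so the linear term is negligible even under the absolute value. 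I would make this handling explicit so that the final bound remains rigorous rather than relying on an informal cancellation.
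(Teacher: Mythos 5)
Your proposal takes essentially the same route as the paper's proof: Taylor-expand $f$ about $\mathbf{x}^t_{\bar{S}}$, split the integrand into a density-ratio error and a displacement error, and control them via Assumptions~\ref{asp-1}--\ref{asp-3} to obtain a bound of the form $\epsilon_1 + \epsilon_1' \leq \epsilon_2$. The only difference is cosmetic: writing $r = p_{\theta}(\mathbf{x}_{\bar{S}}|\mathbf{x}_S=\mathbf{x}_S^t)/p_{data}(\mathbf{x}_{\bar{S}}^t|\mathbf{x}_S=\mathbf{x}_S^t)$, the paper groups the terms as $(r-1)f(\mathbf{x}^t_{\bar{S}},\mathbf{x}^t_S) + r\bigl(f^{\prime}(\mathbf{x}^t_{\bar{S}},\mathbf{x}^t_S)(\mathbf{x}_{\bar{S}}-\mathbf{x}^t_{\bar{S}}) + o(\mathbf{x}_{\bar{S}}-\mathbf{x}^t_{\bar{S}})\bigr)$, i.e.\ the ratio multiplies $f$ at the anchor point rather than at the sampled point as in your decomposition; both groupings work identically given $|f|\leq 1$.

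Two remarks on the obstacle you flag, both to your credit. First, the difficulty is real and is left unresolved in the paper's own proof: the paper discards the term $r\,f^{\prime}(\mathbf{x}^t_{\bar{S}},\mathbf{x}^t_S)(\mathbf{x}_{\bar{S}}-\mathbf{x}^t_{\bar{S}})$ from inside the absolute value by citing $\mathbb{E}[\mathbf{x}_{\bar{S}}-\mathbf{x}^t_{\bar{S}}]=0$, which is exactly the signed-versus-absolute confusion you describe ($\mathbb{E}|X|\neq|\mathbb{E}X|$); your write-up is more careful than the source. Second, a caveat on your first proposed repair: Jensen gives $|\mathbb{E}X|\leq\mathbb{E}|X|$, so moving the absolute value outside bounds a \emph{smaller} quantity and therefore does not prove Proposition~\ref{prop-1-1} as literally stated, with $|\cdot|$ inside the expectation. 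It does, however, suffice for the only place the proposition is invoked: in Eq.(\ref{eq-4-15}) of Theorem~\ref{thm-1} the approximation error appears as the absolute value of an inner expectation, where the signed cancellation of the mean-zero linear term is legitimate. If one insists on the statement as written, the linear term must be controlled in first absolute moment, i.e.\ one needs a bound on $\mathbb{E}|\mathbf{x}_{\bar{S}}-\mathbf{x}^t_{\bar{S}}|$, which the paper's assumptions do not literally supply --- your ``uniformly small displacement'' alternative is precisely that missing hypothesis made explicit.
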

\begin{proof}
    The proof of Proposition~\ref{prop-1-1} relies on the Taylor expansion as well, which is described as follows.
    \begin{equation}\label{eq-4-4-1}
    \begin{small}
            \begin{aligned}
                &  \quad \mathbb{E}_{p_{data}(\mathbf{x}_{\bar{S}}^t|\mathbf{x}_{S}=\mathbf{x}_{S}^t)}\left[ \left| f(\mathbf{x}_{\bar{S}},\mathbf{x}_S^t)\frac{p_{\theta}(\mathbf{x}_{\bar{S}}|\mathbf{x}_S=\mathbf{x}_S^t)} {p_{data}(\mathbf{x}_S^t|\mathbf{x}_S=\mathbf{x}_S^t)} - f(\mathbf{x}^t_{\bar{S}},\mathbf{x}_S^t)\right| \right]\\
                & = \mathbb{E}_{p_{data}(\mathbf{x}_{\bar{S}}^t|\mathbf{x}_{S}=\mathbf{x}_{S}^t)}\left[\left| \frac{p_{\theta}(\mathbf{x}_{\bar{S}}|\mathbf{x}_S=\mathbf{x}_S^t)} {p_{data}(\mathbf{x}_{\bar{S}}^t|\mathbf{x}_S=\mathbf{x}_S^t)} \left[ f(\mathbf{x}^t_{\bar{S}},\mathbf{x}_S^t) + f^{\prime}(\mathbf{x}^t_{\bar{S}},\mathbf{x}_S^t)(\mathbf{x}_{\bar{S}} - \mathbf{x}^t_{\bar{S}}) \right.\right.\right.\\
                &\quad\left.\left. \left.+ o(\mathbf{x}_{\bar{S}} - \mathbf{x}^t_{\bar{S}})\right]- f(\mathbf{x}^t_{\bar{S}},\mathbf{x}_S^t) \right| \right]\\
                & =\mathbb{E}_{p_{data}(\mathbf{x}_{\bar{S}}^t|\mathbf{x}_{S}=\mathbf{x}_{S}^t)}\left[ \left| \left(\frac{p_{\theta}(\mathbf{x}_{\bar{S}}|\mathbf{x}_S=\mathbf{x}_S^t)} {p_{data}(\mathbf{x}_{\bar{S}}^t|\mathbf{x}_S=\mathbf{x}_S^t)}-1\right)f(\mathbf{x}^t_{\bar{S}},\mathbf{x}_S^t) \right.\right.\\
                & \quad\left.\left.+ \frac{p_{\theta}(\mathbf{x}_{\bar{S}}|\mathbf{x}_S=\mathbf{x}_S^t)} {p_{data}(\mathbf{x}_{\bar{S}}^t|\mathbf{x}_S=\mathbf{x}_S^t)}\left(f^{\prime}(\mathbf{x}^t_{\bar{S}}, \mathbf{x}_S^t)(\mathbf{x}_{\bar{S}} - \mathbf{x}^t_{\bar{S}}) + o(\mathbf{x}_{\bar{S}} - \mathbf{x}^t_{\bar{S}})\right) \right|\right] \\
                & = \mathbb{E}_{p_{data}(\mathbf{x}_{\bar{S}}^t|\mathbf{x}_{S}=\mathbf{x}_{S}^t)}\left[\left| \left(\frac{p_{\theta}(\mathbf{x}_{\bar{S}}|\mathbf{x}_S=\mathbf{x}_S^t)} {p_{data}(\mathbf{x}_{\bar{S}}^t|\mathbf{x}_S=\mathbf{x}_S^t)}-1\right)f(\mathbf{x}^t_{\bar{S}},\mathbf{x}_S^t) \right.\right.\\ \notag
\end{aligned}
\end{small}
\end{equation}
\begin{equation}
\begin{small}
\begin{aligned}
                & \quad \left.\left.+ o(\mathbf{x}_{\bar{S}} - \mathbf{x}^t_{\bar{S}})\frac{p_{\theta}(\mathbf{x}_{\bar{S}}|\mathbf{x}_S=\mathbf{x}_S^t)} {p_{data}(\mathbf{x}_{\bar{S}}^t|\mathbf{x}_S=\mathbf{x}_S^t)}\right| \right] \\
                &\leq \mathbb{E}_{p_{data}(\mathbf{x}_{\bar{S}}^t|\mathbf{x}_{S}=\mathbf{x}_{S}^t)}\left[\left| \frac{p_{\theta}(\mathbf{x}_{\bar{S}}|\mathbf{x}_S=\mathbf{x}_S^t)} {p_{data}(\mathbf{x}_{\bar{S}}^t|\mathbf{x}_S=\mathbf{x}_S^t)}-1 \right| \cdot \left| f(\mathbf{x}^t_{\bar{S}},\mathbf{x}_S^t) \right|\right.\\
                & \quad \left.+\left| o(\mathbf{x}_{\bar{S}} - \mathbf{x}^t_{\bar{S}})\frac{p_{\theta}(\mathbf{x}_{\bar{S}}|\mathbf{x}_S=\mathbf{x}_S^t)} {p_{data}(\mathbf{x}_{\bar{S}}^t|\mathbf{x}_S=\mathbf{x}_S^t)}\right| \right] \\
                & \leq \epsilon_1 f(\mathbf{x}^t_{\bar{S}},\mathbf{x}_S^t) + \epsilon_1^{\prime} \leq \epsilon_2.
            \end{aligned}
    \end{small}
    \end{equation}

    According to Assumptions~\ref{asp-1} and \ref{asp-2}, $ \mathbb{E}[ \mathbf{x}_{\bar{S}}-\mathbf{x}^t_{\bar{S}}] = 0$ and $\mathbb{E}[ o(\mathbf{x}_{\bar{S}}-\mathbf{x}^t_{\bar{S}})] \leq \epsilon_1^{\prime}$, we have,
    \begin{equation}
    \begin{small}
             \frac{p_{\theta}(\mathbf{x}_S|\mathbf{x}_S=\mathbf{x}_S^t)} {p_{data}(\mathbf{x}_S^t|\mathbf{x}_S=\mathbf{x}_S^t)}\left(f^{\prime}(\mathbf{x}^t_{S}, \mathbf{x}_S^t)(\mathbf{x}_{\bar{S}} - \mathbf{x}^t_{\bar{S}}) + o(\mathbf{x}_{\bar{S}} - \mathbf{x}^t_{\bar{S}})\right) \leq \epsilon_1^{\prime}.
    \end{small}
    \end{equation}
    Then, according to Assumption~\ref{asp-3}, $f(\mathbf{x}^t_{S},\mathbf{x}_S^t)$ is bounded within $[0, 1]$, and Eq.(\ref{eq-4-4-0}) can be obtained.
\end{proof}

\subsection{Theoretical error bounds of EmSHAP and competitive methods}\label{Sec-5-3}
Based on the propositions and corollaries discussed above, the upper bound between the expectations of the mean absolute deviation for the estimation contribution function $\hat{v}(S)$ and the true contribution function $v(S)$ can be calculated as follows.
\begin{theorem}\label{thm-1}
    (EmSHAP estimation error upper bound) The expectation of the mean absolute deviation (MAD) between the estimation contribution function $\hat{v}(S)$ and the true contribution function $v(S)$ is upper bounded by
    \begin{equation}
        \begin{small}
            \mathbb{E}[|\hat{v}(S)-v(S)|] \leq \frac{\sqrt{\pi}}{\sqrt{2K}} + \epsilon_2,
        \end{small}
    \end{equation}
    where $K$ is the number of sampling and $\epsilon_2$ is a small positive number.
\end{theorem}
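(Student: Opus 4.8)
The plan is to isolate the two distinct sources of error by inserting an intermediate, deterministic quantity between the random estimator $\hat{v}(S)$ and the true target $v(S)$. Writing $p_\theta$ and $p_{data}$ as shorthand for the conditional densities $p_{\theta}(\mathbf{x}_{\bar{S}}|\mathbf{x}_S=\mathbf{x}_S^t)$ and $p_{data}(\mathbf{x}^t_{\bar{S}}|\mathbf{x}_S=\mathbf{x}_S^t)$, set $\mu_\theta := \mathbb{E}_{p_\theta}[f(\mathbf{x}_{\bar{S}},\mathbf{x}_S^t)]$, the conditional expectation under the \emph{estimated} density, and recall that $v(S)=\mathbb{E}_{p_{data}}[f(\mathbf{x}^t_{\bar{S}},\mathbf{x}_S^t)]$. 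Since $\mu_\theta$ and $v(S)$ are non-random, the triangle inequality followed by taking expectation over the Monte Carlo draws gives
\begin{equation}
\mathbb{E}\left[ |\hat{v}(S)-v(S)| \right] \leq \mathbb{E}\left[ |\hat{v}(S)-\mu_\theta| \right] + |\mu_\theta - v(S)|.
\end{equation}
The first term captures the finite-sample Monte Carlo fluctuation around the model's own mean; the second is the deterministic model (amortization) bias. These will be bounded by $\tfrac{\sqrt{\pi}}{\sqrt{2K}}$ and $\epsilon_2$ respectively and then added.

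For the bias term I would appeal directly to Proposition~\ref{prop-1-1}. By the change-of-measure identity $\mu_\theta = \mathbb{E}_{p_{data}}\!\left[ f(\mathbf{x}_{\bar{S}},\mathbf{x}_S^t)\,\tfrac{p_\theta}{p_{data}} \right]$ and Jensen's inequality $|\mathbb{E}[\cdot]|\le\mathbb{E}[|\cdot|]$,
\begin{equation}
|\mu_\theta - v(S)| \leq \mathbb{E}_{p_{data}}\!\left[ \left| f(\mathbf{x}_{\bar{S}},\mathbf{x}_S^t)\frac{p_\theta}{p_{data}} - f(\mathbf{x}^t_{\bar{S}},\mathbf{x}_S^t) \right| \right] \leq \epsilon_2,
\end{equation}
where the last inequality is exactly the content of Proposition~\ref{prop-1-1} (and is consistent with the convergence asserted in Proposition~\ref{pro-2}). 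This disposes of the bias contribution with the constant $\epsilon_2$.

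The crux is the Monte Carlo term, where I would convert the concentration statement of Corollary~\ref{cly-3} into a bound on the \emph{first absolute moment} via the layer-cake (tail-integral) identity for a nonnegative random variable,
\begin{equation}
\mathbb{E}\left[ |\hat{v}(S)-\mu_\theta| \right] = \int_0^{\infty} \mathbb{P}\left( |\hat{v}(S)-\mu_\theta| \geq t \right)\, dt.
\end{equation}
Because $f$ is bounded in $[0,1]$ by Assumption~\ref{asp-3} and the $\mathbf{x}_{\bar{S}}^{(k)}$ are i.i.d.\ draws from $p_\theta$, Corollary~\ref{cly-3} supplies $\mathbb{P}( |\hat{v}(S)-\mu_\theta| \geq t ) \leq 2\exp(-2Kt^2)$, so substituting and evaluating the Gaussian integral $\int_0^\infty e^{-at^2}dt = \tfrac12\sqrt{\pi/a}$ with $a=2K$ yields
\begin{equation}
\mathbb{E}\left[ |\hat{v}(S)-\mu_\theta| \right] \leq \int_0^{\infty} 2\exp\left( -2Kt^2 \right)\, dt = \sqrt{\frac{\pi}{2K}} = \frac{\sqrt{\pi}}{\sqrt{2K}}.
\end{equation}
Combining the two bounds gives the claimed estimate $\frac{\sqrt{\pi}}{\sqrt{2K}}+\epsilon_2$.

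The hard part will be the passage from the tail (concentration) inequality to the absolute-moment bound: one must justify the tail-integral representation and check that the Hoeffding bound is integrable over $t\in[0,\infty)$ so the Gaussian integral both converges and produces \emph{exactly} the constant $\sqrt{\pi/(2K)}$. Everything else is careful bookkeeping of the triangle-inequality split together with a direct appeal to the previously established Proposition~\ref{prop-1-1} and Corollary~\ref{cly-3}; it is worth emphasizing that the boundedness Assumption~\ref{asp-3} is precisely what makes Hoeffding applicable, while the unbiasedness Assumption~\ref{asp-1} is what keeps $\epsilon_2$ small.
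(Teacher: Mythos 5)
Your proposal is correct and follows essentially the same route as the paper's proof: the same triangle-inequality split of $\mathbb{E}[|\hat{v}(S)-v(S)|]$ into a Monte Carlo (statistical) term and a model-bias (approximation) term, the same use of Corollary~\ref{cly-3} with the tail-integral identity to produce $\frac{\sqrt{\pi}}{\sqrt{2K}}$, and the same appeal to Proposition~\ref{prop-1-1} (change of measure plus Jensen) to bound the bias by $\epsilon_2$. The only cosmetic difference is that you integrate the two-sided tail bound $2\exp(-2Kt^2)$ directly in the layer-cake integral, whereas the paper first writes $\mathbb{E}\left[|X|\right]=2\,\mathbb{E}\left[(X)_{+}\right]$ (valid because the Monte Carlo error has zero mean) and integrates the one-sided bound; both evaluations yield the identical constant.
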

\begin{proof}
$\mathbb{E}[| \hat{v}(S) - v(S) |]$ can be decomposed into two parts, namely the statistical error and the approximation error. The detailed derivation is given as follows.
\begin{equation}\label{eq-4-13}
\begin{small}
\begin{aligned}
    & \quad \mathbb{E}[|\hat{v}(S)-v(S)|] \\
    & = \mathbb{E} \left[\left| \frac{1}{K}\sum_{k=1}^K f(\mathbf{x}_{\bar{S}}^{(k)},\mathbf{x}_S^t) -  \mathbb{E}_{p_{data}(\mathbf{x}_{\bar{S}}|\mathbf{x}_S=\mathbf{x}_S^t)}\left[ f(\mathbf{x}_{\bar{S}}^t,\mathbf{x}_S^t)\right]\right| \right] \\
    & = \mathbb{E} \left[ \left| \frac{1}{K}\sum_{k=1}^K f(\mathbf{x}_{\bar{S}}^{(k)},\mathbf{x}_S^t) - \mathbb{E}_{p_\theta(\mathbf{x}_{\bar{S}}|\mathbf{x}_S=\mathbf{x}_S^t)}\left[ f(\mathbf{x}_{\bar{S}},\mathbf{x}_S^t)\right]  \right.\right.\\
    &\quad\left. \left.+ \mathbb{E}_{p_\theta(\mathbf{x}_{\bar{S}}|\mathbf{x}_S=\mathbf{x}_S^t)}\left[ f(\mathbf{x}_{\bar{S}},\mathbf{x}_S^t)\right]
    - \mathbb{E}_{p_{data}(\mathbf{x}^t_{\bar{S}}|\mathbf{x}_S=\mathbf{x}_S^t) }\left[ f(\mathbf{x}_{\bar{S}}^t,\mathbf{x}_S^t)\right]\right|\right] \\
    & \leq \mathbb{E} \left[\left| \frac{1}{K}\sum_{k=1}^K f(\mathbf{x}_{\bar{S}}^{(k)},\mathbf{x}_S^t) - \mathbb{E}_{p_\theta(\mathbf{x}_{\bar{S}}|\mathbf{x}_S=\mathbf{x}_S^t)}\left[ f(\mathbf{x}_{\bar{S}},\mathbf{x}_S^t)\right] \right| \right] \\
    &\quad + \mathbb{E} \left[\left| \mathbb{E}_{p_\theta(\mathbf{x}_{\bar{S}}|\mathbf{x}_S=\mathbf{x}_S^t)}\left[ f(\mathbf{x}_{\bar{S}},\mathbf{x}_S^t)\right]
    -  \mathbb{E}_{p_{data}(\mathbf{x}^t_{\bar{S}}|\mathbf{x}_S=\mathbf{x}_S^t)}\left[ f(\mathbf{x}_{\bar{S}}^t,\mathbf{x}_S^t)\right] \right|\right],
\end{aligned}
\end{small}
\end{equation}
where $\mathbf{x}_{\bar{S}} \sim p_{\theta}(\mathbf{x}_{\bar{S}}|\mathbf{x}_S = \mathbf{x}_S^t)$. The first term in the last inequality of Eq.(\ref{eq-4-13}) is the statistical error and the second is the approximation error. The upper bound of the statistical error is first considered.

\begin{equation}\label{eq-4-14}
\begin{small}
\begin{aligned}
    &\quad \mathbb{E} \left[\left| \frac{1}{K}\sum_{k=1}^K f(\mathbf{x}_{\bar{S}}^{(k)},\mathbf{x}_S^t) - \mathbb{E}_{p_\theta(\mathbf{x}_{\bar{S}}|\mathbf{x}_S=\mathbf{x}_S^t)}\left[ f(\mathbf{x}_{\bar{S}},\mathbf{x}_S^t)\right] \right| \right] \\
\notag
\end{aligned}
\end{small}
\end{equation}
\begin{equation}
\begin{small}
\begin{aligned}
    & = 2 \mathbb{E} \left[ \left( \frac{1}{K}\sum_{k=1}^K f(\mathbf{x}_{\bar{S}}^{(k)},\mathbf{x}_S^t) - \mathbb{E}_{p_\theta(\mathbf{x}_{\bar{S}}|\mathbf{x}_S=\mathbf{x}_S^t)}\left[ f(\mathbf{x}_{\bar{S}},\mathbf{x}_S^t)\right] \right)_{+} \right] \\
    & = 2 \int_0^{+\infty} \mathbb{P}\left( \frac{1}{K}\sum_{k=1}^K f(\mathbf{x}_{\bar{S}}^{(k)},\mathbf{x}_S^t) - \mathbb{E}_{p_\theta(\mathbf{x}_{\bar{S}}|\mathbf{x}_S=\mathbf{x}_S^t)}\left[ f(\mathbf{x}_{\bar{S}},\mathbf{x}_S^t)\right] \geq \epsilon_3 \right)d\epsilon_3 \\
    & \leq 2 \int_0^{+\infty}\exp(-2K \epsilon_3^2)d\epsilon_3 = \frac{\sqrt{\pi}}{\sqrt{2K}},
\end{aligned}
\end{small}
\end{equation}
where $(\cdot)_+$ is $max(\cdot, 0)$, the integral in Eq.(\ref{eq-4-14}) uses the equation $\mathbb{E}[u]=\int_0^{+\infty}\mathbb{P}(u>\epsilon)d\epsilon$ for a non-negative random variable $u$, the inequality in Eq.(\ref{eq-4-14}) holds according to Corollary~\ref{cly-3}. Hence, the statistical error is bounded by $\frac{\sqrt{\pi}}{\sqrt{2K}}$. The derivation of the approximation error is given as follows.
\begin{equation}\label{eq-4-15}
\begin{small}
\begin{aligned}
    & \quad \mathbb{E} \left[\left| \mathbb{E}_{p_\theta(\mathbf{x}_{\bar{S}}|\mathbf{x}_S=\mathbf{x}_S^t)}\left[ f(\mathbf{x}_{\bar{S}},\mathbf{x}_S^t)\right]
    -  \mathbb{E}_{p_{data}(\mathbf{x}_{\bar{S}}|\mathbf{x}_S=\mathbf{x}_S^t)}\left[ f(\mathbf{x}_{\bar{S}}^t,\mathbf{x}_S^t)\right] \right|\right] \\
    & = \mathbb{E} \left[ \left| \int f(\mathbf{x}_{\bar{S}},\mathbf{x}_S^t)p_\theta(\mathbf{x}_{\bar{S}}|\mathbf{x}_S=\mathbf{x}_S^t)d\mathbf{x}_{\bar{S}} \right.\right.\\
    &\quad \left.\left.- \int f(\mathbf{x}_{\bar{S}}^t,\mathbf{x}_S^t)p_{data}(\mathbf{x}^t_{\bar{S}}|\mathbf{x}_S=\mathbf{x}_S^t)d\mathbf{x}_{\bar{S}}   \right| \right] \\
    & = \mathbb{E} \left[ \left| \int f(\mathbf{x}_{\bar{S}},\mathbf{x}_S^t)\frac{p_\theta(\mathbf{x}_{\bar{S}}|\mathbf{x}_S=\mathbf{x}_S^t)}{p_{data}(\mathbf{x}^t_{\bar{S}}|\mathbf{x}_S=\mathbf{x}_S^t)}p_{data}(\mathbf{x}^t_{\bar{S}}|\mathbf{x}_S=\mathbf{x}_S^t)d\mathbf{x}_{\bar{S}}  \right.\right.\\
    &\left.\left. \quad- \int f(\mathbf{x}_{\bar{S}}^t,\mathbf{x}_S^t) p_{data}(\mathbf{x}^t_{\bar{S}}|\mathbf{x}_S=\mathbf{x}_S^t)d\mathbf{x}_{\bar{S}}\right| \right] \\
    & = \mathbb{E} \left[ \left| \mathbb{E}_{p_{data}(\mathbf{x}^t_{\bar{S}}|\mathbf{x}_S=\mathbf{x}_S^t)}\left[ \left(f(\mathbf{x}_{\bar{S}},\mathbf{x}_S^t)\frac{p_\theta(\mathbf{x}_{\bar{S}}|\mathbf{x}_S=\mathbf{x}_S^t)}{p_{data}(\mathbf{x}^t_{\bar{S}}|\mathbf{x}_S=\mathbf{x}_S^t)} -f(\mathbf{x}_{\bar{S}}^t,\mathbf{x}_S^t) \right)\right] \right| \right] \\
    & \leq \mathbb{E} \left[ \mathbb{E}_{p_{data}(\mathbf{x}^t_{\bar{S}}|\mathbf{x}_S=\mathbf{x}_S^t)} \left[ \left|f(\mathbf{x}_{\bar{S}},\mathbf{x}_S^t)\frac{p_\theta(\mathbf{x}_{\bar{S}}|\mathbf{x}_S=\mathbf{x}_S^t)}{p_{data}(\mathbf{x}^t_{\bar{S}}|\mathbf{x}_S=\mathbf{x}_S^t)}-f(\mathbf{x}_{\bar{S}}^t,\mathbf{x}_S^t) \right| \right] \right] \\
    & = \epsilon_2.
\end{aligned}
\end{small}
\end{equation}

By combining Eq.(\ref{eq-4-14}) and Eq.(\ref{eq-4-15}), the upper bound of $\mathbb{E}[|\hat{v}(S)-v(S)|]$ can be obtained, which is $\frac{\sqrt{\pi}}{\sqrt{2K}} + \epsilon_2$.
\end{proof}
We now discuss the tightness of the upper bound. The first inequality used is the triangle inequality for absolute values in Eq.(\ref{eq-4-13}). If the two functions on the right side of the inequality share the same sign (both positive or negative), that is,

\begin{equation}\label{eq-4-16}
\begin{small}
\begin{gathered}
        \left( \frac{1}{K}\sum_{k=1}^K f(\mathbf{x}_{\bar{S}}^{(k)},\mathbf{x}_S^t) - \mathbb{E}_{p_\theta(\mathbf{x}_{\bar{S}}|\mathbf{x}_S=\mathbf{x}_S^t)}\left[ f(\mathbf{x}_{\bar{S}},\mathbf{x}_S^t)\right] \right) \cdot \\
        \left( \mathbb{E}_{p_\theta(\mathbf{x}_{\bar{S}}|\mathbf{x}_S=\mathbf{x}_S^t)}\left[ f(\mathbf{x}_{\bar{S}},\mathbf{x}_S^t)\right]  - \mathbb{E}_{p_{data}(\mathbf{x}^t_{\bar{S}}|\mathbf{x}_S=\mathbf{x}_S^t)}\left[ f(\mathbf{x}_{\bar{S}}^t,\mathbf{x}_S^t) \right] \right) \geq 0,
\end{gathered}
\end{small}
\end{equation}
the bound derived from applying the absolute value triangle inequality is tight. Another inequality is the expectation inequality of absolute values in Eq.(\ref{eq-4-15}). This inequality shows that when the variable always has the same sign, the following holds.
\begin{equation}\label{eq-4-17}
\begin{small}
\begin{gathered}
\sum_{\mathbf{x}_{\bar{S}}}f(\mathbf{x}_{\bar{S}},\mathbf{x}_S^t)\frac{p_\theta(\mathbf{x}_{\bar{S}}|\mathbf{x}_S=\mathbf{x}_S^t)}{p_{data}(\mathbf{x}^t_{\bar{S}}|\mathbf{x}_S=\mathbf{x}_S^t)}-f(\mathbf{x}_{\bar{S}}^t,\mathbf{x}_S^t)
\\ = \sum_{\mathbf{x}_{\bar{S}}}\left| f(\mathbf{x}_{\bar{S}},\mathbf{x}_S^t)\frac{p_\theta(\mathbf{x}_{\bar{S}}|\mathbf{x}_S=\mathbf{x}_S^t)}{p_{data}(\mathbf{x}^t_{\bar{S}}|\mathbf{x}_S=\mathbf{x}_S^t)}-f(\mathbf{x}_{\bar{S}}^t,\mathbf{x}_S^t) \right|.
\end{gathered}
\end{small}
\end{equation}

Hence, if $\mathbf{x}_{\bar{S}}$ satisfies both Eqs.(\ref{eq-4-16}) and (\ref{eq-4-17}), the upper bound is tight.

\begin{theorem}\label{thm-2}
    (KernelSHAP estimation error upper bound) For any fixed $|D|$ and when the sampling number $K$ goes to positive infinity, under the probability greater than $1- 2((|D|+1)^2+(|D|+1)) \exp(-2K \epsilon_3^2)$, the expectation of the mean absolute deviation (MAD) between the estimation contribution function $\hat{v}(S)$ estimated by KernelSHAP~\cite{covert2020improving} and the true contribution function $v(S)$ is upper bounded by,
    \begin{equation}\label{eq-4-00}
    \begin{small}
    \begin{aligned}
        \mathbb{E}[|\hat{v}(S)-v(S)|] &\leq 2(|D|+1)(|S|+1)\frac{\sqrt{\pi}}{\sqrt{2K}}||\boldsymbol{\Sigma}^{*^{-1}}||_F \\
         & \quad + 2(|D|+1)^{\frac{5}{2}}(|S|+1)\frac{\sqrt{\pi}}{\sqrt{2K}}||\boldsymbol{\Sigma}^{*^{-1}}||^2_F+\epsilon_3^2,
    \end{aligned}
    \end{small}
    \end{equation}
    where $\epsilon_3$ is a small positive number related to $\epsilon_1$ and $\epsilon_1^{\prime}$. $\boldsymbol{\Sigma}^{*} \in \mathbb{R}^{K \times K}$ is the expectation of Shapley kernel symmetric matrix, more detailed information about this matrix can be found in Ref.~\cite{covert2020improving}.
\end{theorem}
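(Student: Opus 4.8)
The plan is to follow the same two-stage decomposition used in Theorem~\ref{thm-1} (statistical error plus approximation error), but to account for the additional error introduced by KernelSHAP's weighted-least-squares solve. First I would recall the unbiased KernelSHAP estimator of Ref.~\cite{covert2020improving}, which writes the Shapley coefficient vector as $\hat{\boldsymbol{\phi}} = \hat{\boldsymbol{\Sigma}}^{-1}\hat{\mathbf{b}}$ (up to the efficiency-correction term), where $\hat{\boldsymbol{\Sigma}}$ and $\hat{\mathbf{b}}$ are the $K$-sample Monte Carlo estimates of the population Shapley-kernel matrix $\boldsymbol{\Sigma}^{*} = \mathbb{E}[\hat{\boldsymbol{\Sigma}}]$ (with $(|D|+1)^2$ entries) and of the population right-hand side $\mathbf{b}^{*} = \mathbb{E}[\hat{\mathbf{b}}]$. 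The true contribution function $v(S)$ corresponds to the population solution $\boldsymbol{\phi}^{*} = \boldsymbol{\Sigma}^{*^{-1}}\mathbf{b}^{*}$, while $\hat{v}(S)$ is reconstructed from $\hat{\boldsymbol{\phi}}$ through the efficiency property, i.e.\ a sum of $|S|+1$ coefficients. Hence $\mathbb{E}[|\hat{v}(S)-v(S)|]$ is controlled by $(|S|+1)\,\mathbb{E}\|\hat{\boldsymbol{\phi}}-\boldsymbol{\phi}^{*}\|$, and the remaining task reduces to bounding the coefficient error.

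Second, I would bound the entrywise Monte Carlo fluctuations of $\hat{\boldsymbol{\Sigma}}$ and $\hat{\mathbf{b}}$. Because each entry is an average of $K$ i.i.d.\ bounded summands (Assumption~\ref{asp-3}), Lemma~\ref{lemma-1} gives $\mathbb{P}(|\hat{\Sigma}_{ij}-\Sigma^{*}_{ij}|\geq\epsilon_3)\leq 2\exp(-2K\epsilon_3^2)$ for each of the $(|D|+1)^2$ matrix entries and each of the $(|D|+1)$ vector entries. A union bound over all of them yields exactly the stated confidence level $1-2((|D|+1)^2+(|D|+1))\exp(-2K\epsilon_3^2)$, and simultaneously converts the entrywise deviations into Frobenius-norm deviations $\|\hat{\boldsymbol{\Sigma}}-\boldsymbol{\Sigma}^{*}\|_F$ and $\|\hat{\mathbf{b}}-\mathbf{b}^{*}\|$ at the price of $\sqrt{\cdot}$ dimension factors.

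Third, I would propagate these deviations through the matrix inverse and the matrix--vector product. Writing $\hat{\boldsymbol{\phi}}-\boldsymbol{\phi}^{*} = \hat{\boldsymbol{\Sigma}}^{-1}\hat{\mathbf{b}} - \boldsymbol{\Sigma}^{*^{-1}}\mathbf{b}^{*}$ and using the perturbation identity $\hat{\boldsymbol{\Sigma}}^{-1}-\boldsymbol{\Sigma}^{*^{-1}} = -\boldsymbol{\Sigma}^{*^{-1}}(\hat{\boldsymbol{\Sigma}}-\boldsymbol{\Sigma}^{*})\boldsymbol{\Sigma}^{*^{-1}}+\cdots$, the error splits into a term proportional to $\|\boldsymbol{\Sigma}^{*^{-1}}\|_F\,\|\hat{\mathbf{b}}-\mathbf{b}^{*}\|$ and a term proportional to $\|\boldsymbol{\Sigma}^{*^{-1}}\|_F^2\,\|\hat{\boldsymbol{\Sigma}}-\boldsymbol{\Sigma}^{*}\|_F$, which after re-inserting the dimension factors produce the $\|\boldsymbol{\Sigma}^{*^{-1}}\|_F$ and $\|\boldsymbol{\Sigma}^{*^{-1}}\|_F^2$ contributions of the claim. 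Taking expectations and applying the tail-to-expectation identity $\mathbb{E}[u]=\int_0^{\infty}\mathbb{P}(u>\epsilon)\,d\epsilon$ together with $2\int_0^{\infty}\exp(-2K\epsilon_3^2)\,d\epsilon=\frac{\sqrt{\pi}}{\sqrt{2K}}$, exactly as in Eq.~(\ref{eq-4-14}), converts each norm deviation into the $\frac{\sqrt{\pi}}{\sqrt{2K}}$ rate, while the residual $\epsilon_3^2$ collects the higher-order perturbation remainder.

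The hard part will be this third step: controlling the matrix-inversion error rigorously rather than merely to first order, since the Neumann-series remainder must be bounded uniformly on the high-probability event, and tracking how the entrywise Hoeffding bounds accumulate into Frobenius norms and the reconstruction sum so that the prefactors $(|D|+1)$ and $(|D|+1)^{5/2}$ emerge with the correct powers. Care is also needed to guarantee that $\hat{\boldsymbol{\Sigma}}$ stays invertible with the stated probability, which is precisely what forces the restriction to large $K$ and fixed $|D|$.
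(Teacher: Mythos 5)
Your proposal follows essentially the same route as the paper's proof: writing the KernelSHAP estimate as the solution of the sampled (augmented) Shapley-kernel linear system, applying Hoeffding's inequality (Lemma~\ref{lemma-1}) entrywise with a union bound over the $(|D|+1)^2$ matrix entries and $(|D|+1)$ right-hand-side entries — which is exactly where the stated confidence level comes from — then propagating the deviations through a first-order perturbation of the matrix inverse to produce the $\|\boldsymbol{\Sigma}^{*^{-1}}\|_F$ and $\|\boldsymbol{\Sigma}^{*^{-1}}\|_F^2$ terms, converting tail bounds to expectations via $\mathbb{E}[u]=\int_0^{+\infty}\mathbb{P}(u>\epsilon)\,d\epsilon$ as in Eq.~(\ref{eq-4-14}), and absorbing the higher-order remainder into $\epsilon_3^2$. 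The decomposition, the key lemma, and the source of every factor in the bound (including the $(|S|+1)$ from reconstructing $\hat{v}(S)$ by efficiency) agree with the paper's argument, so the proposal is correct and not a genuinely different proof.
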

\begin{proof}
The detailed proof can be found in Appendix A.
\end{proof}

\begin{theorem}\label{thm-3}
    (VAEAC estimation error upper bound) The expectation of the mean absolute deviation (MAD) between the contribution function $\hat{v}(S)$ estimated by VAEAC~\cite{olsen2022using} and the true contribution function $v(S)$ is upper bounded by,
    \begin{equation}
        \begin{small}
            \mathbb{E}[|\hat{v}(S)-v(S)|] \leq \frac{\sqrt{\pi}}{\sqrt{2K}} + \epsilon_2 + \delta,
        \end{small}
    \end{equation}
    where $K$ is the sampling number, $\epsilon_2$ is the probability density ratio between the estimation conditional probability $p_{\theta}(\mathbf{x}_{\bar{S}}|\mathbf{x}_S=\mathbf{x}_S^t)$ and the optimal conditional probability $p_{\theta^*}(\mathbf{x}_{\bar{S}}|\mathbf{x}_S=\mathbf{x}_S^t)$ estimated using VAE, $\delta$ is a positive number related to the amortization error of VAE.
\end{theorem}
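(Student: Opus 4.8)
The plan is to mirror the proof of Theorem~\ref{thm-1} almost verbatim, the only structural change being that the single approximation step (from $p_\theta$ to $p_{data}$) is now broken into two consecutive steps by inserting the optimal VAE conditional $p_{\theta^*}$ as an intermediate distribution. Concretely, I would start from exactly the same decomposition as in Eq.(\ref{eq-4-13}): write $\hat v(S)-v(S)$ as a telescoping sum and apply the triangle inequality to split $\mathbb{E}[|\hat v(S)-v(S)|]$ into a \emph{statistical error} (the gap between the $K$-sample Monte Carlo average and $\mathbb{E}_{p_\theta(\mathbf{x}_{\bar S}|\mathbf{x}_S=\mathbf{x}_S^t)}[f(\mathbf{x}_{\bar S},\mathbf{x}_S^t)]$) and an \emph{approximation error} (the gap between the $p_\theta$-expectation and the true $p_{data}$-expectation). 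Since VAEAC still draws its $K$ samples from its own learned conditional $p_\theta$, the statistical-error term is identical in form to Eq.(\ref{eq-4-14}); the output bound of Assumption~\ref{asp-3} and Corollary~\ref{cly-3} (Hoeffding) apply unchanged, so this term contributes $\frac{\sqrt{\pi}}{\sqrt{2K}}$ exactly as before.

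The substantive part is the approximation error. Here I would insert $p_{\theta^*}$ and apply the triangle inequality a second time,
\begin{equation}
\begin{small}
\begin{aligned}
\left| \mathbb{E}_{p_\theta}\!\left[ f(\mathbf{x}_{\bar S},\mathbf{x}_S^t)\right] - \mathbb{E}_{p_{data}}\!\left[ f(\mathbf{x}_{\bar S}^t,\mathbf{x}_S^t)\right] \right|
&\leq \left| \mathbb{E}_{p_\theta}\!\left[ f\right] - \mathbb{E}_{p_{\theta^*}}\!\left[ f\right]\right| \\
&\quad + \left| \mathbb{E}_{p_{\theta^*}}\!\left[ f\right] - \mathbb{E}_{p_{data}}\!\left[ f\right]\right|,
\end{aligned}
\end{small}
\end{equation}
so that the error is routed through the optimal VAE distribution. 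Each of the two resulting terms is then bounded by re-running the importance-weighting and Taylor-expansion argument of Proposition~\ref{prop-1-1}: for the first term I rewrite the $p_\theta$-expectation as a $p_{\theta^*}$-expectation of $f\cdot\frac{p_\theta}{p_{\theta^*}}$, expand $f(\mathbf{x}_{\bar S},\mathbf{x}_S^t)$ around $f(\mathbf{x}_{\bar S}^t,\mathbf{x}_S^t)$ via Assumption~\ref{asp-2}, and use the density-ratio bound $\bigl|\frac{p_\theta}{p_{\theta^*}}-1\bigr|$ to collapse everything to $\epsilon_2$, precisely as in Eq.(\ref{eq-4-15}). For the second term I repeat the identical computation with the ratio $\frac{p_{\theta^*}}{p_{data}}$, which by hypothesis is bounded by $\delta$; since $\mathbb{E}[\mathbf{x}_{\bar S}-\mathbf{x}_{\bar S}^t]=0$ and $\mathbb{E}[o(\mathbf{x}_{\bar S}-\mathbf{x}_{\bar S}^t)]$ is negligible, the first-order term vanishes and the amortization gap surfaces as an additive $\delta$. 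Summing the three contributions yields $\frac{\sqrt{\pi}}{\sqrt{2K}}+\epsilon_2+\delta$.

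The main obstacle, and the only genuinely new content relative to Theorem~\ref{thm-1}, is isolating the amortization error as a \emph{clean additive} $\delta$ rather than a multiplicative or coupled term. This requires verifying that the Proposition~\ref{prop-1-1} machinery still applies when the reference measure is $p_{\theta^*}$ instead of $p_{data}$—in particular, that the unbiasedness conditions on $\mathbf{x}_{\bar S}-\mathbf{x}_{\bar S}^t$ used to kill the linear Taylor term remain valid at the optimal-VAE level, and that the boundedness of $f$ on $[0,1]$ keeps the $\bigl(\frac{p_{\theta^*}}{p_{data}}-1\bigr)f$ contribution inside $\delta$. I would emphasize that this extra $\delta$ is exactly the structural penalty distinguishing VAEAC from EmSHAP: EmSHAP models the conditional directly through the EBM, so Assumption~\ref{asp-1} bounds $\frac{p_\theta}{p_{data}}$ in a single step and no amortization gap appears, whereas VAEAC's amortized encoder cannot match the true posterior for every input even at optimum, forcing the two-step decomposition and the additional $\delta$ that makes its bound strictly looser.
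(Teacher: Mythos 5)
Your proposal is correct and follows essentially the same route as the paper's proof: the statistical/approximation decomposition of Theorem~\ref{thm-1} with Hoeffding's inequality (Corollary~\ref{cly-3}) supplying the $\frac{\sqrt{\pi}}{\sqrt{2K}}$ term, and the approximation error split through the optimal VAE conditional $p_{\theta^*}$ by a second triangle inequality, each piece then collapsed to $\epsilon_2$ and $\delta$ via the density-ratio/Taylor machinery of Proposition~\ref{prop-1-1} -- exactly the two ratios the theorem statement defines. Your closing observation, that the additive $\delta$ is the amortization penalty making VAEAC's bound strictly looser than EmSHAP's, is likewise the conclusion the paper draws immediately after Theorem~\ref{thm-3}.
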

\begin{proof}
The detailed proof can be found in Appendix B.
\end{proof}

Theorems~\ref{thm-2} and~\ref{thm-3} analyzed the error bounds of two widely applied methods of KernelSHAP~\cite{covert2020improving} and VAEAC~\cite{olsen2022using}. Theorem~\ref{thm-2} shows that as the sampling time $K$ increases, the first term of the right hand side of Eq.(\ref{eq-4-00}) vanishes, hence its error bound is governed by the second term. Since $||\boldsymbol{\Sigma}^{*^{-1}}||^2_F$ is a constant, it can be seen that the error bound of KernelSHAP is highly related to the dimension of input features $D$. That is to say, the estimation accuracy will decrease when the input feature dimension increases. On the other hand, comparing Theorem~\ref{thm-1} and Theorem~\ref{thm-3}, it can be seen that the error bound of VAEAC is greater than the energy-based model as $\delta$ is always greater than zero. Hence it can be concluded that the proposed EmSHAP has a tighter error bound than KernelSHAP and VAEAC.

\begin{remark}
The above analysis differs from existing work~\cite{maleki2013bounding,covert2020improving,kolpaczki2024approximating,kelodjou2024shaping}. Previous methods primarily address marginal Shapley values, where errors stem from sampling or regression approximations. Moreover, they are often tailored to specific scenarios like correlated features~\cite{verdinelli2024feature}, distribution shifts~\cite{sebastian2024feature}, or non-parametric settings~\cite{williamson2020efficient}. While these are valuable, they remain specialized. In contrast, Theorem~\ref{thm-1} establishes an error bound for conditional Shapley values within an explicit generative modelling framework. This approach is more general in two key aspects: i) it applies to any generative model-based conditional estimator (e.g., VAEAC), and ii) when the conditional distribution is accurately modelled, the bound resembles the concentration inequalities-based bounds of marginal methods (e.g., KernelSHAP). Thus, our analysis provides a unifying framework where sampling- and regression-based approaches emerge as special cases under simplified assumptions.
\end{remark}

\subsection{Validation of Error Bound through a toy example}\label{Sec-5-4}
In order to give a more intuitive exhibition of the results in Theorem~\ref{thm-1}, a toy example is used to visualize the error bound~\ref{thm-1}.

Consider a simple example involving three inputs $\mathbf{x}=[x_1, x_2, x_3]$, with each feature $x_i (i=1, 2, 3)$ generated by a periodic function as follows.
\begin{equation}
    x_i(t) = \sin( i t \pi )+\eta_1(t).
\end{equation}

Here, $t$ is the time instance ranging from 0 to 10, and is uniformly sampled into 5,000 points, $\eta_1$ is Gaussian distributed noise with mean 0 and variance 0.001. The output $y$ is obtained by a linear combination of the features of the input vector $\mathbf{x}$ with a weight vector $\mathbf{w}=[-0.4, -1.2, 0.8]$, as follows.
\begin{equation}
    y(t)=\mathbf{w}^T \mathbf{x}(t) = -0.4x_1(t)-1.2x_2(t)+0.8x_3(t) + \eta_2(t),
\end{equation}
where $\eta_2$ is Gaussian noise with mean 0 and variance 0.001.

The first 80\% of data samples are used to train a random forest model and EmSHAP, and the remaining 20\% of the samples are used to verify Theorem~\ref{thm-1}. In this toy example, the statistical error is the difference between the predictions obtained using the mean of $K$ samples drawn from the EmSHAP output distribution and its expectation. The approximation error $\epsilon_2$ is the difference between the predictions derived from the expectation of the EmSHAP output distribution and the real data.

Fig~\ref{fig3} shows the curve of the contribution function obtained through EmSHAP under different feature combinations. From Fig~\ref{fig3}, it can be seen that the approximate error is fixed for each feature combination, and the statistical error continues to decrease with the increase in the number of samplings. And the sum of the approximate and statistical errors is always below the error boundary. This validates the correctness of Theorem~\ref{thm-1}.

\begin{figure}[htbp]
    \centering
    \includegraphics[width=\linewidth]{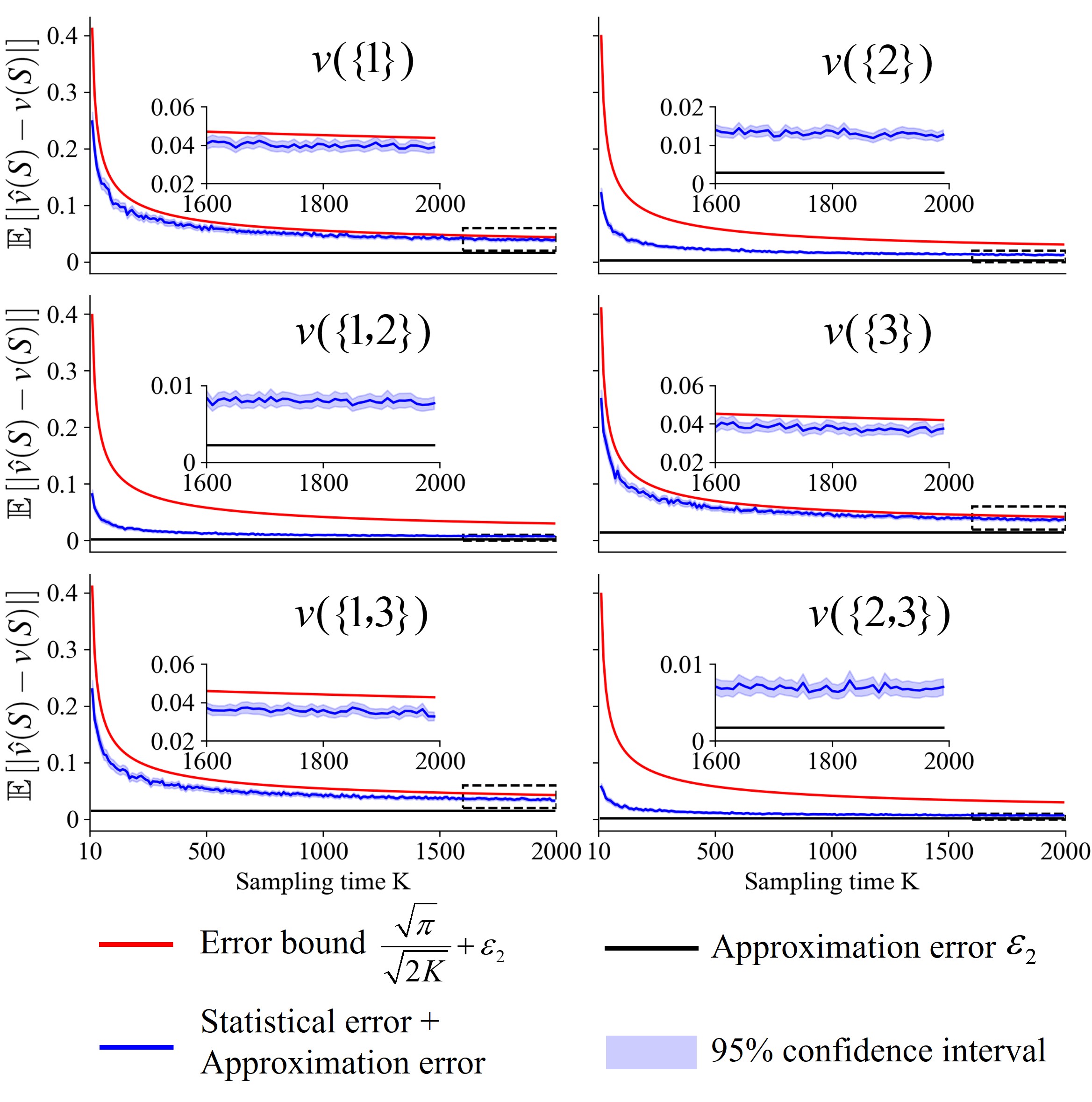}
    \caption{Toy example for validating error bound of EmSHAP.}
    \label{fig3}
\end{figure}

\section{Case studies}\label{Sec-6}
In this section, feature attribution experiments are performed on four real-world tasks: image classification on MNIST\footnote{MNIST dataset was acquired at http://yann.lecun.com/exdb/mnist/.}, tabular classification on adults' income(ADTI)\footnote{ADTI was acquired at https://archive.ics.uci.edu/dataset/2/adult.}, time-series prediction of electric transformer's oil temperature(ETT)\footnote{ETT dataset was acquired at https://github.com/zhouhaoyi/ETDataset.}, and automated medical coding on MIMIC-III\footnote{MIMIC-III dataset was acquired at https://mimic.physionet.org/.}. For the four tasks, EmSHAP is used to estimate Shapley values for feature attribution and is compared with seven competitive methods:
\begin{itemize}
  \item SamplingSHAP~\cite{Strumbelj2010explanation}: A Shapley value estimation method based on sampling from a random permutation of features.
  \item SVARM~\cite{kolpaczki2024approximating}: A sampling-based Shapley value calculation method without marginal contribution calculation.
  \item SHAP-IQ~\cite{fumagalli2023shap}: A sampling-based approximation method for calculating Shapley interaction quantification.
  \item KernelSHAP~\cite{covert2020improving}: An effective method for calculating Shapley values, which uses weighted least squares to calculate marginal Shapley values.
  \item ST-SHAP\cite{kelodjou2024shaping}: A regression-based method utilizing neighbor selection to achieve a more stable version of KernelSHAP.
  \item FastSHAP~\cite{jethani2022fastshap}: A fast Shapley value approximation method that trains an interpreter to get marginal Shapley values using a single forward pass.
  \item VAEAC~\cite{olsen2022using}: A generative model which obtains the conditional Shapley value by calculating the conditional expectation of the contribution function.
\end{itemize}

Since the true Shapley value of the real-world datasets is difficult to calculate, a quantitative metric is used to evaluate the performance of feature attributions.

\textbf{Softmax information curve (SIC AUC)}~\cite{kapishnikov2019xrai}. This metric evaluates the impact of attributed features on model output. Effective feature attribution should align closely with the model's primary concerns~\cite{zhuo2024Integrated}. The metric consists of two distinct indicators. The first indicator involves incrementally replacing the explained features with background features. Features with the highest attribution are introduced first, while those with the lowest attribution are added last, based on a predefined attribution threshold. Features with higher attribution values generally lead to more substantial improvements in model output performance. The model's output curve can be plotted using the Softmax function, and the area under this curve can be quantified using the SIC AUC-ADD index. Conversely, the second indicator progressively removes features that contribute most to the output until all features are replaced by the background~\cite{erion2021improving}. In this case, features with higher attribution values typically cause a more rapid decline in model performance, and the area under the resulting output curve is quantified using the SIC AUC-DEL index.
A more accurate explanation yields a larger SIC AUC-ADD and a smaller SIC AUC-DEL index.

\textbf{Hyperparameter selection}
Before model training, hyperparameters should be determined. For SamplingSHAP, SVARM, and SHAP-IQ, the maximum sampling number (also called evaluation budget) is set as 1000, while for KernelSHAP and ST-SHAP, the number of sampling is fixed at 1000. FastSHAP, VAEAC, and EmSHAP are neural network-based methods, and their hyperparameters are summarized in Table~\ref{table2}. In Table~\ref{table2}, all MLPs include two residual connections, and the models are trained with the Adam optimizer under a learning rate of 0.001 for 100 epochs, with batch sizes of 256 for MNIST, 64 for ADTI, 100 for ETT, and 16 for MIMIC-III. For VAEAC and EmSHAP, the number of sampling from the conditional distribution is set as 20, and in EmSHAP, the number of importance sampling from the proposal distribution is also set as 20 following Ref.~\cite{nash2019autoregressive}. The thresholds for dynamic masking vary from $\zeta_{\min}=0.2$ to $\zeta_{\max}=0.8$, and the context vector dimension $\gamma$ is fixed at 32. To balance between accuracy and computational cost, the hidden layer size is set to be 32 for VAEAC and EmSHAP, and 128 for FastSHAP. The architectures of the predictive models to be explained in different case studies are provided in Appendix~C.

\begin{table}[htbp]
\caption{Network structure of feature attribution methods on real-world datasets}
\label{table2}
\centering
\renewcommand\arraystretch{0.5}
\setlength{\tabcolsep}{2mm}{
\begin{tabular}{cccc}
\toprule
\multirowcell{2}{Method} & \multirowcell{2}{Network} & \multirowcell{2}{Prototype} & Latent \\
& & &dimensions\\
\midrule
\multirowcell{2}{FastSHAP} & Surrogate & MLP      & 128       \\
                        & Explainer   & MLP       & 128       \\
\midrule
\multirowcell{3}{VAEAC}  & Full encoder & MLP       & 32       \\
                        & Masked encoder   & MLP       & 32       \\
                        & Decoder     & MLP        & 32 \\
\midrule
\multirowcell{2}{EmSHAP} & Proposal network & GRU  & $2|D|+|\boldsymbol{\gamma}|$ \\
                        & Energy network   & MLP      & 32     \\
\bottomrule
\end{tabular}}
\end{table}

Table~\ref{table3} summarizes the feature attribution performance and computation efficiency of different methods on four real-world datasets. All experiments are performed on a workstation equipped with an Intel Xeon Gold 6326 processor, 377 GB of memory, and an RTX4080 graphics processing unit. The calculations are performed using Python 3.9 and PyTorch 2.4.1. The calculation time and memory usage for all methods are also compared and listed in Table~\ref{table3}. Table~\ref{table3} shows that EmSHAP consistently achieves the highest SIC AUC-ADD in all datasets, and the lowest SIC AUC-DEL across most datasets, indicating superior feature attribution performance. Regarding computational efficiency, methods without a training stage generally exhibit higher efficiency\footnote{Our version can be accessed
on https://github.com/icanreachyou/EmSHAP.}.

\begin{table*}[htbp]
\caption{Performance comparison of feature attribution for competitive methods on real-world datasets}
\label{table3}
\centering
\renewcommand\arraystretch{0.5}
\setlength{\tabcolsep}{2mm}{
\begin{tabular}{cccc|cc|cc}
\toprule
\multirowcell{2}{Dataset} & \multirowcell{2}{Explainer} & \multicolumn{2}{c|}{SIC AUC}&\multicolumn{2}{c|}{Run time}&\multicolumn{2}{c}{Memory usage}\cr\cmidrule{3-4}\cmidrule{5-6}\cmidrule{7-8}
& & ADD$\uparrow$ & DEL$\downarrow$ & Training$\downarrow$ & Inference$\downarrow$ & Training$\downarrow$ & Inference$\downarrow$\\
\midrule
\multirowcell{12}{MNIST} & SamplingSHAP & 0.893        & 0.269    & $\slash$        &   2.55s   & $\slash$        &   \textbf{1464.81}MB   \\
                        & SVARM & 0.257        & 0.125    & $\slash$        &   2.03s   & $\slash$        &   1647.98MB   \\
                        & SHAP-IQ &   0.678      &  0.533   & $\slash$        &   2.35s   & $\slash$        &   1578.24MB   \\
                        & KernelSHAP   & 0.861        & 0.134    &  $\slash$       &  1.84s  & $\slash$        &   1714.42MB \\
                        & ST-SHAP &     0.953    &  0.088   & $\slash$        &   1.33s   & $\slash$        &   1674.67MB   \\
                        & FastSHAP     & 0.861        & 0.205    &  3302.66s       &   \textbf{0.19}s   &  2900.80MB       &   1958.56MB \\
                        & VAEAC        & 0.714        & 0.447    &  5079.43s       &   2.71s    &  3373.30MB       &  1533.63MB    \\
                        & EmSHAP       & \textbf{0.965}        & \textbf{0.037}  &  \textbf{2977.10}s     & 1.63s   &  \textbf{1823.91}MB     & 1879.68MB  \\
\midrule
\multirowcell{12}{ADTI} & SamplingSHAP & 0.925        & 0.494    & $\slash$       &  0.20s & $\slash$       &  718.47MB\\
                        & SVARM & 0.927        & 0.518    & $\slash$        &   0.18s   & $\slash$        &   696.97MB   \\
                        & SHAP-IQ &    0.918     &  0.498   & $\slash$        &   \textbf{0.04}s   & $\slash$        &   765.94MB   \\
                        & KernelSHAP   & 0.913        & 0.465    & $\slash$       &  1.43s  & $\slash$       &  703.79MB \\
                        & ST-SHAP &    0.878     &  0.461   & $\slash$        &   4E-3s   & $\slash$        &   700.48MB   \\
                        & FastSHAP     & 0.914        & 0.535    & 295.87s        & 0.26s & \textbf{717.17}MB        & \textbf{696.64}MB \\
                        & VAEAC        & 0.855        & 0.596    & 804.22s        & 2.30s  & 1084.62MB        & 956.01MB  \\
                        & EmSHAP       & \textbf{0.933}   & \textbf{0.411}  & \textbf{270.90}s   & 0.87s & 966.58MB   & 946.61MB\\
\midrule
\multirowcell{12}{ETT} & SamplingSHAP &   0.893 &  0.526  & $\slash$       &  0.08s & $\slash$       & 761.11MB \\
                        & SVARM &    0.875    & 0.342    & $\slash$        &  0.05s    & $\slash$    &  758.97MB   \\
                        & SHAP-IQ &    0.905     &  0.301   & $\slash$        &   2E-3s   & $\slash$        &   757.36MB   \\
                        & KernelSHAP   &    0.907    &  0.289   & $\slash$       &  7E-3s  & $\slash$       &  \textbf{534.54}MB \\
                        & ST-SHAP &     0.895    &   0.318  & $\slash$        &   6E-3s   & $\slash$        &   536.19MB   \\
                        & FastSHAP     &   0.863    &  0.402   &  19.39s  & \textbf{1E-3}s & 707.49MB        & 586.20MB \\
                        & VAEAC        &  0.902      &  0.359  & 46.91s        & 0.33s  & \textbf{530.85}MB        &568.14MB  \\
                        & EmSHAP       & \textbf{0.915}   & \textbf{0.243}  & \textbf{16.55}s   & 4E-3s & 554.11MB   & 540.46MB\\
\midrule
\multirowcell{12}{MIMIC} & SamplingSHAP &   0.604 &  0.431  & $\slash$       &  5.35s & $\slash$       & 1079.22MB \\
                        & SVARM &   0.682   &  0.426   & $\slash$        &  2.87s    & $\slash$    &  1091.45MB   \\
                        & SHAP-IQ &    0.633     &  \textbf{0.351}   & $\slash$        &   12.04s   & $\slash$        &   1193.91MB   \\
                        & KernelSHAP   &   0.663    &   0.361  & $\slash$       &  111.96s  & $\slash$       &  3685.93MB \\
                        & ST-SHAP &  0.509   &    0.493 & $\slash$        &   3.51s   & $\slash$        &   1067.32MB   \\
                        & FastSHAP     & 0.617  &  0.424   &  \textbf{780.72}s  & \textbf{1.17}s & 1254.49MB        & \textbf{987.54}MB \\
                        & VAEAC   &   0.572    &  0.466 & 13262.06s  & 128.65s  & 2762.11MB        & 2769.82MB  \\
                        & EmSHAP       &  \textbf{0.699}  & 0.384  & 1168.72s   & 59.93s & \textbf{1165.51}MB   & 1168.10MB\\
\bottomrule
\end{tabular}}
\end{table*}

\subsection{Feature attribution for MNIST image classification}
In this subsection, the MNIST dataset, which is widely used for evaluating XAI methods~\cite{chen2018shapley,frye2021shapley, lundberg2017unified}, is considered to test the performance of EmSHAP. The MNIST dataset provides a standardized benchmark for assessing algorithm performance in tasks such as classification and pattern recognition. It consists of 70,000 grayscale images of digits (0-9), each having a size of 28x28 pixels, with 60,000 images allocated for training and 10,000 for testing.

To classify the MNIST images, a simple CNN model is established, comprising two convolutional, two max-pooling layers, and a fully connected output layer. The trained model achieved a classification accuracy of 97\%, indicating a good performance.
Given that CNN is a black-box model, it is difficult to understand the importance of different pixels. To perform this task, EmSHAP is employed to quantify the influence of input pixels on the classification results. It is trained and utilized to estimate the conditional probability distributions of the test data, which are further used to estimate the Shapley value.

In order to enhance computational efficiency, we use average pooling to reduce the 28×28 image to a 14×14 image, and the number of attributing features is reduced to 196. Fig~\ref{fig4} shows the feature attribution results using EmSHAP and seven competitive methods.
\begin{figure}[h]
\centerline{\includegraphics[width=\linewidth]{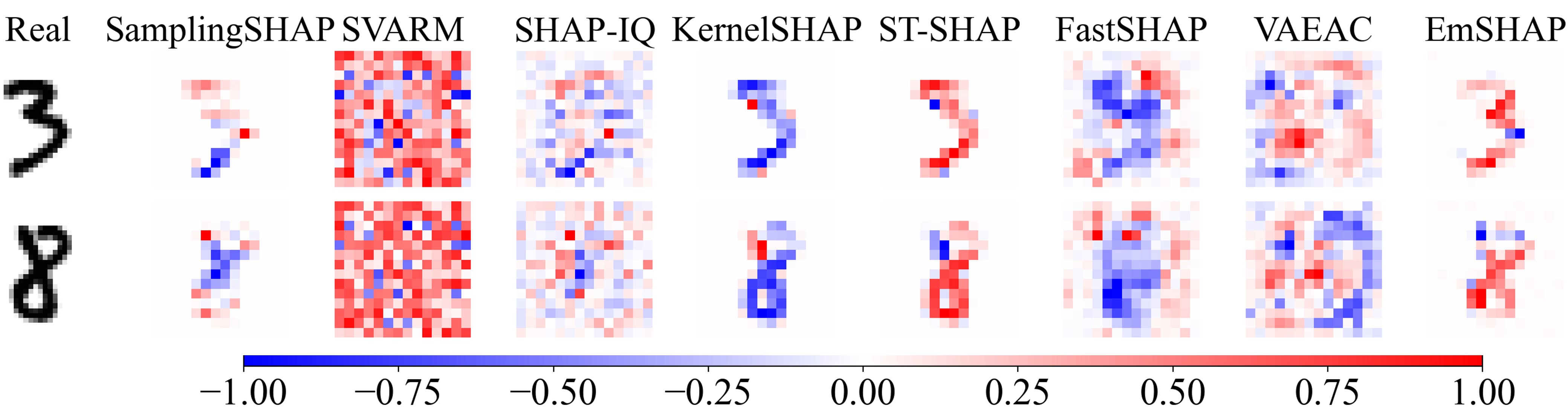}}
\caption{Feature attributions on MNIST dataset.}
\label{fig4}
\end{figure}
As is shown in Fig~\ref{fig4}, SVARM and SHAP-IQ may face challenges in terms of interpretability when applied to image classification tasks, as they yield ambiguous results. Similarly, FastSHAP and VAEAC also produce ambiguous reconstructions. On the other hand, SamplingSHAP, KernelSHAP, ST-SHAP, and EmSHAP all successfully attribute the digits 3 and 8. A more delicate inspection, however, shows that SamplingSHAP, KernelSHAP, and ST-SHAP yielded attribution that align with the classified digits, but tends to omit finer details. In contrast, EmSHAP exhibits finer details in feature attribution,  especially for digit 3. The better performance is also verified by the higher SIC AUC-ADD value and lower SIC AUC-DEL value in Table~\ref{table3}. The robustness of different methods is also tested by adding Gaussian noise with standard deviations varying    from 0.1 to 0.5, with an interval of 0.1. The results are shown in Appendix D, which verifies the better robustness of EmSHAP in the case of noises.

\subsection{Feature attribution for ADTI tabular classification}
Classifying adult income using census data is a typical tabular classification task and has been widely used to test XAI methods. In this subsection, we use this dataset to test the performance of EmSHAP on tabular data. The census data collects information on 48,842 independent individuals, involving 12 input variables related to individual income and a discrete output variable $y$, where $y=0$ means that the individual income is less than or equal to 50k, and $y=1$ means that the individual income is greater than 50k. After the individual data with missing values are removed, 32,562 samples remain. 80\% of these samples are randomly selected as the training set, and the remaining samples are used as the test set. To classify adult income, a three-layer MLP model is established, and the hidden layer size is set to 64. The model is trained using the training set and is optimized using the Adam optimizer with a learning rate of 0.001. The classification accuracy of the model can reach 84.75\%, indicating that the model has good classification performance.

To explain this predictive model, all eight methods are used to quantify the contribution of input features. Fig~\ref{fig8} shows the mean absolute Shapley values of each feature.
\begin{figure}[htbp]
\centerline{\includegraphics[width=\linewidth]{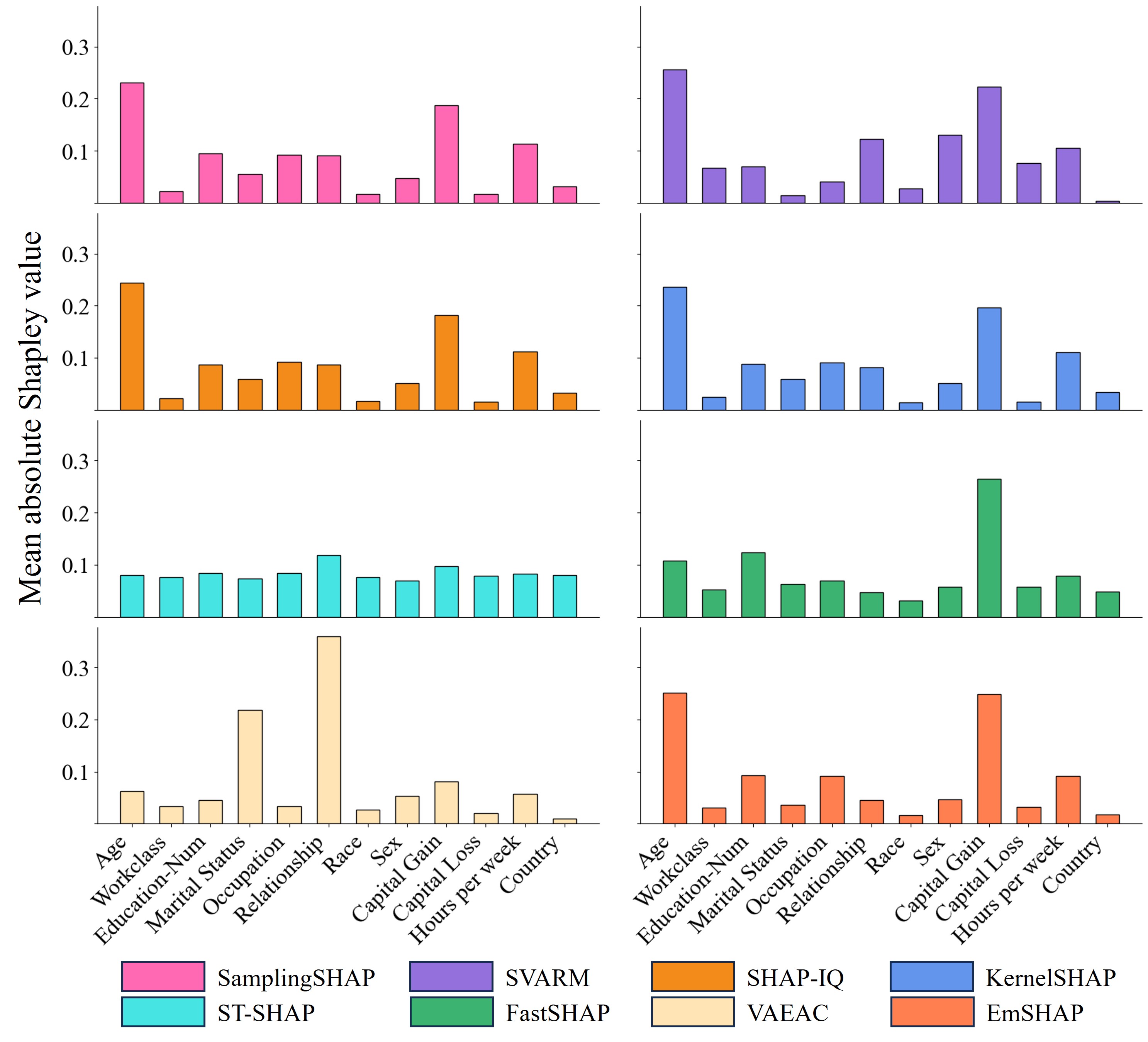}}
\caption{Feature attributions for ADTI dataset.}
\label{fig8}
\end{figure}
Among the eight Shapley value estimation methods, SamplingSHAP, SHAP-IQ, KernelSHAP, and EmSHAP consistently identify age and capital gains as the most influential features in income prediction, which aligns with real-world intuition. These two variables exhibit strong correlations with income: older individuals typically accumulate more work experience and thus higher earnings, while capital gains are directly associated with investment activities. In contrast, demographic attributes such as sex, marital status, and relationship generally show weaker correlations with income. SVARM diverges from this trend by assigning higher importance to relationship and sex, whereas ST-SHAP fails to effectively distinguish the most critical input features. FastSHAP correctly emphasizes the dominant role of capital gains, but treats the remaining variables as having nearly uniform contributions. Similarly, VAEAC attributes substantial importance to marital status and relationship, suggesting a different explanatory perspective. Overall, for the ADTI dataset, EmSHAP produces feature attributions that are not only consistent with domain knowledge but also rank among the most reliable explanations across the evaluated methods.

\begin{figure}[htbp]
\centerline{\includegraphics[width=\linewidth]{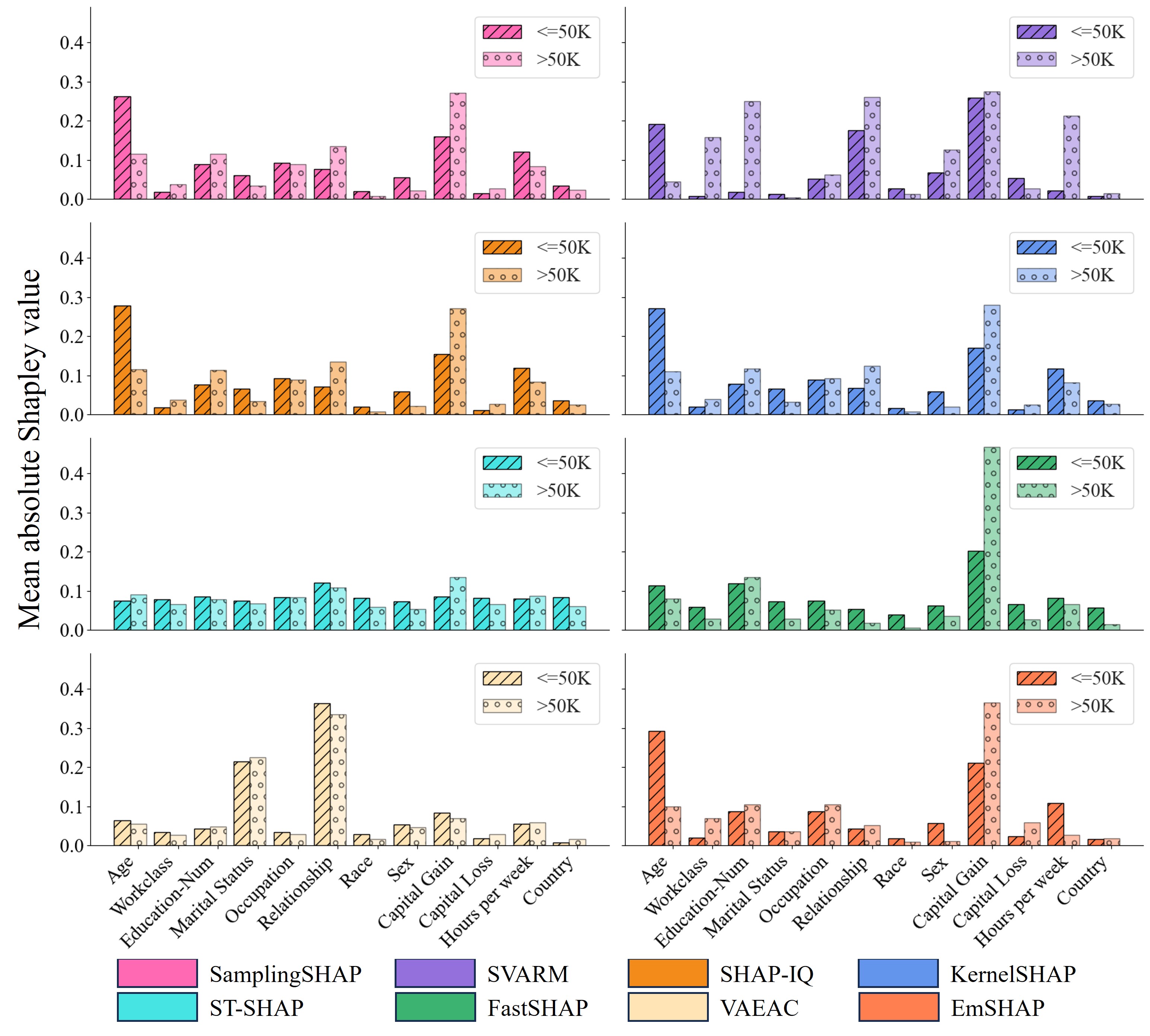}}
\caption{Feature attributions on ADTI dataset for different income classes.}
\label{fig9}
\end{figure}

To clearly illustrate the contribution of each feature to model classifications, we have categorized the data into two groups: income $\leq50$K and income $>50$K. Fig.~\ref{fig9} presents the mean absolute Shapley values for each feature across the two income categories. In Fig.~\ref{fig9}, bar charts with slashes and circles represent the absolute average Shapley values for the classification of individuals with income $\leq50$K and $>50$K, respectively. It can be observed that, for the income $\leq50$K group, age has the greatest contribution to the model’s classification, while for the income $>50$K group, capital gains hold the greatest influence. This difference can be explained. As individuals with income $\leq50$K are typically younger or have less work experience, by accumulating age and work experience, their income will increase. Therefore, in the low-income group, age plays a more significant role in feature attribution. On the other hand, individuals with higher incomes tend to have stronger investment capabilities and more available funds for investment, which results in higher capital gains. Consequently, capital gains are a stronger attribution for individuals with income $>50$K.

\subsection{Feature attributions for ETT dataset}
In this subsection, the electricity transformer dataset (ETT), which has been widely used in long-term and short-term time series prediction tasks~\cite{haoyietal2023informerEx, haoyietal2021informer}, is considered. The time series data in the ETT dataset is typically non-stationary and contains a reasonable portion of outliers. The dataset consists of 6 power load features and an output feature of ”oil temperature”, sampled at different temporal granularities. Here, only the hourly data ETTh1 is used, which contains a total of 17420 samples.

In the ETT dataset, the task is to make predictions of oil temperature based on the 6 power load features. This will help the operators to make corrective operations before the oil temperature gets out of the normal interval. For the purpose of constructing a predictive model, the Informer model~\cite{haoyietal2021informer} is applied, with the first 80\% of the dataset used as training data and the last 20\% as test data. The Informer model performs well in the test data, achieving a mean square error of less than 0.02. Despite the high accuracy, it is still desirable to know which input feature is more response for the predictions, which can be determined through estimation of Shapley values using EmSHAP and the other seven competitive methods.

The feature attribution results can be referred to in Table~\ref{table3}, which shows that EmSHAP outperforms competitive methods in terms of the SIC AUC metric. For a clear display, Fig.~\ref{fig10} presents the heatmap of estimated Shapley values using EmSHAP for the test set. It can be seen from Fig.~\ref{fig10} that the first four variables, HUFL, HULL, MULL, and MULL, are more responsible for the model predictions, whilst the contributions of LUFL and LULL are negligible. This is in accordance with the feature attribution results of other competitive methods.
\begin{figure}[htbp]
\centerline{\includegraphics[width=0.8\linewidth]{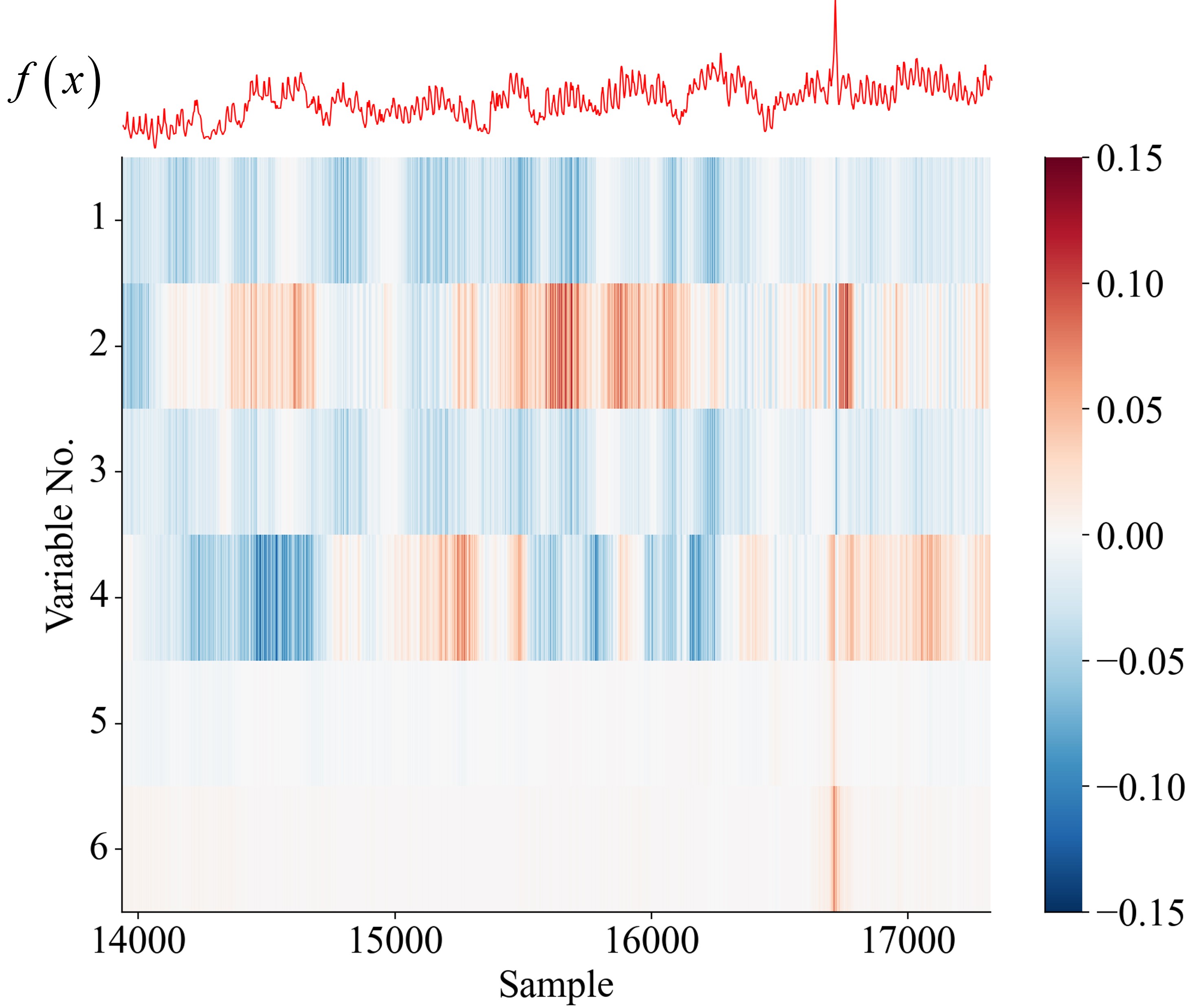}}
\caption{Heatmap of estimating Shapley values using EmSHAP in the ETTh1 prediction task.}
\label{fig10}
\end{figure}
In order to further test the robustness of EmSHAP under adversarial attacks, a series of adversarial samples with varying attack amplitudes is generated using the Fast Gradient Sign Method (FGSM)~\cite{Goodfellow2015Explaining} and added to the original test data. Based on the new test data, feature attribution is again performed using EmSHAP, FastSHAP, and VAEAC, and the results are shown in Table~\ref{table4}. The reason for not comparing with other methods is that they do not involve a training stage. It can be seen from Table~\ref{table4} that the SIC AUC-ADD values of all three methods decrease after adding the adversarial samples. A similar result can be observed for the SIC AUC-DEL values, which increased for all three methods, indicating the deterioration of feature attribution performance. A more delicate observation on the SIC AUC-ADD and SIC AUC-DEL values, however, shows that EmSHAP consistently outperforms the other two methods across all tested amplitudes.
\begin{table}[htbp]
\caption{SIC AUC results for different attack amplitudes on the ETT dataset}
\label{table4}
\centering
\renewcommand\arraystretch{0.5}
\setlength{\tabcolsep}{1.5mm}{
\begin{tabular}{ccccccc}
\toprule
\multirowcell{2}{Explainer} & \multirowcell{2}{SIC \\ AUC} &\multicolumn{5}{c}{Attack amplitude}\cr\cmidrule{3-7}
& & 0.02 & 0.04 & 0.06 & 0.08 & 0.10\\
\midrule
\multirowcell{2}{FastSHAP} & ADD$\uparrow$ & 0.811 & 0.788 & 0.760 & 0.722 & 0.700\\
& DEL$\downarrow$ & 0.320 & 0.365 & 0.388 & 0.427 & 0.454\\
\midrule
\multirowcell{2}{VAEAC} & ADD$\uparrow$ & 0.702 & 0.689 & 0.652 & 0.585 & 0.550 \\
& DEL$\downarrow$ & 0.473 & 0.492 & 0.507 & 0.542 & 0.561 \\
\midrule
\multirowcell{2}{EmSHAP} & ADD$\uparrow$ & 0.882 & 0.873 & 0.832 & 0.792 & 0.733\\
& DEL$\downarrow$ & 0.263 & 0.281 & 0.319 & 0.356 & 0.401 \\
\bottomrule
\end{tabular}}
\end{table}

\subsection{Feature attributions for MIMIC-III}
The MIMIC-III dataset~\cite{johnson2016mimic} is a large-scale, real-world benchmark in medical machine learning, containing clinical documents with International Classification of Diseases (ICD-9) annotations from 2001 to 2012. In this subsection, we focus on the automated medical coding task~\cite{Edin2023Automated} by using 11,371 discharge summaries(each truncated to 2,500 words) to predict the 50 most frequent ICD-9 codes in a multi-label setting. The data preprocessing procedure is followed by~\cite{mullenbach2018explainable}, and the data are split into 8,067/1,574/1,730 samples for training, validation, and testing, respectively. We adopt the Convolutional Attention for Multi-Label (CAML) classification model~\cite{mullenbach2018explainable}, which achieves a top-5 precision of 0.70 on this large-scale task. To interpret its black-box predictions, EmSHAP is applied for word-level attribution and compare it with seven attribution baselines, as reported in Table~\ref{table3}.

\begin{figure}[h]
\centerline{\includegraphics[width=\linewidth]{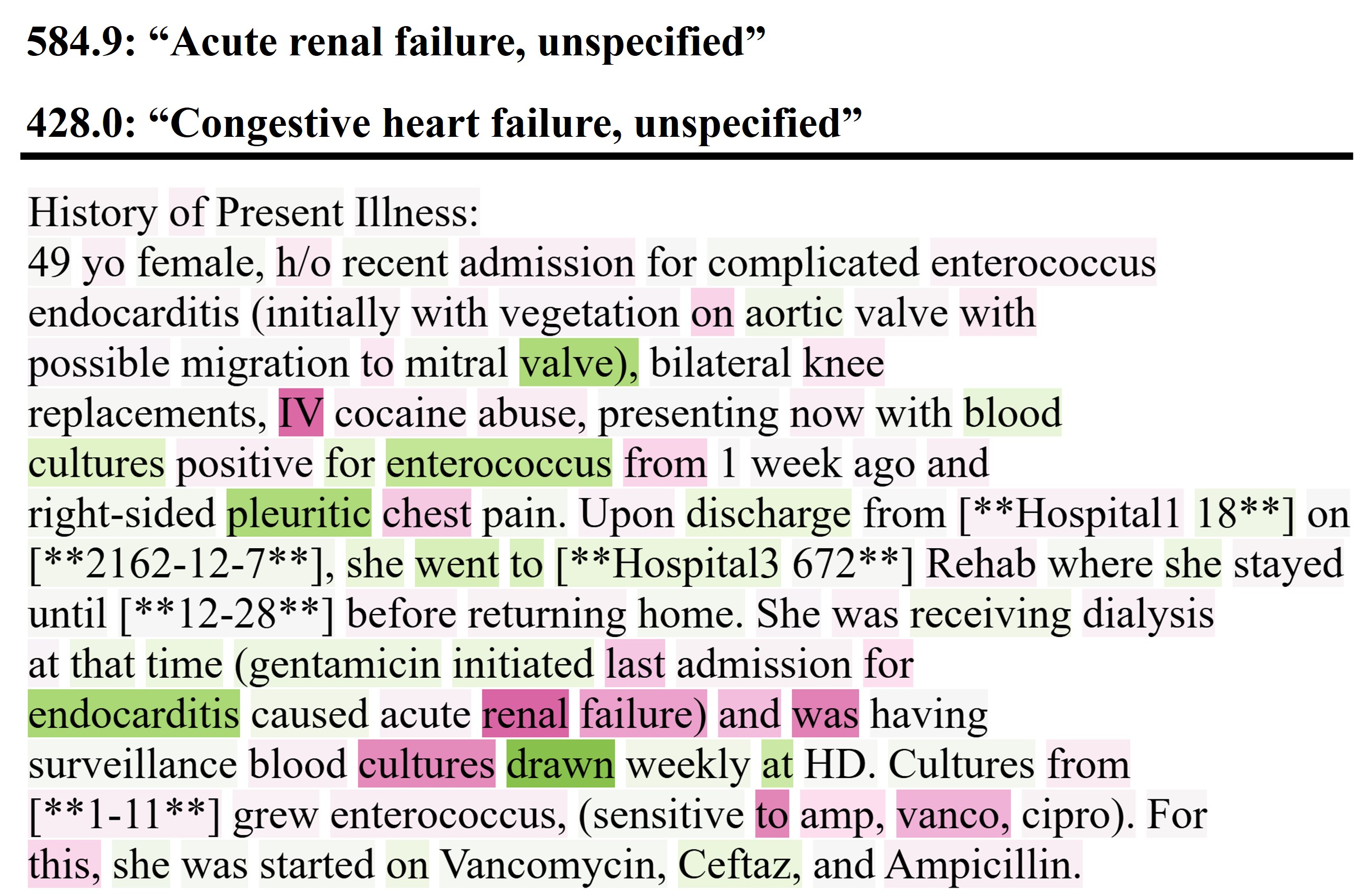}}
\caption{Feature attributions on MIMIC-III dataset.}
\label{fig11}
\end{figure}

Due to space limitations, Fig.~\ref{fig11} presents a partial excerpt of the medical record with ID 120589 together with its feature attribution results, more results are reported in Appendix D. In Fig.~\ref{fig11}, the top two medical codes predicted by CAML are '584.9' (acute renal failure, unspecified) and '428.0' (congestive heart failure, unspecified), which are consistent with the real codes. In the visualization, red and green highlight positive and negative Shapley attributions, respectively, with deeper colors indicating stronger contributions. The results show that EmSHAP successfully identifies clinically meaningful terms such as “\textit{renal}”, “\textit{failure}”, “\textit{endocarditis}”, “\textit{pleuritic}”, and “\textit{chest}”, which align with medical knowledge and confirm that the model predictions rely on relevant clinical cues. At the same time, EmSHAP is not without limitations: due to the complexity and high dimensionality of medical text, irrelevant connective words such as “\textit{and}”, “\textit{from}”, and “\textit{to}” are occasionally assigned spuriously high attributions. This phenomenon may be caused by the GRU's sensitivity to frequent tokens and the amplification of contextual dependencies during conditional probability modeling. Nevertheless, as indicated by the explanation metric SIC AUC in Table~\ref{table3}, EmSHAP consistently highlights key clinical features, thereby providing more interpretable and trustworthy results than existing methods in the automated medical coding task.

\subsection{Performance analysis of EmSHAP under different model structures and parameter settings}\label{Sec-6-5}
In this subsection, the performance of EmSHAP under different model structures is compared and analyzed.

\subsubsection{Dynamic masking vs fixed-rate masking}
As an important component of EmSHAP, the dynamic masking mechanism enabling the model's performance. To better show the functionality of dynamic masking, the performance of EmSHAP under dynamic masking and fix-rate masking is compared, and the results are shown in Appendix~E (Table~E1). As shown in Table~E1, the dynamic masking achieves better performance than fixed-rate masking for all cases. In addition, using the fixed-rate masking requires manually tuning the optimal masking rate, which is undesired in practice. In contrast, the dynamic masking mechanism allows the model to learn masking rate automatically, which leads to better performance~\cite{tian2023heterogeneous}.

\subsubsection{Analysis of context vector}
In addition to investigating the role of the context vector $\boldsymbol{\gamma}$ in EmSHAP, we analyze how the dimension of $\boldsymbol{\gamma}$ affects the model performance. The results of these experiments are presented in Appendix~E (Table~E2). As shown in Table~E2, increasing the context vector dimension leads to a steady improvement in the SIC AUC metrics across all datasets. However, since a larger dimension also increases model complexity, choosing an appropriate dimension is essential to balance efficiency and accuracy.

\subsubsection{Trade-off between accuracy and efficiency}
The trade-off between accuracy and efficiency is a key consideration in model design. In EmSHAP, increasing the latent dimension enhances predictive performance but also raises computational cost in terms of training and inference time, memory usage, and energy consumption. The performance of EmSHAP under different latent dimensions on four datasets is presented in Appendix~E (Table~E3). As shown in Appendix~E (Table~E3), enlarging the latent dimension from 16 to 128 yields consistent gains in SIC AUC, accompanied by higher resource demands. These findings underscore the need to balance accuracy with computational efficiency when selecting latent dimensions for practical deployment.

\subsubsection{Comparison of different RNN-based proposal networks}
To further investigate the impact of different RNN-based architectures, we compared GRU with LSTM, BiGRU, and attention-based GRU (Att-GRU). The results, summarized in Appendix~E (Table~E4), indicate that although LSTM, BiGRU, and Att-GRU occasionally achieve slightly higher SIC AUC scores (e.g., on MNIST and ADTI), these gains come at the expense of substantially higher runtime and memory consumption. Given that the primary goal of EmSHAP is to provide accurate yet efficient explanations, GRU offers the most favorable trade-off between performance and computational cost. Moreover, EmSHAP remains model-agnostic: the GRU module can be readily replaced with LSTM, BiGRU, Att-GRU, or other advanced sequence encoders if additional computational resources are available.

\subsubsection{Ablation study}
An ablation study is performed to reflect the performance of EmSHAP on different standardized models, which involves the combinations of vanilla EBM, the GRU network, and the dynamic masking mechanism.
The detailed results are reported in Appendix~E (Table~E5), which presents both feature attribution performance and runtime across different model configurations. As shown in Table~E5, although EmSHAP requires more time for training, it demonstrates superior performance in terms of the SIC AUC metric. More specifically, the vanilla EBM yields the lowest performance, but by gradually adding the dynamic masking mechanism and GRU network, the performance increases. This clearly shows that the combined effect of these mechanisms contributes to the better performance of EmSHAP.

\section{Conclusion}\label{Sec-5}
This paper proposes an energy-based model for Shapley value estimation (EmSHAP) for feature attribution in deep learning-based model. In EmSHAP, an energy-based model (EBM) is used to estimate the Shapley contribution function under different feature combinations. The EBM consists of a ResNet structure for approximating the energy function and a GRU network for estimating the partition distribution. By mapping the input features into a latent space using the GRU network, the influence of feature ordering on the calculation of the contribution function can be eliminated. In order to explore a wider range of feature combinations, a dynamic masking mechanism is introduced during training, which empowers EmSHAP with better estimation accuracy and generalization capability. In order to evaluate the performance of the proposed method, detailed theoretical analysis on error bound and extensive application studies are conducted, which show that EmSHAP has a tighter error bound and exhibits higher accuracy compared with state-of-the-art approaches.

Although EmSHAP provides a theoretically grounded and empirically competitive approach to Shapley-based feature attribution, several limitations still remain: i) the GRU architecture may be suboptimal for certain data modalities such as graph-structured or highly sparse features, highlighting the need for more flexible conditional modelling; ii) the quality of the proposal distribution directly affects conditional density estimation, the accuracy and stability may not be that good for hard-to-model feature subsets; iii) EmSHAP has a computational complexity that is proportional to the quadratic of input dimensions, hence it may need additional computational resources to handle high dimensional data sets; iv) the performance of EmSHAP may deteriorate under non-stationary data. Future research direction can be focused on: i) incorporating causal attribution methods to disentangle spurious correlations and generate counterfactual explanations; ii) designing adaptive sampling strategies that dynamically prioritize these hard-to-model subsets to improve the quality of the proposal distribution; iii) dimensionality-aware extensions to enhance applicability in high-dimensional domains; and iv) development of time-varying modelling approaches to handle non-stationary data.

\setcounter{equation}{0}
\renewcommand\theequation{A\arabic{equation}}
\setcounter{proposition}{0}
\renewcommand\theproposition{A\arabic{proposition}}
\setcounter{corollary}{0}
\renewcommand\thecorollary{A\arabic{corollary}}
\setcounter{lemma}{0}
\renewcommand\thetheorem{A\arabic{theorem}}
\setcounter{theorem}{0}
\renewcommand\thelemma{A\arabic{lemma}}

\section*{Appendices}
\subsection{Error Bound of KernelSHAP}
KernelSHAP believes that the calculation of the Shapley value can be regarded as an additive model, that is, the contribution function $v(S)$ of the variable set $S$ can be approximated by the sum of the variable weights within $S$,
\begin{equation}\label{eq-a-1}
\begin{small}
    v(S) \approx \beta_0 + \sum_{i\in S} \beta{i},
\end{small}
\end{equation}
where $\beta_0$ is the mean of $v(S)$, $\beta_i$ is the weight scalar of the $i$-th variable. By designing a linear regression model and solving it using the weighted least square method, the weight of each variable can be obtained. The optimization formula of KernelSHAP is as follows.
\begin{equation}\label{eq-a-2}
\begin{small}
    \mathop{\min}_{\beta_0,...,\beta_{|D|}} \sum_{S \in D} \psi(S) \left( \beta_0 + \sum_{i \in S} \beta_i - v(S) \right)^2,
\end{small}
\end{equation}
where $\boldsymbol{\beta}=[\beta_0,...,\beta_d]^T \in \mathbb{R}^{|D|+1}$ is the weight of each variable, $\psi(S) \in [0, 1]$ is called the Shapley kernel. To solve the above optimization problem, Covert \emph{et al.}~\cite{covert2020improving} introduced an binary vector $\mathbf{b} \in \mathbb{R}^{|D|+1}$, $b_i=1$ indicates the $i$-th variable is in the subset $S$, and otherwise $b_i=0$. Thus, Eq.(\ref{eq-a-2}) can be rewritten as follows.
\begin{equation}\label{eq-a-3}
\begin{small}
    \mathop{\min}_{\boldsymbol{\beta}} \sum_{S \in D} \psi(S) \left(\mathbf{b}^T\boldsymbol{\beta} - v(S) \right)^2.
\end{small}
\end{equation}

The matrix form of Eq.(\ref{eq-a-3}) is as follows.
\begin{equation}\label{eq-a-4}
\begin{small}
    \mathop{\min}_{\boldsymbol{\beta}}\frac{1}{K}(\mathbf{B}^T\boldsymbol{\beta}-\mathbf{v})^T\boldsymbol{\psi}(\mathbf{B}^T\boldsymbol{\beta}-\mathbf{v}),
\end{small}
\end{equation}
where $K$ is the number of sampling times, $\mathbf{B} \in \mathbb{R}^{(|D|+1) \times K}$ is a binary matrix indicating which variables are in $S$, $\boldsymbol{\psi} \in \mathbb{R}^{K \times K}$ is a diagonal matrix with Shapley kernel in its diagonal, $\mathbf{v} \in \mathbb{R}^{K \times 1}$ is the contribution function vector of different samples.

Eq.(\ref{eq-a-4}) can be solved by weighted least square method to obtain the estimation $\hat{\boldsymbol{\beta}}$, by taking the first-order derivative of $\boldsymbol{\beta}$ in Eq.(\ref{eq-a-4}) and setting it to zero, the derivation of weighted least square is given by,
\begin{equation}\label{eq-a-5}
\begin{small}
    \frac{1}{K}\left(\mathbf{B}\boldsymbol{\psi}\mathbf{B}^T {\boldsymbol{\beta}} - \mathbf{B}^T\boldsymbol{\psi}\mathbf{v}\right) = 0.
\end{small}
\end{equation}
For the details of KernelSHAP, interested readers can refer to Refs.~\cite{lundberg2017unified,covert2020improving}.

Let $\hat{\boldsymbol{\Sigma}} = \frac{1}{K}\mathbf{B}\boldsymbol{\psi}\mathbf{B}^T$ and $\hat{\boldsymbol{\Gamma}} = \frac{1}{K}\mathbf{B}^T\boldsymbol{\psi}\mathbf{v}$, $\mathbb{E}[\hat{\boldsymbol{\Sigma}}] = \boldsymbol{\Sigma}^*$ and $\mathbb{E}[\hat{\boldsymbol{\Gamma}}] = \boldsymbol{\Gamma}^*$. Assuming that $\hat{\boldsymbol{\Sigma}}$ and $\boldsymbol{\Sigma}^*$ is invertible, we have $\hat{\boldsymbol{\beta}}=\hat{\boldsymbol{\Sigma}}^{-1}\hat{\boldsymbol{\Gamma}}$, $\boldsymbol{\beta}^*=\mathbb{E}[\hat{\boldsymbol{\Sigma}}^{-1}\hat{\boldsymbol{\Gamma}}]$. The error bound of KernelSHAP can be described as the mean absolute deviation between the estimated contribution function $\hat{v}(S)$ and the true contribution function $v(S)$, this error bound is also related to the $\hat{\boldsymbol{\beta}}$ and $\boldsymbol{\beta}^*$ as follows.
\begin{equation}\label{eq-a-5-0}
\begin{small}
        \mathbb{E}\left[|\hat{v}(S)-v(S)|\right] = \mathbb{E}\left[||\hat{\boldsymbol{\beta}}-\boldsymbol{\beta}^*||\right].
\end{small}
\end{equation}
%
The decomposition of Eq.(\ref{eq-a-5-0}) is given by,
\begin{equation}\label{eq-a-6}
\begin{small}
    \begin{aligned}
        & \quad \mathbb{E}\left[|\hat{v}(S)-v(S)|\right] \\
        & =\mathbb{E} \left[ ||\hat{\boldsymbol{\beta}}-\boldsymbol{\beta}^*||\right] = \mathbb{E}[|| \hat{\boldsymbol{\Sigma}}^{-1} \hat{\boldsymbol{\Gamma}} - \mathbb{E}[\hat{\boldsymbol{\Sigma}}^{-1} \hat{\boldsymbol{\Gamma}}] ||] \\
        & = \mathbb{E}[|| \hat{\boldsymbol{\Sigma}}^{-1} \hat{\boldsymbol{\Gamma}} - \mathbb{E}[\hat{\boldsymbol{\Sigma}}^{-1}]  \mathbb{E}[\hat{\boldsymbol{\Gamma}}] - \text{Cov}(\hat{\boldsymbol{\Sigma}}^{-1}, \hat{\boldsymbol{\Gamma}}) ||] \\
        & = \mathbb{E}[|| \hat{\boldsymbol{\Sigma}}^{-1} \hat{\boldsymbol{\Gamma}} - \boldsymbol{\Sigma}^{*^{-1}} \boldsymbol{\Gamma}^* - \text{Cov}(\hat{\boldsymbol{\Sigma}}^{-1}, \hat{\boldsymbol{\Gamma}}) ||],
    \end{aligned}
\end{small}
\end{equation}
where $|| \cdot ||$ is the $L_2$ norm and $\text{Cov}(\cdot, \cdot)$ is the covariance. Next we introduce three corollaries and show that $||\text{Cov}(\hat{\boldsymbol{\Sigma}}^{-1}, \hat{\boldsymbol{\Gamma}})|| \leq \epsilon_3^2$ in some specific cases, where $\epsilon_3$ is a small positive number.

\noindent
\begin{corollary}\label{cly-a-1}
    (Concentration of $\hat{\boldsymbol{\Sigma}}$) For all $\epsilon_3>0$, we have,
    \begin{equation}\label{eq-a-20}
    \begin{small}
        \mathbb{P}(||\hat{\boldsymbol{\Sigma}}-\boldsymbol{\Sigma}^*|| \geq \epsilon_3) \leq 2(|D|+1)^2\exp(-2K\epsilon_3^2).
    \end{small}
    \end{equation}
\end{corollary}

\noindent
{\bf Proof.}{
    Noting that the elements in $\mathbf{B}$ are 0 or 1 and the elements in $\boldsymbol{\psi}$ is bounded with $[0, 1]$. Thus the element in $\hat{\boldsymbol{\Sigma}}$ and $\boldsymbol{\Sigma}^*$ also bounded with $[0, 1]$. According to Hoeffding$^{\prime}$s inequality, each element in $\hat{\boldsymbol{\Sigma}}$ is bounded with their expectation values, that is, for all $\epsilon_3>0$
    \begin{equation}\label{eq-a-21}
    \begin{small}
        \begin{cases}
            \mathbb{P}(|\frac{1}{K} \sum_{k=1}^{K}(\hat{\boldsymbol{\Sigma}}_{ii}-\mathbb{E}[\hat{\boldsymbol{\Sigma}}_{ii}]) |\geq \epsilon_3) \leq 2\exp (-2K\epsilon_3^2) \\
            \mathbb{P}(|\frac{1}{K} \sum_{k=1}^{K}(\hat{\boldsymbol{\Sigma}}_{ij}-\mathbb{E}[\hat{\boldsymbol{\Sigma}}_{ij}]) |\geq \epsilon_3) \leq 2\exp (-2K\epsilon_3^2),
        \end{cases}
    \end{small}
    \end{equation}
    where, $\mathbb{E}[\hat{\boldsymbol{\Sigma}}_{ii}] = \frac{1}{2}$ and $\mathbb{E}[\hat{\boldsymbol{\Sigma}}_{ij}] = \alpha$, so
    \begin{equation}\label{eq-a-22}
    \begin{small}
        \begin{cases}
            \mathbb{P}(|\frac{1}{K} \sum_{k=1}^{K}(\hat{\boldsymbol{\Sigma}}_{ii}-\frac{1}{2}) |\geq \epsilon_3) \leq 2\exp (-2K\epsilon_3^2) \\
            \mathbb{P}(|\frac{1}{K} \sum_{k=1}^{K}(\hat{\boldsymbol{\Sigma}}_{ij}-\alpha) |\geq \epsilon_3) \leq 2\exp (-2K\epsilon_3^2).
        \end{cases}
    \end{small}
    \end{equation}
    By using Boole$^{\prime}$s inequality~\cite{boole1847mathematical}, Eq.(\ref{eq-a-20}) can be obtained.
}

The concentration of $\hat{\boldsymbol{\Gamma}}$ and $\hat{\boldsymbol{\Sigma}}^{-1}$ can be obtained in the same.
\noindent
\begin{corollary}\label{cly-a-2}
    (Concentration of $\hat{\boldsymbol{\Gamma}}$)For all $\epsilon_3>0$, we have,
    \begin{equation}\label{eq-a-23}
    \begin{small}
        \mathbb{P}(||\hat{\boldsymbol{\Gamma}}-\boldsymbol{\Gamma}^*|| \geq \epsilon_3) \leq 2(|D|+1)\exp(-2K\epsilon_3^2).
    \end{small}
    \end{equation}
\end{corollary}

\noindent
{\bf Proof.}{
    According to Assumption 4, the contribution value of any subset $S$ is bounded within $[0, 1]$, that is $v(S) \in [0, 1]$. Then $\mathbf{v}$ is also bounded within $[0, 1]$, which indicates $\boldsymbol{\Gamma}^* \in [0, 1]$, by using Hoeffding$^{\prime}$s inequality, each element in $\hat{\boldsymbol{\Gamma}}$ is bounded with their expectation values, that is, for all $\epsilon>0$
    \begin{equation}\label{eq-a-24}
    \begin{small}
        \mathbb{P}(|\frac{1}{K} \sum_{k=1}^{K}(\hat{\boldsymbol{\Gamma}}_{i}-\mathbb{E}[\hat{\boldsymbol{\Gamma}}_{i}]) |\geq \epsilon_3) \leq 2\exp (-2K\epsilon_3^2).
    \end{small}
\end{equation}
By using Boole$^{\prime}$s inequality~\cite{boole1847mathematical}, the concentration of $\hat{\boldsymbol{\Gamma}}$ can be obtained.
}

\noindent
\begin{corollary}
\label{cly-a-2-0}
    (Concentration of $\hat{\boldsymbol{\Sigma}}^{-1}$) For all $\epsilon_3>0$, we have,
    \begin{equation}\label{eq-a-23-0}
    \begin{small}
        \mathbb{P}(||\hat{\boldsymbol{\Sigma}}^{-1}-{\boldsymbol{\Sigma}^*}^{-1}|| \geq \epsilon_3) \leq 2(|D|+1)^2\exp(-2K\epsilon_3^2).
    \end{small}
    \end{equation}
\end{corollary}

\noindent
\begin{corollary}\label{cly-a-3}
    For the fixed input dimension $|D|$, for any small positive number $\epsilon_3$, there exists a large $K$ that
    \begin{equation}\label{eq-a-24-0}
    \begin{small}
        \mathbb{P}\left(|| \mathrm{Cov}(\hat{\boldsymbol{\Sigma}}^{-1}, \hat{\boldsymbol{\Gamma}}) ||\geq \epsilon^2_3\right) \leq 2((|D|+1)^2+(|D|+1)) \exp(-2K \epsilon_3^2),
    \end{small}
    \end{equation}
    where $\epsilon_3 \in \left[ \right. \sqrt{\frac{1}{2K}\log(2((|D|+1)^2+(|D|+1)))}, +\infty \left. \right)$.
\end{corollary}

\noindent
{\bf Proof.}{
    $\left|\left| \text{Cov}(\hat{\boldsymbol{\Sigma}}^{-1}, \hat{\boldsymbol{\Gamma}}) \right| \right|$ can be decomposed as follows.
    \begin{equation}
    \begin{small}
        \quad \left|\left| \text{Cov}(\hat{\boldsymbol{\Sigma}}^{-1}, \hat{\boldsymbol{\Gamma}}) \right| \right|
        =\left|\left|\mathbb{E}\left[ \left( \hat{\boldsymbol{\Sigma}}^{-1} - \mathbb{E}[\hat{\boldsymbol{\Sigma}}^{-1}] \right) \left( \hat{\boldsymbol{\Gamma}} - \mathbb{E}[\hat{\boldsymbol{\Gamma}}] \right) \right]\right| \right|.
    \end{small}
    \end{equation}
    According to Corollaries~\ref{cly-a-2}, \ref{cly-a-2-0}, and Boole$^{\prime}$s inequality, Eq.(\ref{eq-a-24-0}) can be obtained. Since the probability $\mathbb{P}\left(|| \text{Cov}(\hat{\boldsymbol{\Sigma}}^{-1}, \hat{\boldsymbol{\Gamma}}) ||\geq \epsilon_3\right) \in [0, 1]$, the range of $\epsilon_3$ is $\left[ \right. \sqrt{\frac{1}{2K}\log(2((|D|+1)^2+(|D|+1)))}, +\infty \left. \right)$.

    It should be notice that for a small positive $\epsilon_3$, when $K$ is large enough, the probability $1- 2((|D|+1)^2+(|D|+1)) \exp(-2K \epsilon_3^2)$ close to 1, which means $\left| \left|\text{Cov}(\hat{\boldsymbol{\Sigma}}^{-1}, \hat{\boldsymbol{\Gamma}}) \right|\right| \leq \epsilon_3^2$ holds true under almost any conditions.
}

Thus, Eq.(\ref{eq-a-6}) can be rewritten as follows,
\begin{equation}\label{eq-a-24-3}
\begin{small}
        \begin{aligned}
        & \quad \mathbb{E}\left[|\hat{v}(S)-v(S)|\right] \\
        & \leq \mathbb{E} [|| \hat{\boldsymbol{\Sigma}}^{-1}\hat{\boldsymbol{\Gamma}} - \boldsymbol{\Sigma}^{*^{-1}}\boldsymbol{\Gamma}^* ||] + \mathbb{E}[|| -\text{Cov}(\hat{\boldsymbol{\Sigma}}^{-1}, \hat{\boldsymbol{\Gamma}}) ||] \\
        & \leq \mathbb{E}[|| \hat{\boldsymbol{\Sigma}}^{-1}(\hat{\boldsymbol{\Gamma}} - \boldsymbol{\Gamma}^*) ||] + \mathbb{E}[||(\hat{\boldsymbol{\Sigma}}^{-1} - \boldsymbol{\Sigma}^{*^{-1}})\boldsymbol{\Gamma}^* ||] + \epsilon_3^2\\
        & \leq \mathbb{E}[||\hat{\boldsymbol{\Sigma}}^{-1} || \cdot || \hat{\boldsymbol{\Gamma}} - \boldsymbol{\Gamma}^* ||] + \mathbb{E}[|| \hat{\boldsymbol{\Sigma}}^{-1} - \boldsymbol{\Sigma}^{*^{-1}} || \cdot || \boldsymbol{\Gamma}^*||] + \epsilon_3^2\\
        & = \mathbb{E}[|| \hat{\boldsymbol{\Sigma}}^{-1}(\hat{\boldsymbol{\Gamma}} - \boldsymbol{\Gamma}^*) ||] + \mathbb{E}[||\boldsymbol{\Sigma}^{*^{-1}}(\hat{\boldsymbol{\Sigma}}-\boldsymbol{\Sigma}^*)\hat{\boldsymbol{\Sigma}}^{-1}|| \cdot ||\boldsymbol{\Gamma}^* ||] + \epsilon_3^2 \\
        & \leq \mathbb{E}[||\hat{\boldsymbol{\Sigma}}^{-1} || \cdot || \hat{\boldsymbol{\Gamma}} - \boldsymbol{\Gamma}^* ||] + \mathbb{E}[||\boldsymbol{\Sigma}^{*^{-1}}||\cdot ||\hat{\boldsymbol{\Sigma}}-\boldsymbol{\Sigma}^*||\cdot ||\hat{\boldsymbol{\Sigma}}^{-1}||\cdot
        ||\boldsymbol{\Gamma}^*||] + \epsilon_3^2.
        \end{aligned}
\end{small}
\end{equation}

By calculating the upper bounds of the decomposed elements in Eq.(\ref{eq-a-24-3}) separately and merging them, the upper bound of $\mathbb{E}\left[|\hat{v}(S)-v(S)|\right]$ can be obtained. Notice that Ref.\cite{garreau20a} gives the theoretical analysis of LIME, we follow their contribution and derive the error bound of KernelSHAP.
Here, the explicit expressions of $\boldsymbol{\Sigma}^*$ and its inverse form $\boldsymbol{\Sigma}^*$ are first into consideration. According to Ref.~\cite{covert2020improving}, $\boldsymbol{\Sigma}^*$ has the following form,
\begin{equation}\label{eq-a-7}
\begin{small}
    \boldsymbol{\Sigma}^* =
    \begin{bmatrix}
        \frac{1}{2} & \alpha       & \cdots & \alpha  \\
        \alpha      & \frac{1}{2}  & \cdots & \alpha \\
        \vdots      & \vdots       & \ddots  & \vdots \\
        \alpha      & \alpha       & \cdots  & \frac{1}{2},
    \end{bmatrix}
\end{small}
\end{equation}
that is, the diagonal of $\boldsymbol{\Sigma}^*$ is $\frac{1}{2}$, and the  off-diagonal entries are $\alpha$, where
\begin{equation}\label{eq-a-8}
\begin{small}
    \alpha = \frac{1}{|D|(|D|+1)} \frac{\sum_{i=2}^{|D|}\frac{i-1}{|D|+1-i}}{\sum_{i=1}^{|D|}\frac{1}{i(|D|+1-i)}}.
\end{small}
\end{equation}

\noindent
\begin{lemma}\label{lma-a-1}
    Assuming $\boldsymbol{\Sigma}^*$ is invertible, the inverse of $\boldsymbol{\Sigma}^*$, $\boldsymbol{\Sigma}^{*^{-1}}$, has the following expression,
    \begin{equation}\label{eq-a-9}
    \begin{small}
        \boldsymbol{\Sigma}^{*^{-1}} =
        \begin{bmatrix}
        \frac{1}{\frac{1}{2}-\alpha} & \frac{\alpha}{(\alpha-\frac{1}{2})(K\alpha-\alpha+\frac{1}{2})}      & \cdots & \frac{\alpha}{(\alpha-\frac{1}{2})(K\alpha-\alpha+\frac{1}{2})}  \\
        \frac{\alpha}{(\alpha-\frac{1}{2})(K\alpha-\alpha+\frac{1}{2})}      & \frac{1}{\frac{1}{2}-\alpha}  & \cdots & \frac{\alpha}{(\alpha-\frac{1}{2})(K\alpha-\alpha+\frac{1}{2})} \\
        \vdots      & \vdots       & \ddots  & \vdots \\
        \frac{\alpha}{(\alpha-\frac{1}{2})(K\alpha-\alpha+\frac{1}{2})}      & \frac{\alpha}{(\alpha-\frac{1}{2})(K\alpha-\alpha+\frac{1}{2})}       & \cdots  & \frac{1}{\frac{1}{2}-\alpha}
    \end{bmatrix}.
    \end{small}
    \end{equation}
\end{lemma}

\noindent
{\bf Proof.}{
    According to Eq.(\ref{eq-a-7}), $\boldsymbol{\Sigma}^*$ can be decomposed as follows,
    \begin{equation}\label{eq-a-10}
    \begin{small}
        \boldsymbol{\Sigma}^* = (\frac{1}{2}-\alpha)\mathbf{I} + \alpha\mathbf{J},
    \end{small}
    \end{equation}
    where $\mathbf{I} \in \mathbb{R}^{(|D|+1) \times (|D|+1)}$ is a identity matrix and $\mathbf{J} \in \mathbb{R}^{(|D|+1) \times (|D|+1)}$ is a matrix with all ones.
    Let $\boldsymbol{\Sigma}^{*{-1}} = \kappa\mathbf{I}+\omega\mathbf{J}$, $\kappa$ and $\omega$ are need to be determined, we have,
    \begin{equation}\label{eq-a-11}
    \begin{small}
        \left(\left( \frac{1}{2}-\alpha \right)\mathbf{I}+\alpha\mathbf{J}\right)\left(\kappa\mathbf{I}+\omega\mathbf{J}\right) = \mathbf{I}.
    \end{small}
    \end{equation}
    By sorting out the above formula,
    \begin{equation}\label{eq-a-12}
    \begin{small}
        \left(\frac{1}{2}-\alpha\right)\kappa\mathbf{I}+\left[ \left(\frac{1}{2}-\alpha+K\alpha\right)\omega + \kappa\alpha \right]\mathbf{J} = \mathbf{I}.
    \end{small}
    \end{equation}
    Thus, we can get,
    \begin{equation}\label{eq-a-13}
    \begin{small}
        \begin{cases}
            \begin{aligned}
                & (\frac{1}{2}-\alpha)\kappa = 1 \\
                & (\frac{1}{2}-\alpha+K\alpha)\omega + \kappa\alpha = 0.
            \end{aligned}
        \end{cases}
    \end{small}
    \end{equation}
    By solving Eq.(\ref{eq-a-13}), we have,
    \begin{equation}\label{eq-a-14}
    \begin{small}
        \kappa = \frac{1}{\frac{1}{2}-\alpha}, \quad \omega = \frac{\alpha}{(\alpha-\frac{1}{2})(K\alpha-\alpha+\frac{1}{2})},
    \end{small}
    \end{equation}
    and
    \begin{equation}\label{eq-a-15}
    \begin{small}
        \boldsymbol{\Sigma}^{*^{-1}} = \frac{1}{\frac{1}{2}-\alpha}\mathbf{I} + \frac{\alpha}{(\alpha-\frac{1}{2})(K\alpha-\alpha+\frac{1}{2})}\mathbf{J}.
    \end{small}
    \end{equation}
}

\noindent
\begin{proposition}\label{pro-a-1}
    (The upper bound of $|| \boldsymbol{\Sigma}^{*{-1}} ||$) The $L_2$ norm of $\boldsymbol{\Sigma}^{*^{-1}}$ is upper bounded with,
\begin{equation}\label{eq-a-16}
\begin{small}
    ||\boldsymbol{\Sigma}^{*^{-1}}|| \leq \sqrt{\frac{|D|+1}{(\frac{1}{2} - \alpha)^2} + \frac{(|D|+1)|D|\alpha^2}{((\alpha-\frac{1}{2})(K\alpha-\alpha+\frac{1}{2}))^2}}.
\end{small}
\end{equation}
\end{proposition}

\noindent
{\bf Proof.}{
    The square $L_2$ norm of $\boldsymbol{\Sigma}^{*^{-1}}$ is upper bounded by its Frobenius norm, that is,
    \begin{equation}\label{eq-a-17}
    \begin{small}
        ||\boldsymbol{\Sigma}^{*^{-1}}||^2 \leq ||\boldsymbol{\Sigma}^{*^{-1}}||^2_F = \frac{|D|+1}{(\frac{1}{2} - \alpha)^2} + \frac{(|D|+1)|D|\alpha^2}{((\alpha-\frac{1}{2})(K\alpha-\alpha+\frac{1}{2}))^2}.
    \end{small}
    \end{equation}
    By taking the root square on both sides of Eq.(\ref{eq-a-16}), we can get Eq.(\ref{eq-a-17}).
}

Similarly, the upper bound of $||\boldsymbol{\Gamma}^{*}||$ can be obtained in the same way.

\noindent
\begin{proposition}\label{pro-a-2}
(Upper bound of $||\boldsymbol{\Gamma}^{*}||$) The $L_2$ norm of $\boldsymbol{\Gamma}^{*}$ is upper bounded with,
    \begin{equation}\label{eq-a-18}
    \begin{small}
        ||\boldsymbol{\Gamma}^* || \leq \sqrt{|D|+1}.
    \end{small}
    \end{equation}
\end{proposition}

\noindent
{\bf Proof.}{
    The square $L_2$ norm of $\boldsymbol{\Gamma}^* $ is as follows.
    \begin{equation}\label{eq-a-19}
    \begin{small}
        ||\boldsymbol{\Gamma}^* ||^2 = || \mathbf{B} \boldsymbol{\psi} \mathbf{v} ||^2.
    \end{small}
    \end{equation}
    Since each element in $\boldsymbol{\Gamma}^*$ is bounded within $[0, 1]$, the square $L_2$ norm of $\boldsymbol{\Gamma}^* $ is smaller then its dimension value, that is, $||\boldsymbol{\Gamma}^* ||^2\leq|D|+1$.
}

\noindent
\begin{proposition}\label{pro-a-3}
(Upper bound of $\hat{\boldsymbol{\Sigma}}^{-1}$) For all $\epsilon_3 \in (0, \frac{1}{||\boldsymbol{\Sigma}||_F^{*^{-1}}})$, with the probability greater than $1-2(|D|+1)^2\exp(-2K\epsilon_3^2)$, there exist a small positive number $\epsilon_3^{\prime}$ that
    \begin{equation}\label{eq-a-25}
    \begin{small}
        ||\hat{\boldsymbol{\Sigma}}^{-1} || \leq \frac{||\boldsymbol{\Sigma}^{*^{-1}}||_F}{1-\epsilon_3||\boldsymbol{\Sigma}^{*^{-1}}||_F} = ||\boldsymbol{\Sigma}^{*^{-1}}||_F+\epsilon_3^{\prime},
    \end{small}
    \end{equation}
    where $\epsilon_3^{\prime}= \frac{\epsilon_3||\boldsymbol{\Sigma}^{*^{-1}}||_F^2}{1-\epsilon_3||\boldsymbol{\Sigma}^{*^{-1}}||_F}$, since $\epsilon_3$ is a small positive value, $\epsilon_3^{\prime}$ could be small as well.
\end{proposition}

\noindent
{\bf Proof.}{
    According to Lemma~\ref{lma-a-1}, we have $\lambda_{\max}(\boldsymbol{\Sigma}^{*^{-1}}) \leq ||\boldsymbol{\Sigma}^{*^{-1}}||_F$
    Thus, we have,
    \begin{equation}\label{eq-a-26}
    \begin{small}
        \lambda_{\min}(\boldsymbol{\Sigma}^{*}) \geq \frac{1}{||\boldsymbol{\Sigma}^{*^{-1}}||_F}.
    \end{small}
    \end{equation}
    Particularly, $\hat{\boldsymbol{\Sigma}}$ is concentrated on $\boldsymbol{\Sigma}^*$, by using Weyl$^{\prime}$ inequality~\cite{weyl1912asymptotische},
    \begin{equation}\label{eq-a-27}
    \begin{small}
        |\lambda_{\min}(\hat{\boldsymbol{\Sigma}})- \lambda_{\min}(\boldsymbol{\Sigma}^*)| \leq || \hat{\boldsymbol{\Sigma}} - \boldsymbol{\Sigma}^* || \leq \epsilon_3,
    \end{small}
    \end{equation}
    and,
    \begin{equation}\label{eq-a-28}
    \begin{small}
        \lambda_{\min}(\hat{\boldsymbol{\Sigma}}) \geq \lambda_{\min}(\boldsymbol{\Sigma}^*)-\epsilon_3 \geq \frac{1}{||\boldsymbol{\Sigma}^{*^{-1}}||_F} -\epsilon_3,
    \end{small}
    \end{equation}
    which can be deduced to
    \begin{equation}\label{eq-a-29}
    \begin{small}
        ||\hat{\boldsymbol{\Sigma}}^{-1} || \leq \frac{||\boldsymbol{\Sigma}^{*^{-1}}||_F}{1-\epsilon_3||\boldsymbol{\Sigma}^{*^{-1}}||_F}.
    \end{small}
    \end{equation}
    Furthermore,
    \begin{equation}
    \begin{small}
            \begin{aligned}
                & \quad \frac{||\boldsymbol{\Sigma}^{*^{-1}}||_F}{1-\epsilon_3||\boldsymbol{\Sigma}^{*^{-1}}||_F} = \frac{||\boldsymbol{\Sigma}^{*^{-1}}||_F - \epsilon_3||\boldsymbol{\Sigma}^{*^{-1}}||_F^2+\epsilon_3|| \boldsymbol{\Sigma}^{*^{-1}}||_F^2}{1-\epsilon_3||\boldsymbol{\Sigma}^{*^{-1}}||_F} \\
                & = ||\boldsymbol{\Sigma}^{*^{-1}}||_F + \frac{\epsilon_3||\boldsymbol{\Sigma}^{*^{-1}}||_F^2}{1-\epsilon_3||\boldsymbol{\Sigma}^{*^{-1}}||_F}.
            \end{aligned}
    \end{small}
    \end{equation}
    By defining $\epsilon_3^{\prime} = \frac{\epsilon_3||\boldsymbol{\Sigma}^{*^{-1}}||_F^2}{1-\epsilon_3||\boldsymbol{\Sigma}^{*^{-1}}||_F}$, we can get the upper bound $||\hat{\boldsymbol{\Sigma}}^{-1} || \leq |\hat{\boldsymbol{\Sigma}}^{-1} ||_F + \epsilon_3^{\prime}$
    Since $\epsilon_3$ is a small positive number, $\epsilon_3^{\prime}$ is small as well.
}

\noindent
{\bf Proof.}{
(Proof of Theorem 2)
For fixed feature dimension $|D|$, a small positive $\epsilon_3$, and a large sampling number $K$, the final result of $\mathbb{E}\left[|\hat{v}(S)-v(S)|\right]$ with the probability greater than $1- 2((|D|+1)^2+(|D|+1)) \exp(-2K \epsilon_3^2)$ is as follows.
\begin{equation}\label{eq-a-30}
\begin{small}
    \begin{aligned}
        & \quad \mathbb{E}\left[|\hat{v}(S)-v(S)|\right]\\
        & \leq \mathbb{E}\left[||\hat{\boldsymbol{\Sigma}}^{-1} || \cdot || \hat{\boldsymbol{\Gamma}} - \boldsymbol{\Gamma}^* ||\right] + \mathbb{E}\left[||\boldsymbol{\Sigma}^{*^{-1}}||\cdot ||\hat{\boldsymbol{\Sigma}}-\boldsymbol{\Sigma}^*||\cdot ||\hat{\boldsymbol{\Sigma}}^{-1}||\cdot
        ||\boldsymbol{\Gamma}^*||\right] +\epsilon_3^2\\
        & \leq ||\boldsymbol{\Sigma}^{*^{-1}}||_F \cdot \int_0^{+\infty}2(|D|+1)\exp(-2K\epsilon_3^2)d\epsilon_3 +\epsilon_3^2\\ 
        &+ ||\boldsymbol{\Sigma}^{*^{-1}}||_F \cdot \int_0^{+\infty}2(|D|+1)^2\exp(-2K\epsilon_3^2)d\epsilon_3 \\
        & \cdot (||\boldsymbol{\Sigma}^{*^{-1}}||_F+\epsilon_3^{\prime}) \cdot \sqrt{|D|+1} \\
        & = 2(|D|+1)\frac{\sqrt{\pi}}{\sqrt{2K}} ||\boldsymbol{\Sigma}^{*^{-1}}||_F \\
\notag
    \end{aligned}
\end{small}
\end{equation}
\begin{equation}
\begin{small}
    \begin{aligned}
        & + 2(|D|+1)^{\frac{5}{2}} \frac{\sqrt{\pi}}{\sqrt{2K}} ||\boldsymbol{\Sigma}^{*^{-1}}||_F(||\boldsymbol{\Sigma}^{*^{-1}}||_F+\epsilon_3^{\prime}) +\epsilon_3^2.
    \end{aligned}
\end{small}
\end{equation}
}

It can be seen from Eq.(\ref{eq-a-30}) that as the data dimension $|D|$ increases, the error estimated by KernelSHAP will increase. On the other hand, the estimation error will decrease as the number of samples $K$ increases.

\setcounter{equation}{0}
\renewcommand\theequation{B\arabic{equation}}
\setcounter{proposition}{0}
\renewcommand\theproposition{B\arabic{proposition}}
\setcounter{corollary}{0}
\renewcommand\thecorollary{B\arabic{corollary}}
\setcounter{lemma}{0}
\renewcommand\thelemma{B\arabic{lemma}}
\subsection{Error Bound of Variational autoencoder}
Variational autoencoder (VAE) is a typical generative model, which uses a latent variable model to give a probabilistic representation of the unknown data distribution $p(\mathbf{x})$. The basic idea of the variational autoencoder is to generate the observed variable $\mathbf{x}$ through a simple distributed latent variable $\mathbf{z}$. Specifically, VAE first models the joint distribution,
\begin{equation}\label{eq-b-1}
\begin{small}
    p(\mathbf{x},\mathbf{z}) = p(\mathbf{z})p_{\theta}(\mathbf{x}|\mathbf{z}),
\end{small}
\end{equation}
where the conditional distribution $p_{\theta}(\mathbf{x}|\mathbf{z})$ is called an encoder, which is modeled by a deep neural network with parameter $\theta$,$\mathbf{z}$ is the latent variable with a simple distribution, such as the standard normal distribution. By marginalizing the latent variable, the data distribution can be approximated.
\begin{equation}\label{eq-b-2}
\begin{small}
    p(\mathbf{x}) = \int p(\mathbf{x},\mathbf{z}) d\mathbf{z}.
\end{small}
\end{equation}
This latent variable model helps to get $p(\mathbf{x})$ easily. However, the calculation of the posterior distribution $p_{\theta}(\mathbf{z}|\mathbf{x})$ during training becomes intractable. To address this challenge, VAE constructs a distribution $q_{\tau}(\mathbf{z}|\mathbf{x})$ to approximate $p_{\theta}(\mathbf{z}|\mathbf{x})$, which is called the decoder parametered with $\tau$. The data log-likelihood of VAE can be expressed as follows.
\begin{equation}\label{eq-b-3}
\begin{small}
    \begin{aligned}
        \log p_{\theta}(\mathbf{x}) &= \log p_{\theta}(\mathbf{x}) \int q_{\tau}(\mathbf{z}|\mathbf{x})d\mathbf{z} \\
        & = \int q_{\tau}(\mathbf{z}|\mathbf{x})\log p_{\theta}(\mathbf{x})d\mathbf{z} \\
        & = \mathbb{E}_{q_{\tau}(\mathbf{z}|\mathbf{x})} \log p_{\theta}(\mathbf{x}) \\
        & = \mathbb{E}_{q_{\tau}(\mathbf{z}|\mathbf{x})} \log \frac{p_{\theta}(\mathbf{x},\mathbf{z})}{p_{\theta}(\mathbf{z}|\mathbf{x})} \\
        & = \mathbb{E}_{q_{\tau}(\mathbf{z}|\mathbf{x})} \log \frac{p_{\theta}(\mathbf{x},\mathbf{z})q_{\tau}(\mathbf{z}|\mathbf{x})}{p_{\theta}(\mathbf{z}|\mathbf{x})q_{\tau}(\mathbf{z}|\mathbf{x})} \\
        & = \mathbb{E}_{q_{\tau}(\mathbf{z}|\mathbf{x})} \log \frac{p_{\theta}(\mathbf{x},\mathbf{z})}{q_{\tau}(\mathbf{z}|\mathbf{x})} + \mathbb{E}_{q_{\tau}(\mathbf{z}|\mathbf{x})}\log \frac{q_{\tau}(\mathbf{z}|\mathbf{x})}{p_{\theta}(\mathbf{z}|\mathbf{x})} \\
        & = \mathbb{E}_{q_{\tau}(\mathbf{z}|\mathbf{x})} \log \frac{p_{\theta}(\mathbf{x},\mathbf{z})}{q_{\tau}(\mathbf{z}|\mathbf{x})} + D_{KL}\left(q_{\tau}(\mathbf{z}|\mathbf{x}) || p_{\theta}(\mathbf{z}|\mathbf{x})\right),
    \end{aligned}
\end{small}
\end{equation}
where $\mathbb{E}_{q_{\tau}(\mathbf{z}|\mathbf{x})} \log \frac{p_{\theta}(\mathbf{x},\mathbf{z})}{q_{\tau}(\mathbf{z}|\mathbf{x})}$ is called evidence lower bound (ELBO) and $D_{KL}(\cdot||\cdot)$ is Kullback-Leibler (KL) divergence. ELBO is the objective function that needs to be optimized by maximizing ELBO via gradient descent, and a fine-trained VAE can be obtained. $D_{KL}\left(q_{\tau}(\mathbf{z}|\mathbf{x}) || p_{\theta}(\mathbf{z}|\mathbf{x})\right)$ shows the gap between log-likelihood and ELBO, which is exactly the mismatch between the VAE encoder and the posterior\cite{shekhovtsov2022vae}.

Ref.~\cite{olsen2022using} introduces a Shapley value approximation approach based on VAE. The contribution function of Shapley value is sampled from the conditional probability $p_{\theta}(\mathbf{x}_{\bar{S}}|\mathbf{x}_S)$ generated by the VAE, and the sampling method is just like the method proposed in this article. The log-likelihood introduced in Ref.~\cite{olsen2022using} is as follows.
\begin{equation}\label{eq-b-4}
\begin{small}
\begin{aligned}
    & \quad \log p_{\theta}(\mathbf{x}_{\bar{S}}|\mathbf{x}_S) \\
    & = \mathbb{E}_{q_{\tau}(\mathbf{z}|\mathbf{x}_{\bar{S}},\mathbf{x}_S)}\left[\log\frac{p_{\theta}(\mathbf{x}_{\bar{S}},\mathbf{z}|\mathbf{x}_S)}{q_{\tau}(\mathbf{z}|\mathbf{x}_{\bar{S}},\mathbf{x}_S)}\right] +\mathbb{E}_{q_{\tau}(\mathbf{z}|\mathbf{x}_{\bar{S}},\mathbf{x}_S)}\left[\log \frac{q_{\tau}(\mathbf{z}|\mathbf{x}_{\bar{S}},\mathbf{x}_S)}{p_{\theta}(\mathbf{z}|\mathbf{x}_{\bar{S}},\mathbf{x}_S)}\right]\\
    & = \mathbb{E}_{q_{\tau}(\mathbf{z}|\mathbf{x}_{\bar{S}},\mathbf{x}_S)}\left[\log\frac{p_{\theta}(\mathbf{x}_{\bar{S}},\mathbf{z}|\mathbf{x}_S)}{q_{\tau}(\mathbf{z}|\mathbf{x}_{\bar{S}},\mathbf{x}_S)}\right]+ D_{KL}(q_{\tau}(\mathbf{z}|\mathbf{x}_{\bar{S}},\mathbf{x}_S) || p_{\theta}(\mathbf{z}|\mathbf{x}_{\bar{S}},\mathbf{x}_S)) \\ 
    & \geq \mathbb{E}_{q_{\tau}(\mathbf{z}|\mathbf{x}_{\bar{S}},\mathbf{x}_S)}\left[\log\frac{p_{\theta}(\mathbf{x}_{\bar{S}},\mathbf{z}|\mathbf{x}_S)}{q_{\tau}(\mathbf{z}|\mathbf{x}_{\bar{S}},\mathbf{x}_S)}\right].
\end{aligned}
\end{small}
\end{equation}

This VAE-based Shapley value estimation method trains the parameters in the VAE by maximizing the first term in Eq.(\ref{eq-b-4}), and then samples from the generated conditional distribution $p_{\theta}(\mathbf{x}_{\bar{S}}|\mathbf{x}_S)$ to calculate the contribution function $v$ in the Shapley value and the Shapley values of each input.

However, a problem arises here, that the generated conditional distribution $p_{\theta}(\mathbf{x}_{\bar{S}}|\mathbf{x}_S)$ convergences to the true conditional distribution $p_{data}(\mathbf{x}_{\bar{S}}|\mathbf{x}_S)$ if and only if $ D_{KL}(q_{\tau}(\mathbf{z}|\mathbf{x}_{\bar{S}},\mathbf{x}_S) || p_{\theta}(\mathbf{z}|\mathbf{x}_{\bar{S}},\mathbf{x}_S)) $ goes to zeros. Unfortunately, it’s not explicit to ensure this happens, since $q_{\tau}(\mathbf{z}|\mathbf{x}_{\bar{S}},\mathbf{x}_S)$ is modeled as exponential families, i.e. multivariate Gaussian distribution, the posterior distribution $p_{\theta}(\mathbf{z}|\mathbf{x})$ is not always possible to use the exponential families to approximate. Thus, for an unknown data distribution $p(\mathbf{x})$, $ D_{KL}(q_{\tau}(\mathbf{z}|\mathbf{x}_{\bar{S}},\mathbf{x}_S) || p_{\theta}(\mathbf{z}|\mathbf{x}_{\bar{S}},\mathbf{x}_S)) $ is hardly going to zeros, which indicates that there is a inference gap between the true distribution and the generated distribution.

According to Ref.~\cite{cremer2018inference}, the inference gap of VAE is decomposed into two components, the approximation gap $\log p_{data}(\mathbf{x})-\log p_{\theta^*}(\mathbf{x})$ and the amortization gap $\log p_{\theta^*}(\mathbf{x})-\log p_{\theta}(\mathbf{x})$, where $\theta^* \in \mathop{\text{argmin}}_{\theta}\log p_{\theta}(\mathbf{x})$. Since the analysis of the model error bound in this paper assumes that the model can achieve theoretical optimal performance after a large number of iterative training, the amortized error of VAE here is negligible. We focus on the approximation gap, followed by the above discussion, we have,
\begin{equation}\label{eq-b-5}
\begin{small}
    \begin{aligned}
    & \quad \log p_{data}(\mathbf{x}_{\bar{S}}|\mathbf{x}_S)-\log p_{\theta^*}(\mathbf{x}_{\bar{S}}|\mathbf{x}_S) \\
    &= D_{KL}\left(q_{\tau}(\mathbf{z}|\mathbf{x}_{\bar{S}},\mathbf{x}_S) || p_{\theta}(\mathbf{z}|\mathbf{x}_{\bar{S}},\mathbf{x}_S)\right).
    \end{aligned}
\end{small}
\end{equation}
Eq.(\ref{eq-b-5}) shows the difference between the log-likelihood of $\log p_{data}(\mathbf{x})$ and $\log p_{\theta^*}(\mathbf{x})$, this difference also leads to the probability density difference,
\begin{equation}\label{eq-b-6}
\begin{small}
    \left| \frac{p_{\theta^*}(\mathbf{x}_{\bar{S}|\mathbf{x}_S=\mathbf{x}_S^*})} {p_{data}(\mathbf{x}_{\bar{S}|\mathbf{x}_S=\mathbf{x}_S^*})} -1 \right| \leq \delta,
\end{small}
\end{equation}
where $\delta$ is the difference of the probability density between $p_{data}$ and $p_{\theta^*}$. It should be noted that $\delta$ is positive related to $D_{KL}\left(q_{\tau}(\mathbf{z}|\mathbf{x}) || p_{\theta}(\mathbf{z}|\mathbf{x})\right)$.

Similar to Assumption 1, assuming that the parameters of the estimated conditional density $p_{\theta}(\mathbf{x}_{\bar{S}}|\mathbf{x}_S=\mathbf{x}_S^*)$ are unbiased estimates of the parameters of the optimal estimated density $p_{\theta^*}(\mathbf{x}_{\bar{S}}|\mathbf{x}_S=\mathbf{x}_S^*)$, and for a small positive $\epsilon_1$, the probability density ratio is as follows.
\begin{equation}
\begin{small}
        \left| \frac{p_{\theta}(\mathbf{x}_{\bar{S}}|\mathbf{x}_S=\mathbf{x}_S^*)} {p_{\theta^*}(\mathbf{x}_{\bar{S}}|\mathbf{x}_S=\mathbf{x}_S^*)} -1 \right| \leq \epsilon_1.
\end{small}
\end{equation}
Then we begin the analysis of VAEAC method. Firstly, $\mathbb{E}[|\hat{v}(S)-v(S)|]$ of VAE-based approach can be derived as follows.
\begin{equation}\label{eq-b-7}
\begin{small}
\begin{aligned}
    & \quad \mathbb{E}[|\hat{v}(S)-v(S)|] \\
    & = \mathbb{E} \left[\left| \frac{1}{K}\sum_{k=1}^K f(\mathbf{x}_{\bar{S}}^{(k)},\mathbf{x}_S^*) -  \mathbb{E}_{p_{data}(\mathbf{x}^*_{\bar{S}}|\mathbf{x}_S=\mathbf{x}_S^*)}\left[ f(\mathbf{x}^*_{\bar{S}},\mathbf{x}_S^*)\right]\right| \right] \\ 
    & = \mathbb{E} \left[ \left| \frac{1}{K}\sum_{k=1}^K f(\mathbf{x}_{\bar{S}}^{(k)},\mathbf{x}_S^*) - \mathbb{E}_{p_{\theta}(\mathbf{x}_{\bar{S}}|\mathbf{x}_S=\mathbf{x}_S^*)}\left[ f(\mathbf{x}_{\bar{S}},\mathbf{x}_S^*)\right]  \right.\right.\\
    &\quad \left. \left.+ \mathbb{E}_{p_{\theta}(\mathbf{x}_{\bar{S}}|\mathbf{x}_S=\mathbf{x}_S^*)}\left[ f(\mathbf{x}_{\bar{S}},\mathbf{x}_S^*)\right] - \mathbb{E}_{p_{\theta^*}(\mathbf{x}_{\bar{S}}|\mathbf{x}_S=\mathbf{x}_S^*)}\left[ f(\mathbf{x}_{\bar{S}},\mathbf{x}_S^*)\right] \right.\right.\\
    &\quad \left.\left. + \mathbb{E}_{p_{\theta^*}(\mathbf{x}_{\bar{S}}|\mathbf{x}_S=\mathbf{x}_S^*)}\left[ f(\mathbf{x}_{\bar{S}},\mathbf{x}_S^*)\right] - \mathbb{E}_{p_{data}(\mathbf{x}^*_{\bar{S}}|\mathbf{x}_S=\mathbf{x}_S^*)}\left[ f(\mathbf{x}^*_{\bar{S}},\mathbf{x}_S^*)\right]\right|\right] \\
    & \leq \mathbb{E} \left[\left| \frac{1}{K}\sum_{k=1}^K f(\mathbf{x}_{\bar{S}}^{(k)},\mathbf{x}_S^*) - \mathbb{E}_{p_{\theta}(\mathbf{x}_{\bar{S}}|\mathbf{x}_S=\mathbf{x}_S^*)}\left[ f(\mathbf{x}_{\bar{S}},\mathbf{x}_S^*)\right] \right| \right] \\
    & \quad + \mathbb{E} \left[\left| \mathbb{E}_{p_{\theta}(\mathbf{x}_{\bar{S}}|\mathbf{x}_S=\mathbf{x}_S^*)}\left[ f(\mathbf{x}_{\bar{S}},\mathbf{x}_S^*)\right] -\mathbb{E}_{p_{\theta^*}(\mathbf{x}_{\bar{S}}|\mathbf{x}_S=\mathbf{x}_S^*)}\left[ f(\mathbf{x}_{\bar{S}},\mathbf{x}_S^*)\right] \right.\right. \\
    &\quad \left.\left. + \mathbb{E}_{p_{\theta^*}(\mathbf{x}_{\bar{S}}|\mathbf{x}_S=\mathbf{x}_S^*)}\left[ f(\mathbf{x}_{\bar{S}},\mathbf{x}_S^*)\right]-  \mathbb{E}_{p_{data}(\mathbf{x}^*_{\bar{S}}|\mathbf{x}_S=\mathbf{x}_S^*)}\left[ f(\mathbf{x}^*_{\bar{S}},\mathbf{x}_S^*)\right] \right|\right].
\end{aligned}
\end{small}
\end{equation}

The first term in the last inequality of Eq.(\ref{eq-b-7}) is called statistical error, and the second is called approximation error. The upper bound of the statistical error is as follows.
\begin{equation}\label{eq-b-8}
\begin{small}
    \mathbb{E} \left[\left| \frac{1}{K}\sum_{k=1}^K f(\mathbf{x}_{\bar{S}}^{(k)},\mathbf{x}_S^*) - \mathbb{E}_{p_{\theta}(\mathbf{x}_{\bar{S}}|\mathbf{x}_S=\mathbf{x}_S^*)}\left[ f(\mathbf{x}_{\bar{S}},\mathbf{x}_S^*)\right] \right| \right] \leq \frac{\sqrt{\pi}}{\sqrt{2K}}.
\end{small}
\end{equation}

The approximation error is related to the amortization gap and the approximation gap, the error introduced from amortization gap can be described as follows.
\begin{equation}
\begin{small}
        \begin{aligned}
            & \quad \mathbb{E} \left[\left| \mathbb{E}_{p_{\theta}(\mathbf{x}_{\bar{S}}|\mathbf{x}_S=\mathbf{x}_S^*)}\left[ f(\mathbf{x}_{\bar{S}},\mathbf{x}_S^*)\right] -\mathbb{E}_{p_{\theta^*}(\mathbf{x}_{\bar{S}}|\mathbf{x}_S=\mathbf{x}_S^*)}\left[ f(\mathbf{x}_{\bar{S}},\mathbf{x}_S^*)\right] \right| \right] \\
            & = \mathbb{E} \left[ \left| \int f(\mathbf{x}_{\bar{S}},\mathbf{x}_S^*)p_\theta(\mathbf{x}_{\bar{S}}|\mathbf{x}_S=\mathbf{x}_S^*)d\mathbf{x}_{\bar{S}} - \int f(\mathbf{x}_{\bar{S}}^*,\mathbf{x}_S^*)p_{\theta^*}(\mathbf{x}_{\bar{S}}|\mathbf{x}_S=\mathbf{x}_S^*)d\mathbf{x}_{\bar{S}}   \right| \right] \\
            & = \mathbb{E} \left[ \left| \int f(\mathbf{x}_{\bar{S}},\mathbf{x}_S^*)\frac{p_\theta(\mathbf{x}_{\bar{S}}|\mathbf{x}_S=\mathbf{x}_S^*)}{p_{\theta^*}(\mathbf{x}_{\bar{S}}|\mathbf{x}_S=\mathbf{x}_S^*)}p_{\theta^*}(\mathbf{x}_{\bar{S}}|\mathbf{x}_S=\mathbf{x}_S^*)d\mathbf{x}_{\bar{S}}  \right.\right.\\
            & \left.\left.- \int f(\mathbf{x}_{\bar{S}}^*,\mathbf{x}_S^*) p_{\theta^*}(\mathbf{x}_{\bar{S}}|\mathbf{x}_S=\mathbf{x}_S^*)d\mathbf{x}_{\bar{S}}\right| \right] \\
            & = \mathbb{E} \left[ \left| \mathbb{E}_{p_{\theta^*}(\mathbf{x}_{\bar{S}}|\mathbf{x}_S=\mathbf{x}_S^*)}\left[ \left(f(\mathbf{x}_{\bar{S}},\mathbf{x}_S^*)\frac{p_\theta(\mathbf{x}_{\bar{S}}|\mathbf{x}_S=\mathbf{x}_S^*)}{p_{\theta^*}(\mathbf{x}_{\bar{S}}|\mathbf{x}_S=\mathbf{x}_S^*)} -f(\mathbf{x}_{\bar{S}}^*,\mathbf{x}_S^*) \right)\right] \right| \right] \\
            & \leq \mathbb{E} \left[ \mathbb{E}_{p_{\theta^*}(\mathbf{x}_{\bar{S}}|\mathbf{x}_S=\mathbf{x}_S^*)} \left[ \left|f(\mathbf{x}_{\bar{S}},\mathbf{x}_S^*)\frac{p_\theta(\mathbf{x}_{\bar{S}}|\mathbf{x}_S=\mathbf{x}_S^*)}{p_{\theta^*}(\mathbf{x}_{\bar{S}}|\mathbf{x}_S=\mathbf{x}_S^*)}-f(\mathbf{x}_{\bar{S}}^*,\mathbf{x}_S^*) \right| \right] \right] \\
            & = \epsilon_1.
        \end{aligned}
\end{small}
\end{equation}
The the error introduced from approximation gap can be described as follows,
\begin{equation}\label{eq-b-9}
\begin{small}
\begin{aligned}
    & \quad \mathbb{E} \left[\left| \mathbb{E}_{p_{\theta^*}(\mathbf{x}_{\bar{S}}|\mathbf{x}_S=\mathbf{x}_S^*)}\left[ f(\mathbf{x}_{\bar{S}},\mathbf{x}_S^*)\right] -\mathbb{E}_{p_{data}(\mathbf{x}_{\bar{S}}|\mathbf{x}_S=\mathbf{x}_S^*)}\left[ f(\mathbf{x}_{\bar{S}},\mathbf{x}_S^*)\right] \right| \right] \\
    & = \mathbb{E} \left[ \left| \int f(\mathbf{x}_{\bar{S}},\mathbf{x}_S^*)p_{\theta^*}(\mathbf{x}_{\bar{S}}|\mathbf{x}_S=\mathbf{x}_S^*)d\mathbf{x}_{\bar{S}} \right.\right.\\
    & \quad \left.\left.- \int f(\mathbf{x}_{\bar{S}}^*,\mathbf{x}_S^*)p_{data}(\mathbf{x}_{\bar{S}}|\mathbf{x}_S=\mathbf{x}_S^*)d\mathbf{x}_{\bar{S}}   \right| \right] \\
    & = \mathbb{E} \left[ \left| \int f(\mathbf{x}_{\bar{S}},\mathbf{x}_S^*)\frac{p_{\theta^*}(\mathbf{x}_{\bar{S}}|\mathbf{x}_S=\mathbf{x}_S^*)}{p_{data}(\mathbf{x}_{\bar{S}}|\mathbf{x}_S=\mathbf{x}_S^*)}p_{data}(\mathbf{x}_{\bar{S}}|\mathbf{x}_S=\mathbf{x}_S^*)d\mathbf{x}_{\bar{S}}  \right.\right.\\
    & \quad - \left.\left.\int f(\mathbf{x}_{\bar{S}}^*,\mathbf{x}_S^*) p_{data}(\mathbf{x}_{\bar{S}}|\mathbf{x}_S=\mathbf{x}_S^*)d\mathbf{x}_{\bar{S}}\right| \right] \\
    & = \mathbb{E} \left[ \left| \mathbb{E}_{p_{data}(\mathbf{x}_{\bar{S}}|\mathbf{x}_S=\mathbf{x}_S^*)}\left[ \left(f(\mathbf{x}_{\bar{S}},\mathbf{x}_S^*)\frac{p_{\theta^*}(\mathbf{x}_{\bar{S}}|\mathbf{x}_S=\mathbf{x}_S^*)}{p_{data}(\mathbf{x}_{\bar{S}}|\mathbf{x}_S=\mathbf{x}_S^*)} -f(\mathbf{x}_{\bar{S}}^*,\mathbf{x}_S^*) \right)\right] \right| \right] \\ 
    \notag
\end{aligned}
\end{small}
\end{equation}
\begin{equation}
\begin{small}
\begin{aligned}
    & \leq \mathbb{E} \left[ \mathbb{E}_{p_{data}(\mathbf{x}_{\bar{S}}|\mathbf{x}_S=\mathbf{x}_S^*)} \left[ \left|f(\mathbf{x}_{\bar{S}},\mathbf{x}_S^*)\frac{p_{\theta^*}(\mathbf{x}_{\bar{S}}|\mathbf{x}_S=\mathbf{x}_S^*)}{p_{data}(\mathbf{x}_{\bar{S}}|\mathbf{x}_S=\mathbf{x}_S^*)}-f(\mathbf{x}_{\bar{S}}^*,\mathbf{x}_S^*) \right| \right] \right] \\
    & = \delta.
\end{aligned}
\end{small}
\end{equation}

Thus, the approximation error is bounded with $\epsilon_1+\delta$. Finally, the estimation error bound of VAE-based approach is as follows.
\begin{equation}\label{eq-b-16}
\begin{small}
        \mathbb{E}[|\hat{v}(S)-v(S)|] = \frac{\sqrt{\pi}}{\sqrt{2K}} + \epsilon_1+\delta.
\end{small}
\end{equation}
It can be seen from Eq.(\ref{eq-b-16}), the error bound of VAE-based method is limited by the variational inference. Once $D_{KL}\left(q_{\tau}(\mathbf{z}|\mathbf{x}) || p_{\theta}(\mathbf{z}|\mathbf{x})\right)$ becomes large, the error of this method can not be ignored.

\setcounter{equation}{0}
\renewcommand\theequation{C\arabic{equation}}
\setcounter{proposition}{0}
\renewcommand\theproposition{C\arabic{proposition}}
\setcounter{corollary}{0}
\renewcommand\thecorollary{C\arabic{corollary}}
\setcounter{lemma}{0}
\renewcommand\thelemma{C\arabic{lemma}}
\setcounter{table}{0}
\renewcommand\thetable{C\arabic{table}}
\setcounter{figure}{0}
\renewcommand\thefigure{C\arabic{figure}}
\subsection{Variance analysis of importance sampling in EmSHAP}
In this subsection, we analyze the variance of importance sampling used to estimate the partition function in EmSHAP. The difference between the variance of samples drawn from the proposal conditional distribution and those from the true conditional distribution admits a closed-form expression. Specifically, for samples from the true conditional distribution, the variance is as follows.
\begin{equation}\label{eq-3-4}
\begin{small}
       \text{Var}_{\mathbf{x}_{\bar{S}} \sim p(\mathbf{x}_{\bar{S}}|\mathbf{x}_S)}[Z_{{\bar{S}};\mathbf{x}_S}]
       = \mathbb{E}_{\mathbf{x}_{\bar{S}} \sim p(\mathbf{x}_{\bar{S}}|\mathbf{x}_S)}[Z_{{\bar{S}};\mathbf{x}_S}^2]-\left(\mathbb{E}_{\mathbf{x}_{\bar{S}} \sim p(\mathbf{x}_{\bar{S}}|\mathbf{x}_S)}[Z_{{\bar{S}};\mathbf{x}_S}]\right)^2.
\end{small}
\end{equation}
Meanwhile, for samples drawn from the proposal conditional distribution, the variance of the importance-weighted estimator is,
\begin{equation}\label{eq-3-5}
\begin{small}
\begin{aligned}
       &\quad \text{Var}_{\mathbf{x}_{\bar{S}} \sim q(\mathbf{x}_{\bar{S}}|\mathbf{x}_S)}\left[Z_{{\bar{S}};\mathbf{x}_S}\frac{p(\mathbf{x}_{\bar{S}}|\mathbf{x}_S)}{q(\mathbf{x}_{\bar{S}}|\mathbf{x}_S)}\right]  \\
       &= \mathbb{E}_{\mathbf{x}_{\bar{S}} \sim q(\mathbf{x}_{\bar{S}}|\mathbf{x}_S)}\left[\left(Z_{{\bar{S}};\mathbf{x}_S}\frac{p(\mathbf{x}_{\bar{S}}|\mathbf{x}_S)}{q(\mathbf{x}_{\bar{S}}|\mathbf{x}_S)}\right)^2\right]  \\
       & \quad- \left(\mathbb{E}_{\mathbf{x}_{\bar{S}} \sim q(\mathbf{x}_{\bar{S}}|\mathbf{x}_S)}\left[Z_{{\bar{S}};\mathbf{x}_S}\frac{p(\mathbf{x}_{\bar{S}}|\mathbf{x}_S)}{q(\mathbf{x}_{\bar{S}}|\mathbf{x}_S)}\right]\right)^2 \\
       &= \mathbb{E}_{\mathbf{x}_{\bar{S}} \sim p(\mathbf{x}_{\bar{S}}|\mathbf{x}_S)}\left[Z_{{\bar{S}};\mathbf{x}_S}^2\frac{p(\mathbf{x}_{\bar{S}}|\mathbf{x}_S)}{q(\mathbf{x}_{\bar{S}}|\mathbf{x}_S)}\right] - \left(\mathbb{E}_{\mathbf{x}_{\bar{S}} \sim p(\mathbf{x}_{\bar{S}}|\mathbf{x}_S)}[Z_{{\bar{S}};\mathbf{x}_S}]\right)^2.
\end{aligned}
\end{small}
\end{equation}

Subtracting Eq.(\ref{eq-3-5}) from Eq.(\ref{eq-3-4}) yields
\begin{equation}\label{eq-3-3}
\begin{small}
\begin{aligned}
    & \quad \text{Var}_{\mathbf{x}_{\bar{S}} \sim p(\mathbf{x}_{\bar{S}}|\mathbf{x}_S)}[Z_{{\bar{S}};\mathbf{x}_S}] - \text{Var}_{\mathbf{x}_{\bar{S}} \sim q(\mathbf{x}_{\bar{S}}|\mathbf{x}_S)}[Z_{{\bar{S}};\mathbf{x}_S}] \\
    & =
    \mathbb{E}_{\mathbf{x}_{\bar{S}} \sim p(\mathbf{x}_{\bar{S}}|\mathbf{x}_S)}\left[Z_{{\bar{S}};\mathbf{x}_S}^2\right] -
    \mathbb{E}_{\mathbf{x}_{\bar{S}} \sim p(\mathbf{x}_{\bar{S}}|\mathbf{x}_S)}\left[Z_{{\bar{S}};\mathbf{x}_S}^2 \frac{p(\mathbf{x}_{\bar{S}}|\mathbf{x}_S)}{q(\mathbf{x}_{\bar{S}}|\mathbf{x}_S)}\right].
\end{aligned}
\end{small}
\end{equation}

As can be seen from Eq.(\ref{eq-3-3}), when the proposal distribution $q(\mathbf{x}_{\bar{S}}|\mathbf{x}_S)$ approaches the true distribution $p(\mathbf{x}_{\bar{S}}|\mathbf{x}_S)$, the variance discrepancy between the approximate partition function $Z_{{\bar{S}};\mathbf{x}_S}$ and the true partition function diminishes and eventually vanishes.

\setcounter{equation}{0}
\renewcommand\theequation{C\arabic{equation}}
\setcounter{proposition}{0}
\renewcommand\theproposition{C\arabic{proposition}}
\setcounter{corollary}{0}
\renewcommand\thecorollary{C\arabic{corollary}}
\setcounter{lemma}{0}
\renewcommand\thelemma{C\arabic{lemma}}
\setcounter{table}{0}
\renewcommand\thetable{C\arabic{table}}
\subsection{Structure of predictive/classification models used in Section VI}
In this subsection, the structures of predictive/classification models used in Section VI are presented, which are shown in Tables~\ref{table-c1} to \ref{table-c4}. All four models are trained using the Adam optimizer with a learning rate of 0.001. The training epoch limit is set to 100 and the training process is stopped when the validation accuracy does not improve for 10 consecutive epochs.
\begin{table}[htbp]
\caption{The structure of CNN for classification of MNIST dataset}
\label{table-c1}
\centering
\renewcommand\arraystretch{1}
\setlength{\tabcolsep}{2mm}{
\begin{tabular}{cc}
\toprule
Layer & Configuration\\
\midrule
input & $28\times 28\times 1$ grayscale image \\
conv1 & conv$_{1, 10, 3\times3}$+ReLU+MaxPool\\
conv2 & conv$_{10, 20, 5\times5}$+ReLU+MaxPool\\
fc1 & fc$_{320, 10}$+Softmax\\
\bottomrule
\end{tabular}}
\end{table}

\begin{table}[htbp]
\caption{The structure of MLP for classification of ADTI dataset}
\label{table-c2}
\centering
\renewcommand\arraystretch{1}
\setlength{\tabcolsep}{2mm}{
\begin{tabular}{cc}
\toprule
Layer & Configuration\\
\midrule
input & $12$ \\
fc1 & fc$_{64,64}$+ReLU\\
fc2 & fc$_{64, 64}$+ReLU\\
fc3 & fc$_{64,2}$+Softmax\\
\bottomrule
\end{tabular}}
\end{table}

\begin{table}[htbp]
\caption{The structure of Informer for prediction of ETT dataset}
\label{table-c3}
\centering
\renewcommand\arraystretch{1}
\setlength{\tabcolsep}{2mm}{
\begin{tabular}{cc}
\toprule
Layer & Configuration\\
\midrule
input & $120\times6$ \\
encoder & ProbSparse self-attention$_{120\times512}$\\
decoder & generative decoder$_{48\times512}$\\
fc3 & fc$_{512, 1}$\\
\bottomrule
\end{tabular}}
\end{table}

\begin{table}[htbp]
\caption{The structure of CAML for classification of MIMIC dataset}
\label{table-c4}
\centering
\renewcommand\arraystretch{1}
\setlength{\tabcolsep}{2mm}{
\begin{tabular}{cc}
\toprule
Layer & Configuration\\
\midrule
input & 2500 \\
embedding & $10947\times100$\\
dropout & dropout(0.5)\\
conv1 & conv$_{100, 100, 5\times5, padding=2}$+tanh\\
attention & fc$_{100, 50}$+softmax+dot-product\\
fc1 & fc$_{100, 50}$\\
\bottomrule
\end{tabular}}
\end{table}

\setcounter{equation}{0}
\renewcommand\theequation{D\arabic{equation}}
\setcounter{proposition}{0}
\renewcommand\theproposition{D\arabic{proposition}}
\setcounter{corollary}{0}
\renewcommand\thecorollary{D\arabic{corollary}}
\setcounter{lemma}{0}
\renewcommand\thelemma{D\arabic{lemma}}
\setcounter{table}{0}
\renewcommand\thetable{D\arabic{table}}
\setcounter{figure}{0}
\renewcommand\thefigure{D\arabic{figure}}
\subsection{More details for feature attribution in MNIST and MIMIC-III datasets}
In this subsection, additional experimental results using different feature attribution methods on the MNIST and MIMIC-III datasets are presented.
\subsubsection{Feature attribution on MNIST dataset under noisy environment}

To test the performance of the proposed method under noisy data, Gaussian distributed noises with zero mean and varying degrees of variance are added to the original images. The variances vary from 0.1 to 0.5, with an interval of 0.1. Feature attribution is then performed on these noisy images. Figs.\ref{fig5} and \ref{fig6} present the attribution results produced by the six methods on the noisy images.
\begin{figure}[htbp]
\centerline{\includegraphics[width=\linewidth]{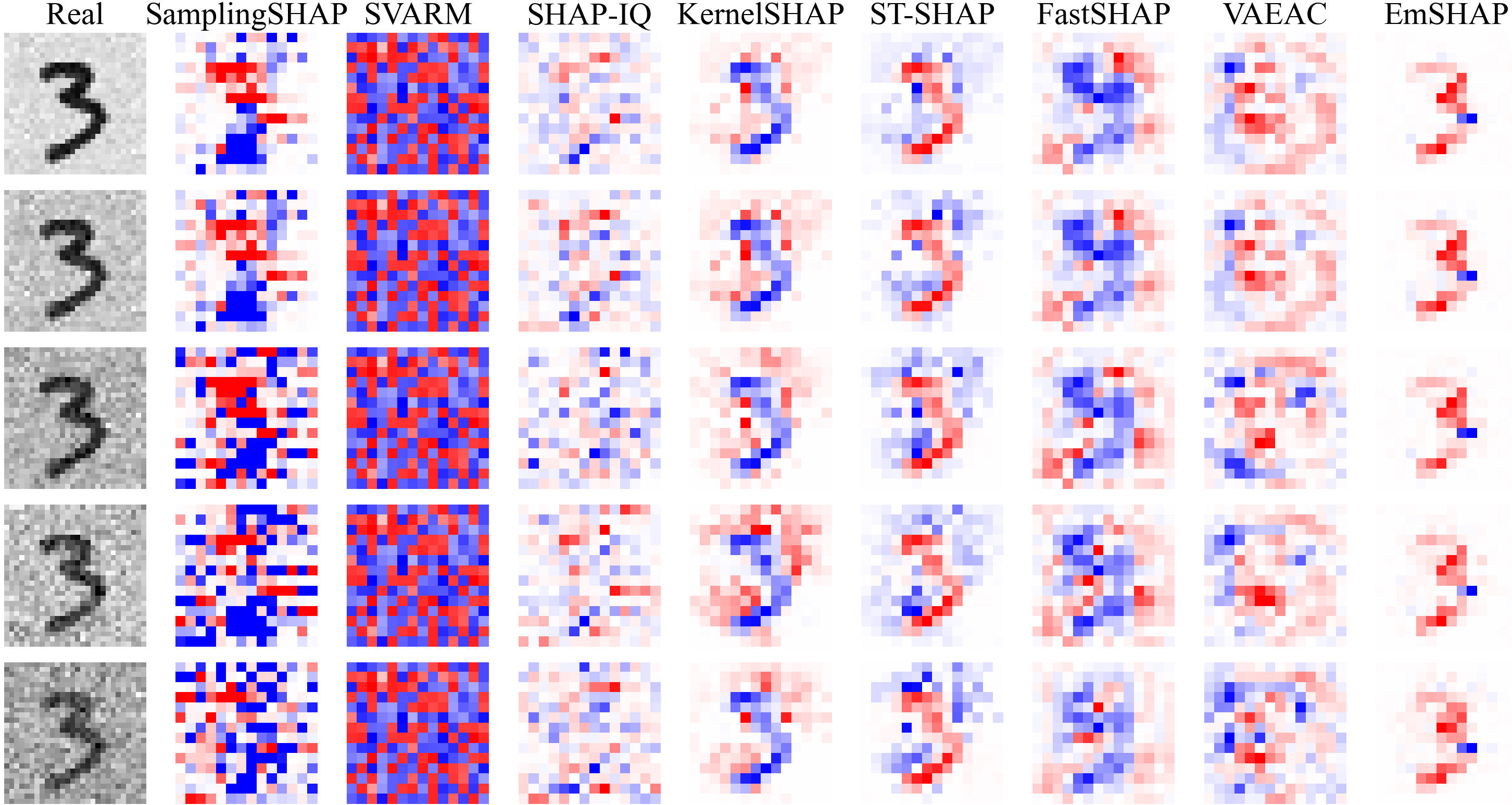}}
\caption{Feature attributions for noise-polluted image of digit 3. From top to bottom: noise variances of 0.1, 0.2, 0.3, 0.4, and 0.5.}
\label{fig5}
\end{figure}
\begin{figure}[htbp]
\centerline{\includegraphics[width=\linewidth]{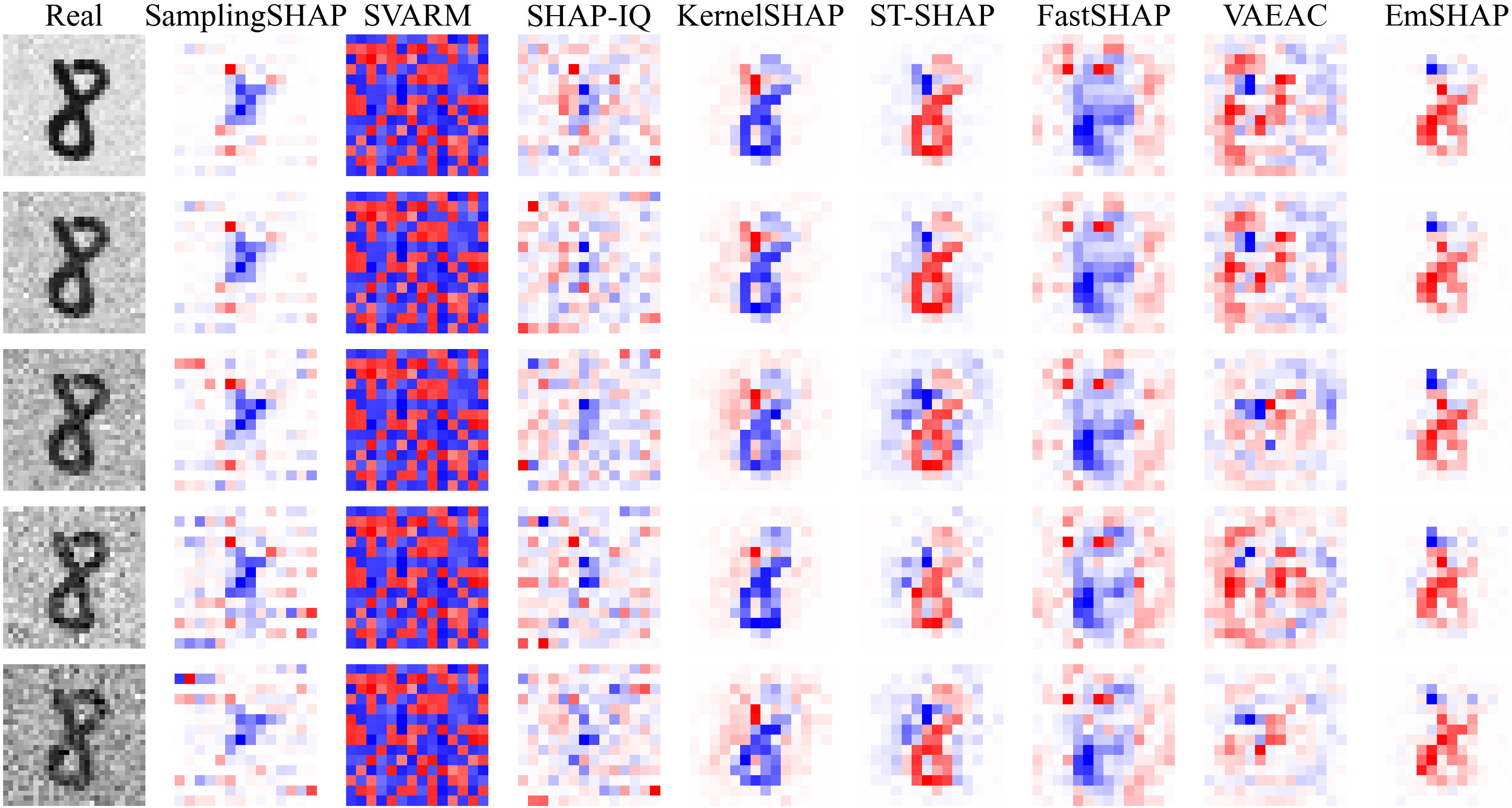}}
\caption{Feature attributions for noise-polluted image of digit 8. From top to bottom: noise variances of 0.1, 0.2, 0.3, 0.4, and 0.5.}
\label{fig6}
\end{figure}
It can be seen from Figs.~\ref{fig5} and \ref{fig6} that when the variances of noise increase, the feature attribution performance of all methods deteriorates. Whilst the sharpest performance deterioration can be observed for SamplingSHAP, the performance for KernelSHAP, ST-SHAP, and EmSHAP are relatively stable. This verifies the robustness of EmSHAP in the case of noise.


\subsubsection{Feature attribution on MIMIC-III dataset}
This subsection presents additional feature attribution results on the MIMIC-III dataset to further validate the effectiveness of EmSHAP.
Figs.~\ref{fig-E2} and~\ref{fig-E3} present the attribution results on two additional cases from the MIMIC-III dataset, corresponding to subject IDs 102024 and 136168, respectively. For subject ID 102024, the predicted codes include "\textit{end stage renal disease}" and "\textit{hyperpotassemia}". EmSHAP correctly highlights clinically meaningful terms such as "\textit{lupus}", "\textit{hyperkalemia}", and "\textit{Kayexalate}", which are directly associated with the patient’s renal dysfunction and electrolyte imbalance. Similarly, for subject ID 136168, the model predicts "\textit{coronary atherosclerosis of native coronary artery}" and "\textit{paroxysmal ventricular tachycardia}". EmSHAP assigns high attribution scores to terms such as "\textit{myocardial}, "\textit{ST elevation}", "\textit{electrocardiogram}", and "\textit{right atrial}", which are well recognized as indicative of acute myocardial infarction and arrhythmia, thereby demonstrating consistency with medical knowledge.

\begin{figure}[htbp]
\centering
\includegraphics[width=0.9\linewidth]{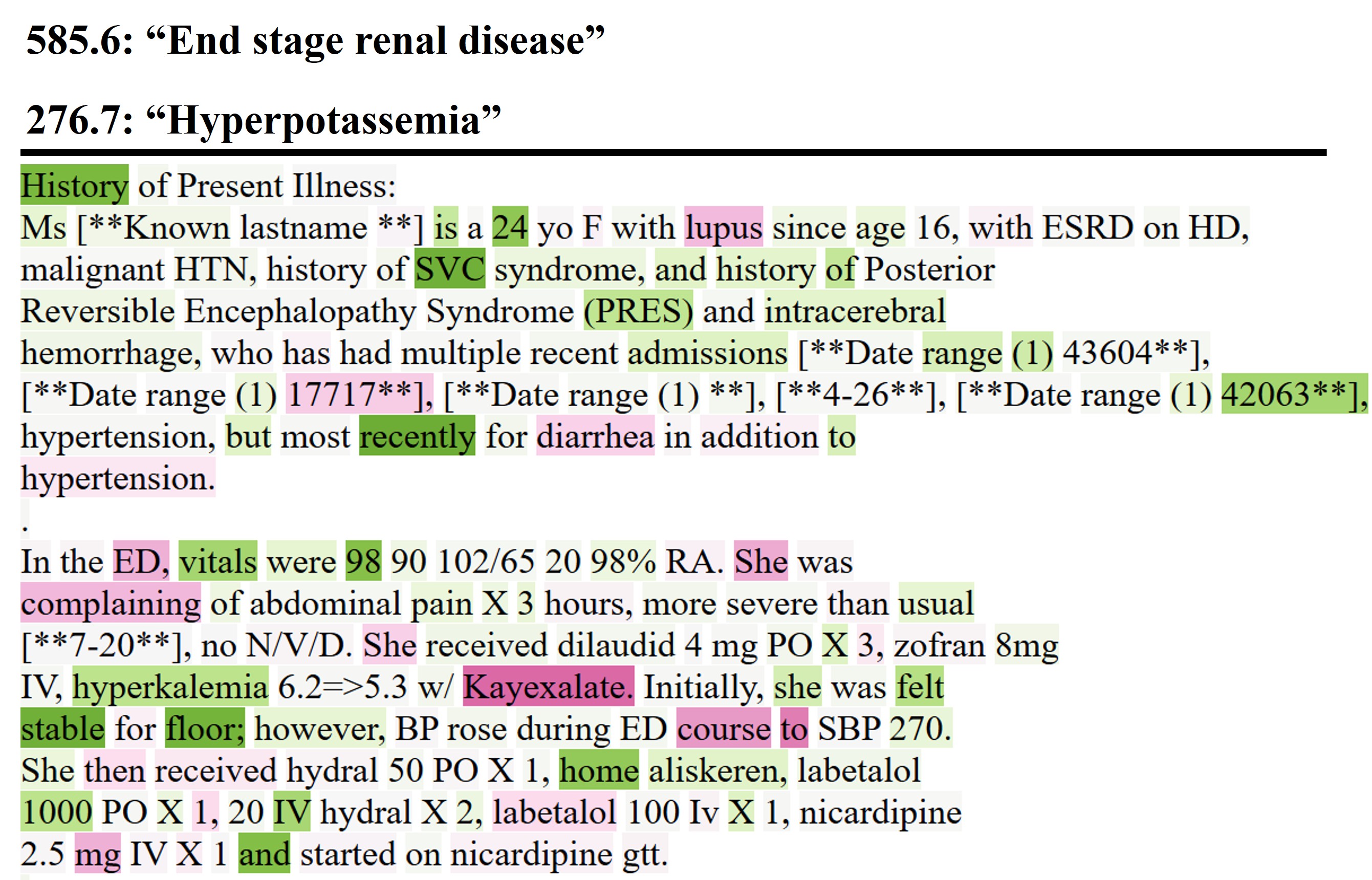}
\caption{Feature attribution results on MIMIC-III dataset (subject ID: 102024).}
\label{fig-E2}
\end{figure}

\begin{figure}[htbp]
\centering
\includegraphics[width=0.9\linewidth]{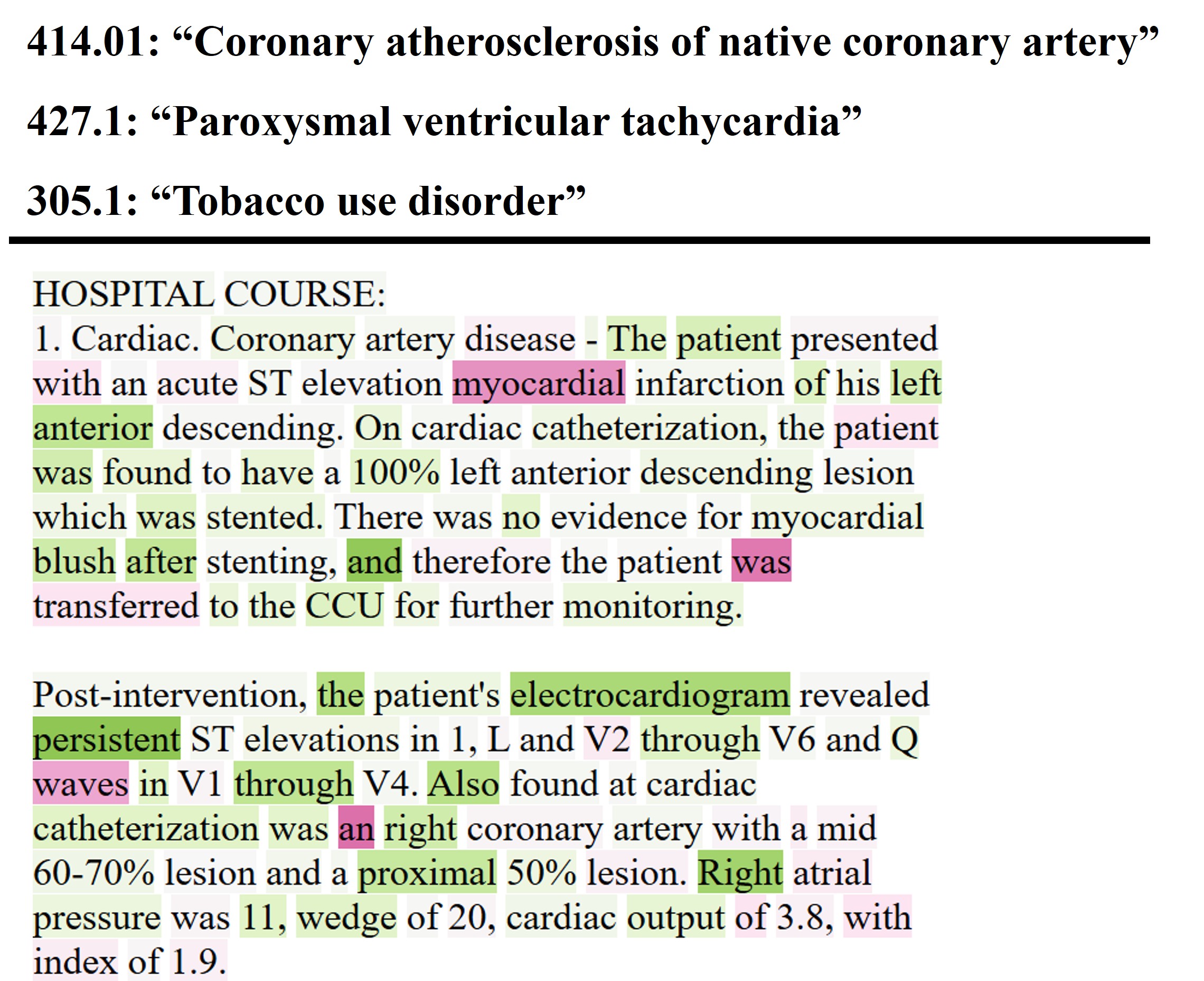}
\caption{Feature attribution results on MIMIC-III dataset (subject ID: 136168).}
\label{fig-E3}
\end{figure}

Despite the better feature attribution results, EmSHAP also spuriously produces attributions to general high-frequency words with limited clinical meaning (e.g., general descriptors like "\textit{she}" or "\textit{an}"). This likely arises due to the model's sensitivity to frequent background terms in clinical narratives. While these cases indicate that the method may not always yield perfectly precise explanations, EmSHAP nevertheless is proved to be capable of capturing clinically important signals in large-scale, high-dimensional datasets such as MIMIC-III. These results suggest that EmSHAP offers practically useful interpretability for automated medical coding by identifying the key textual features driving predictions under complex real-world conditions.

\setcounter{equation}{0}
\renewcommand\theequation{E\arabic{equation}}
\setcounter{proposition}{0}
\renewcommand\theproposition{E\arabic{proposition}}
\setcounter{corollary}{0}
\renewcommand\thecorollary{E\arabic{corollary}}
\setcounter{lemma}{0}
\renewcommand\thelemma{E\arabic{lemma}}
\setcounter{table}{0}
\renewcommand\thetable{E\arabic{table}}
\setcounter{figure}{0}
\renewcommand\thefigure{E\arabic{figure}}

\subsection{Performance analysis of EmSHAP under different parameters and model structures}

\begin{table}[!htb]
\caption{SIC AUC results for different masking rates}
\label{table7}
\centering
\renewcommand\arraystretch{0.5}
\setlength{\tabcolsep}{1mm}{
\begin{tabular}{cccccccc}
\toprule
\multirowcell{3}{Dataset} & \multirowcell{3}{SIC \\ AUC} &\multicolumn{6}{c}{Masking rate}\cr\cmidrule{3-8}
& & \multirowcell{2}{0.5} & \multirowcell{2}{0.6} & \multirowcell{2}{0.7} & \multirowcell{2}{0.8} & \multirowcell{2}{0.9} & Dynamic \\ & & & & & & & masking\\
\midrule
\multirowcell{2}{MNIST} & ADD$\uparrow$ & 0.954 & 0.937 & 0.945 & 0.943 & 0.946 & $\mathbf{0.965}$\\
& DEL$\downarrow$ & 0.103 & 0.140 & 0.129 & 0.132 & 0.126 & $\mathbf{0.037}$\\
\midrule
\multirowcell{2}{ADTI} & ADD$\uparrow$ & 0.927 & 0.928 & 0.931 & 0.923 & 0.921 & $\mathbf{0.933}$\\
& DEL$\downarrow$ & 0.420 & 0.420 & 0.413 & 0.433 & 0.436 & $\mathbf{0.411}$\\
\midrule
\multirowcell{2}{ETT} & ADD$\uparrow$ & 0.903 & 0.896 & 0.897 & 0.911 & 0.905 & $\mathbf{0.915}$\\
& DEL$\downarrow$ & 0.292 & 0.311 & 0.308 & 0.251 & 0.267 & $\mathbf{0.243}$\\
\midrule
\multirowcell{2}{MIMIC} & ADD$\uparrow$ & 0.667 & 0.591 & 0.620 & 0.605 & 0.603 & \textbf{0.699}\\
& DEL$\downarrow$ & 0.386 & 0.376 & 0.389 & 0.384 & 0.404 & \textbf{0.384}\\
\bottomrule
\end{tabular}}
\end{table}

\begin{table}[!htb]
\caption{SIC AUC results for different context vector dimensions}
\label{table9}
\centering
\renewcommand\arraystretch{0.5}
\setlength{\tabcolsep}{1mm}{
\begin{tabular}{cccccccc}
\toprule
\multirowcell{2}{Dataset} & \multirowcell{2}{SIC AUC} &\multicolumn{6}{c}{Dimension of context vector}\cr\cmidrule{3-8}
& & 0 & 8 & 16 & 32 & 64 & 128 \\
\midrule
\multirowcell{2}{MNIST} & ADD$\uparrow$ & 0.895 & 0.907& 0.944 & 0.965 & 0.965 & 0.971 \\
& DEL$\downarrow$  & 0.171 & 0.166 & 0.137 & 0.037 & 0.041 & 0.032 \\
\midrule
\multirowcell{2}{ADTI} & ADD$\uparrow$  & 0.893 & 0.910 & 0.933 & 0.935 & 0.937 & 0.942\\
& DEL$\downarrow$ &0.466 & 0.516 & 0.411 & 0.395 & 0.386 & 0.335\\
\midrule
\multirowcell{2}{ETT} & ADD$\uparrow$  & 0.847 & 0.868 & 0.908 & 0.915 & 0.926 & 0.936 \\
& DEL$\downarrow$  & 0.477 & 0.341 & 0.266 & 0.243 & 0.407 & 0.319\\
\midrule
\multirowcell{2}{MIMIC} & ADD$\uparrow$ & 0.571 & 0.602 & 0.630 & 0.699 & 0.701 & 0.701\\
& DEL$\downarrow$ & 0.492 & 0.441 & 0.405 & 0.384 & 0.373 & 0.371\\
\bottomrule
\end{tabular}}
\end{table}

\begin{table*}[!htb]
\caption{Performance of EmSHAP under different latent dimensions}
\label{table8}
\centering
\renewcommand\arraystretch{0.5}
\setlength{\tabcolsep}{2mm}{
\begin{tabular}{cccc|cc|cc|cc}
\toprule
\multirowcell{2}{Dataset} & \multirowcell{2}{Latent \\ dimension} & \multicolumn{2}{c|}{SIC AUC}& \multicolumn{2}{c|}{Run time}& \multicolumn{2}{c|}{Memory usage}& \multicolumn{2}{c}{Energy consumption}\cr\cmidrule{3-4}\cmidrule{5-6}\cmidrule{7-8}\cmidrule{9-10}
& & ADD$\uparrow$ & DEL$\downarrow$ &Training$\downarrow$ & Inference$\downarrow$ &Training$\downarrow$ & Inference$\downarrow$ &Training$\downarrow$ & Inference$\downarrow$\\
\midrule
\multirowcell{4}{MNIST} & 16 & 0.931 & 0.152 & 2813.42s & 1.46s & 1706.36MB&  1788.95MB & 7.23KJ & 1.49KJ \\
                        & 32 & 0.965 & 0.037 & 2977.10s & 1.63s & 1823.91MB & 1879.68MB & 8.62KJ & 2.13KJ  \\
                        & 64 & 0.965 & 0.033 & 3460.16s & 4.63s & 1946.19MB & 2024.79MB &9.48KJ & 2.63KJ\\
                        & 128 & 0.966 & 0.031 & 4103.57s & 13.89s & 2237.85MB& 2242.40MB  & 11.21KJ & 3.15KJ\\
\midrule
\multirowcell{4}{ADTI} & 16 & 0.921 & 0.466 & 230.49s & 0.33s & 918.45MB & 943.34MB & 11.38KJ& 0.05KJ\\
                        & 32 & 0.933 & 0.411 & 270.90s & 0.87s & 966.58MB & 946.81MB & 15.53KJ & 0.05KJ \\
                        & 64 & 0.950 & 0.368 &303.83s& 1.04s &1019.57MB & 1046.63MB & 19.12KJ & 0.07KJ \\
                        & 128 & 0.951 & 0.355 &326.36s &1.84s & 1120.41MB&  1148.21MB & 22.45KJ & 0.09KJ \\
\midrule
\multirowcell{4}{ETT} & 16 & 0.884 & 0.256 & 10.76s & 2E-3s & 519.68MB & 538.77MB & 0.82KJ & 0.03KJ\\
                        & 32  & 0.915 & 0.243 & 16.55s & 4E-3s & 554.11MB & 540.46MB & 0.82KJ & 0.03KJ  \\
                        & 64 & 0.920 & 0.227 & 25.45s & 0.10s & 611.91MB&  638.35MB & 0.82KJ & 0.03KJ \\
                        & 128 & 0.923 & 0.220 & 32.17s & 0.39s & 649.00MB & 653.30MB & 0.83KJ & 0.03KJ \\
\midrule
\multirowcell{4}{MIMIC} & 16 & 0.600 & 0.393 & 851.10s & 58.15s & 1159.61MB & 1169.85MB & 27.71KJ  & 8.81KJ \\
                        & 32 & 0.699 & 0.384 & 1168.72s & 59.93s & 1161.51MB & 1168.10MB & 28.03KJ & 9.16KJ \\
                        & 64 & 0.700 & 0.375 & 1394.08s& 72.99s & 1261.87MB & 1217.31MB & 29.35KJ & 9.37KJ\\
                        & 128 & 0.710 & 0.372 & 2266.46s & 86.59s & 1367.76MB & 1320.65MB & 53.79KJ & 10.33KJ \\
\bottomrule
\end{tabular}}
\end{table*}

\begin{table*}[!htb]
\caption{Comparison of different RNN-based proposal networks}
\label{table6}
\centering
\renewcommand\arraystretch{0.5}
\setlength{\tabcolsep}{2mm}{
\begin{tabular}{cccc|cc|cc}
\toprule
\multirowcell{2}{Dataset} & \multirowcell{2}{Proposal \\ network} & \multicolumn{2}{c|}{SIC AUC}& \multicolumn{2}{c|}{Run time}& \multicolumn{2}{c}{Memory usage}\cr\cmidrule{3-4}\cmidrule{5-6}\cmidrule{7-8}
& & ADD$\uparrow$ & DEL$\downarrow$ &Training$\downarrow$ & Inference$\downarrow$ &Training$\downarrow$ & Inference$\downarrow$ \\
\midrule
\multirowcell{4}{MNIST} & LSTM       & 0.952   & 0.108  & 3406.11s  & 2.01s  & 2067.50MB & 1999.63MB  \\
                        & GRU   & 0.965 & 0.037 & 2977.10s & 1.63s & 1879.91MB & 1823.68MB\\
                        & BiGRU	 & 0.942 &	0.084	&4108.33s &	2.75s &	3343.39MB &	4109.18MB\\
                        & Att-GRU&	0.966&	0.042&	4247.81s&	2.94s&	3341.72MB&	4111.50MB\\
\midrule
\multirowcell{4}{ADTI} & LSTM  & 0.930  & 0.416  & 303.00s   & 0.91s & 995.50MB  & 932.71MB \\
                        & GRU            & 0.933   & 0.411 & 270.90s & 0.87s & 966.58MB & 946.81MB\\
                        & BiGRU&	0.935&	0.403&	288.48s&	1.63s&	3338.62MB&	3531.10MB \\
                        & Att-GRU&	0.954&	0.494&	312.40s&	1.34s&	3345.10MB&	3535.34MB\\
\midrule
\multirowcell{4}{ETT}  & LSTM  & 0.906  & 0.292  & 17.39s   & 6E-3s & 565.50MB  & 552.71MB \\
                        & GRU            & 0.915   & 0.243 & 16.55s & 4E-3s & 554.11MB & 540.46MB\\
                        & BiGRU & 0.844	& 0.373	& 19.39s & 	0.03s	& 934.01MB & 	954.30MB \\
                        & Att-GRU & 0.843&	0.417&	20.24s&	0.04s&	1047.29MB&	1054.71MB \\

\midrule
\multirowcell{4}{MIMIC}  & LSTM  & 0.703  &  0.377 & 1884.14s   & 98.92s & 2371.65MB  & 2377.12MB \\
                        & GRU  &  0.699  & 0.384  & 1168.72s   & 59.33s & 1165.51MB   & 1168.10MB\\
                        & BiGRU &0.681	&0.366	&1661.75s	&84.92s	&2265.17MB	&2130.61MB \\
                        & Att-GRU & 0.598	&0.376	&1667.51s	&71.34s	&2390.36MB	&3889.35MB \\
\bottomrule
\end{tabular}}
\end{table*}

\begin{table*}[!htb]
\caption{Results of ablation experiments}
\label{table5}
\centering
\renewcommand\arraystretch{0.5}
\setlength{\tabcolsep}{2mm}{
\begin{tabular}{cccc|cc|cc}
\toprule
\multirowcell{2}{Dataset} & \multirowcell{2}{Explainer} & \multicolumn{2}{c|}{SIC AUC}& \multicolumn{2}{c|}{Run time}& \multicolumn{2}{c}{Memory usage}\cr\cmidrule{3-4}\cmidrule{5-6}\cmidrule{7-8}
& & ADD$\uparrow$ & DEL$\downarrow$ &Training$\downarrow$ & Inference$\downarrow$ &Training$\downarrow$ & Inference$\downarrow$\\
\midrule
\multirowcell{4}{MNIST} & Vanilla EBM        & 0.883   & 0.217       & $\mathbf{1568.18}$s & $\mathbf{1.02}$s & $\mathbf{1354.19}$MB & $\mathbf{1200.41}$MB\\
                        & Vanilla EBM+Dynamic masking   & 0.900   & 0.173       & 1839.42s  & 1.12s  & 1439.98MB & 1207.33MB  \\
                        & Vanilla EBM+GRU coupling       & 0.954   & 0.103       & 2840.98s & 1.62s   & 1777.21MB & 1730.01MB     \\
                        & EmSHAP            & $\mathbf{0.965}$ & $\mathbf{0.037}$ & 2977.10s & 1.63s & 1823.91MB & 1879.68MB\\
\midrule
\multirowcell{4}{ADTI} & Vanilla EBM       & 0.863  & 0.544  & $\mathbf{173.22}$s   & $\mathbf{0.81}$s & $\mathbf{889.21}$MB   & $\mathbf{850.41}$MB  \\
                        & Vanilla EBM+Dynamic masking   & 0.872  & 0.469  & 184.36s   & 0.82s & 950.21MB & 892.53MB  \\
                        & Vanilla EBM+GRU coupling       & 0.927  & 0.420  & 253.74s   & 0.90s  & 963.37MB & 930.10MB   \\
                        & EmSHAP            & $\mathbf{0.933}$   & $\mathbf{0.411}$ & 270.90s & 0.87s & 966.58MB & 946.81MB\\
\midrule
\multirowcell{4}{ETT} & Vanilla EBM       & 0.864  & 0.440  & $\mathbf{14.19}$s   & $\mathbf{1E-3}$s & $\mathbf{515.72}$MB   & $\mathbf{510.33}$MB  \\
                        & Vanilla EBM+Dynamic masking   & 0.873  & 0.339  & 14.77s   & 3E-3s & 520.69MB & 512.53MB  \\
                        & Vanilla EBM+GRU couping       & 0.903  & 0.292  & 15.99s   & 4E-3s  & 553.37MB & 540.10MB   \\
                        & EmSHAP            & $\mathbf{0.915}$   & $\mathbf{0.243}$ & 16.55s & 4E-3s & 554.11MB & 540.46MB\\
\midrule
\multirowcell{4}{MIMIC} & Vanilla EBM       & 0.530 & 0.477 & \textbf{64.50}s   & \textbf{31.21}s & \textbf{1051.62}MB   & \textbf{1075.91}MB  \\
                        & Vanilla EBM+Dynamic masking   & 0.560 & 0.464 & 69.25s & 32.70s & 1065.44MB & 1079.08MB  \\
                        & Vanilla EBM+GRU coupling       & 0.667  & 0.386  & 1167.89s   & 59.33s  & 1074.56MB & 1113.55MB   \\
                        & EmSHAP            &  \textbf{0.699}  & \textbf{0.384}  & 1168.72s   & 59.93s & 1165.51MB   & 1168.10MB\\
\bottomrule
\end{tabular}}
\end{table*}

\bibliographystyle{ieeetr}
\bibliography{reference}

%
\vspace*{-1.4cm}
\begin{IEEEbiographynophoto}
{Cheng Lu} received the B.S. degree from China Jiliang University, China, in 2020. He is currently working towards the Ph.D. degree at the College of Metrology Measurement and Instrument, China Jiliang University, Hangzhou. His research interests include explainable artificial intelligence and industrial process modeling.
\end{IEEEbiographynophoto}
\vspace*{-1.4cm}
\begin{IEEEbiographynophoto}
{Jiusun Zeng} received a B.S. degree in 2004 and a Ph.D. degree in 2009, both from Zhejiang University, China. He served as a postdoctoral research associate in the Institute of Cyber Systems and Control of Zhejiang University from 2009 to 2011. After that he became a faculty member of China Jiliang University. He joined Hangzhou Normal University in 2022 and is currently a full Professor at the School of Mathematics, Hangzhou Normal University. His research interests focus on industrial big data analysis, artificial intelligence and applications.
\end{IEEEbiographynophoto}
\vspace*{-1.4cm}
\begin{IEEEbiographynophoto}
{Yu Xia} received the Ph.D. degree in mathematics from Zhejiang University, Hangzhou, China, in 2018. She is currently with the School of Mathematics, Hangzhou Normal University, Hangzhou. Her research interests include signal processing, phase retrieval, and blind deconvolution. \end{IEEEbiographynophoto}
\vspace*{-1.4cm}
\begin{IEEEbiographynophoto}
{Jinhui Cai} received the Ph.D. degree in College of Control Science and Engineering from Zhejiang University, Hangzhou, China, in 2005. He is currently a full Professor at the College of Metrology Measurement and Instrument, China Jiliang University. His research interests focus on digital metrology.
\end{IEEEbiographynophoto}
\vspace*{-1.4cm}
\begin{IEEEbiographynophoto}
{Shihua Luo} received the B.S. degree and M.S. degree in Mathematics from Jiangxi Normal University, China, in 1998 and 2001, and the Ph.D. degree in Operational Research and Control Theory from Zhejiang University, China, in 2007. He is currently a Professor at Jiangxi University of Finance and Economics. His research interests are in the field of modeling and optimization of complex systems, especially on the extraction of nonlinear characteristics of chemical process and financial process.
\end{IEEEbiographynophoto}

\end{document}